\newtheorem{lemma}{Lemma}
\newtheorem{theorem}{Theorem}
\newtheorem{proposition}{Proposition}
\newtheorem{definition}{Definition}
\newcommand{\bbE}{\mathbb{E}}
\newcommand{\var}{\mathrm{Var}}
\newcommand{\softmax}{\mathrm{Softmax}}
\newcommand{\dint}{\mathrm{d}}
\newcommand{\dboijn}{\Delta \beta_{1ij}^{n}}
\newcommand{\deijn}{\Delta \eta_{ij}^{n}}
\newcommand{\dboin}{\Delta \beta_{1i}^{n}}
\newcommand{\dein}{\Delta \eta_{i}^{n}}
\newcommand{\boin}{\beta_{1i}^n}
\newcommand{\bzin}{\beta_{0i}^n}
\newcommand{\ein}{\eta_i^n}
\newcommand{\boj}{\beta_{1j}^*}
\newcommand{\bzj}{\beta_{0j}^*}
\newcommand{\ej}{\eta_j^*}
\newcommand{\boi}{\beta_{1i}^*}
\newcommand{\bzi}{\beta_{0i}^*}
\newcommand{\ei}{\eta_i^*}
\newcommand{\zerod}{\mathbf{0}_d}
\newcommand{\normf}[1]{\|#1\|_{L^2(\mu)}}
\DeclareMathOperator*{\argmin}{arg\,min}
\newcommand{\relu}{\mathrm{ReLU}}
\title{Statistical Advantages of Perturbing\\ Cosine Router in Mixture of Experts}
\author{
\begin{tabular}{cccc}
     Huy Nguyen$^{\dagger}$& Pedram Akbarian\thanks{Equal Contribution}$^{*,\dagger}$ & Trang Pham$^{*,\diamond}$ 
     & Trang Nguyen$^{*,\diamond}$ \vspace{0.2em} \\ & Shuijan Zhang$^{\dagger}$ &  Nhat Ho$^{\dagger}$ 
\end{tabular}
\vspace{0.3em}\\
$^{\dagger}$ The University of Texas at Austin \quad
$^{\diamond}$ Movian AI, Vietnam 
}
\begin{document}

\maketitle

\begin{abstract}
The cosine router in Mixture of Experts (MoE) has recently emerged as an attractive alternative to the conventional linear router. Indeed, the cosine router demonstrates favorable performance in image and language tasks and exhibits better ability to mitigate the representation collapse issue, which often leads to parameter redundancy and limited representation potentials. Despite its empirical success, a comprehensive analysis of the cosine router in MoE has been lacking. Considering the least square estimation of the cosine routing MoE, we demonstrate that due to the intrinsic interaction of the model parameters in the cosine router via some partial differential equations, regardless of the structures of the experts, the estimation rates of experts and model parameters can be as slow as $\mathcal{O}(1/\log^{\tau}(n))$ where $\tau > 0$ is some constant and $n$ is the sample size. Surprisingly, these pessimistic non-polynomial convergence rates can be circumvented by the widely used technique in practice to stabilize the cosine router --- simply adding noises to the $\ell^2$-norms in the cosine router, which we refer to as \textit{perturbed cosine router}. Under the strongly identifiable settings of the expert functions, we prove that the estimation rates for both the experts and model parameters under the perturbed cosine routing MoE are significantly improved to polynomial rates. Finally, we conduct extensive simulation studies in both synthetic and real data settings to empirically validate our theoretical results.
\end{abstract}

\section{Introduction}
\label{sec:introduction}
Proposed by \cite{Jacob_Jordan-1991} and \cite{Jordan-1994}, Mixture of Experts (MoE) has been known as an effective statistical method to incorporate the capabilities of various specialized models called experts. Different from conventional mixture models \citep{Lindsay-1995} in which the mixture weights are scalars, the MoE rather utilizes a routing mechanism to determine a set of weights depending on an input token. In particular, the router first computes the similarity scores between each token and experts and then assigns more weights to the more relevant experts determined based on those scores. To further improve the scalability of the MoE, \cite{shazeer2017topk} has recently introduced a sparse variant of this model, which routes each input to only a subset of experts. This sparse MoE model allows us to increase the number of learnable parameters with nearly constant computational overhead. As a consequence, the sparse MoE has been leveraged in several applications, including large language models \citep{jiang2024mixtral,puigcerver2024sparse,zhou2023brainformers,dai2024deepseekmoe,pham2024competesmoe}, computer vision \citep{Riquelme2021scalingvision,liang_m3vit_2022}, speech recognition \citep{You_Speech_MoE,gulati_conformer_2020,peng_bayesian_1996}, continual learning \citep{le2024mixture,li2025theory}, multi-task learning \citep{hazimeh_dselect_k_2021}, and other applications \citep{han2024fusemoe,chow_mixture_expert_2023,le2025revisiting}.

In the above applications, practitioners often use a linear router which calculates the similarity score by taking the inner product of a token hidden representation and an expert embedding. Nevertheless,~\cite{chi2022on} discovered that utilizing the linear router might induce the representation collapse issue. This phenomenon occurs when a fraction of experts govern the decision-making process, leading to the redundancy of other experts. In response,~\cite{chi2022on} proposed an alternative known as a cosine router. In particular, this router begins with projecting the token hidden representation into a low-dimensional space, followed by applying $L^2$ normalization to both the token representations and expert embeddings. By doing so, the similarity scores become more stable, circumventing the dominance of certain experts. The efficacy of the cosine routing MoE has been experimentally demonstrated in language modeling \citep{chi2022on}, and domain generalization \citep{li2023sparse}. On the other hand, a comprehensive theoretical study of the cosine router has remained lacking.

In the literature, there have been some attempts to understand the MoE models with different types of gating functions whose outputs are the composition of some functions and the routing scores. First of all, considering the classification problem with cluster structures,~\cite{chen2022theory} demonstrated that the router operated by a neural network could learn the cluster-center features, which helped divide a complex problem into simpler classification sub-problems that individual expert could handle. Next,~\cite{ho2022gaussian} studied the expert estimation under an input-free gating Gaussian MoE model, showing that the rates for estimating experts depend on the algebraic structures among experts. Subsequently, the Gaussian MoE model with softmax gating was explored in \citep{nguyen2023demystifying,nguyen2024temperature} which pointed out that interactions among model parameters via some partial differential equations (PDE) did harm the expert estimation rates. Saying that the setting of Gaussian MoE was far from practice,~\cite{nguyen2024squares} rather took into account a regression framework with the regression function being a linear router MoE model. They verified the benefits of formulating experts as feed-forward networks with popular activation functions like $\mathrm{ReLU}$ and  $\mathrm{sigmoid}$ from the perspective of the expert estimation problem.

It is worth noting that the expert estimation problem allows us to capture how fast an expert specializes in a specific task, which is one of the most important problems in the MoE literature known as expert specialization \citep{dai2024deepseekmoe,krishnamurthy2023improvingexpertspecializationmixture}. Furthermore, from the convergence analysis of expert estimation, we can gain several insights for designing the router and expert networks (see Section~\ref{sec:practical_implication}). Therefore, we will investigate the effects of the cosine router on the convergence of expert estimation in this paper. For that sake, let us now present the problem setting formally. 

{\bf Problem setting.} We assume that $(X_{1}, Y_{1}), (X_{2}, Y_{2}), \ldots, (X_{n}, Y_{n})\in\mathbb{R}^{d_1}\times\mathbb{R}$ is an i.i.d sample of size $n$ generated according to the following model
\begin{align}
    Y_{i} = f_{G_{*}}(X_{i}) + \varepsilon_{i}, \quad i=1,\ldots,n, \label{eq:moe_regression_model}
\end{align}
where regression function $f_{G_{*}}(\cdot)$ takes the following form:
\begin{align}
    \label{eq:standard_MoE}
    f_{G_{*}}(x) := \sum_{i=1}^{k_*} \softmax\left(\frac{(\boi)^{\top}x}{\|\boi\|\cdot\|x\|}+\bzi\right)\cdot h(x,\eta^*_i).
\end{align}
Here, the function $h(x,\eta)$ is known as {\it the expert function,} which we assumed to be of parametric form. Meanwhile, $(\beta^*_{0i},\beta^*_{1i},\eta^*_{i})_{i=1}^{k_*}$ are true yet unknown parameters in the parameter space  $\Theta\subset\mathbb{R}\times\mathbb{R}^{d_1}\times\mathbb{R}^{d_2}$ and $G_{*} := \sum_{i = 1}^{k_{*}} \exp(\beta^*_{0i}) \delta_{(\beta_{1i}^{*},\eta^*_i)}$ denotes the associated \emph{mixing measure,} i.e. a weighted sum of Dirac measures $\delta$.  Additionally, we define for any vector $v=(v_1,v_2,\ldots,v_{k_*})$ in $\mathbb{R}^{k_*}$ that $\softmax(v_i):={\exp(v_i)}/{\sum_{j=1}^{k_*}\exp(v_j)}$.
In the cosine router in equation~\eqref{eq:standard_MoE}, we omit the step of reducing the dimension of the input token $x$, and assume that it has already been in a low-dimensional space for simplicity. Furthermore, we assume that $X_{1}, X_{2}, \ldots, X_{n}$ are i.i.d. samples from some probability distribution $\mu$. Lastly, $\varepsilon_{1}, \varepsilon_{2}, \ldots, \varepsilon_{n}$ are independent Gaussian noise variables such that $\bbE[{\varepsilon_{i}}|X_i] = 0$ and $\var(\varepsilon_{i}|X_i) = \sigma^2$ for all $1 \leq i \leq n$. Notably, the Gaussian assumption is just for the simplicity of the proof argument.

\textbf{Least squares estimation (LSE).} To estimate the true parameters $(\beta^*_{0i},\beta^*_{1i},\eta^*_{i})_{i=1}^{k_*}$ or, equivalently, to estimate the true mixing measure $G_*$, we leverage the popular least squares method \citep[see, e.g.,][]{vandeGeer-00}. Formally, the mixing measure $G_*$ is approximated by  
\begin{align}
    \label{eq:least_squared_estimator}
    \widehat{G}_n:=\argmin_{G}\sum_{i=1}^{n}\Big(Y_i-f_{G}(X_i)\Big)^2.
\end{align}
Under the \textit{exact-specified} setting, i.e., when the true number of expert $k_*$ is known, the minimum in the above equation is subject to the set of all mixing measures with $k_*$ atoms, denoted by $\mathcal{E}_{k_*}(\Theta):=\{G=\sum_{i=1}^{k_*}\exp(\beta_{0i})\delta_{(\beta_{1i},\eta_i)}:(\beta_{0i},\beta_{1i},\eta_i)\in\Theta \}$. 
On the other hand, under the \textit{over-specified} setting, i.e., when $k_*$ is unknown and the true model~\eqref{eq:standard_MoE} is over-specified by a mixture of $k$ experts where $k>k_*$, the minimum is subject to the set of all mixing measures with at most $k$ atoms, i.e., $\mathcal{G}_{k}(\Theta):=\{G=\sum_{i=1}^{k'}\exp(\beta_{0i})\delta_{(\beta_{1i},\eta_i)}: 1\leq k'\leq k, \ (\beta_{0i},\beta_{1i},\eta_i)\in\Theta\}$.

\textbf{Universal assumptions.} In the sequel, we implicitly impose four following mild assumptions on the model parameters, which were widely used in previous works \citep{nguyen2024squares}, unless stating otherwise:

\textit{(A.1) Convergence of LSE:} The parameter space $\Theta\subseteq\mathbb{R}\times\mathbb{R}^{d_1}\times\mathbb{R}^{d_2}$ is compact, while the input space $\mathcal{X}\subseteq\mathbb{R}^{d_1}$ is bounded. This helps ensure the convergence of least squares estimation.

\textit{(A.2) Distinct experts:} The true parameters $\eta^*_{1},\ldots,\eta^*_{k_*}$ are distinct so that the experts $h(\cdot,\eta^*_1),\ldots,h(\cdot,\eta^*_{k_*})$ are different from each other. Furthermore, the expert function $h(\cdot,\eta)$ is Lipschitz continuous w.r.t its parameters and bounded.

\textit{(A.3) Identifiability of the MoE:} In order that the cosine routing MoE is identifiable, i.e., $f_{G}(x)=f_{G_*}(x)$ for almost every $x$ implies that $G\equiv G_*$, we let $\beta^*_{0k_*}=0$.

\textit{(A.4) Input-dependent router:} To ensure that the router is input-dependent, we assume that at least one among the parameters $\beta^*_{11},\ldots,\beta^*_{1k_*}$ is non-zero.

\textbf{Technical challenges.} The normalization of parameters in the cosine router leads to a fundamental challenge in theory. In particular, to establish parameter and expert estimation rates based on the convergence rate of the regression function, we rely on decomposing the discrepancy $f_{\widehat{G}_n}(x)-f_{G_*}(x)$ into a combination of linearly independent terms by applying Taylor expansions to the product of the softmax's numerator and the expert function, i.e. $H(x,\beta_{1}, \eta):=\exp\Big(\frac{\beta_{1}^{\top}x}{\|\beta_{1}\|\cdot \|x\|}\Big) h(x, \eta)$. However, the normalization of $\beta_{1}$ in the cosine router leads to an intrinsic interaction among the elements of the parameter $\beta_1$ via the following PDE:
\begin{align}
    \label{eq:PDE_1}
    \beta_{1}^{\top} \frac{\partial H}{\partial\beta_{1}}(x, \beta_{1}, \eta)&=0.
\end{align}
Although parameter interactions expressed in the language of PDEs have been observed in \cite{nguyen2024squares}, the structure of the above interaction is much more sophisticated (even hold for the first-order derivatives while those in \cite{nguyen2024squares} occurs only when taking the second-order derivatives). Thus, this PDE induces several linearly dependent terms in the Taylor expansion, and we have to aggregate their coefficients in order to form the desired combination of linearly independent terms. Then, the resulting coefficients become complex, thereby negatively affecting the convergence of expert estimation. To the best of our knowledge, such a phenomenon with the cosine router has never been observed in previous works.

\textbf{Main contributions.} In this work, we develop a comprehensive theoretical analysis 
of regression function estimation as well as parameter and expert estimations under the cosine router MoE model~\eqref{eq:moe_regression_model}. Our contributions are two-fold and can be summarized as follows (see also Table~\ref{table:expert_rates}):

\textbf{1. Cosine router:} Equipped with the cosine router, we demonstrate that under both the exact-specified and the over-specified settings, the rates for estimating ground-truth parameters $\beta^*_{0i},\beta^*_{1i}$ and $\eta^*_i$ are slower than any polynomial rates and, therefore, could be as slow as $\mathcal{O}_P(1/\log^{\tau}(n))$, where $\tau>0$ is some constant. These slow rates are attributed to the internal interaction among router parameters expressed by the PDE in equation~\eqref{eq:PDE_1}. As a result, the estimation rates for experts $h(\cdot,\eta^*_i)$ are also negatively affected, and could be of order $\mathcal{O}_P(1/\log^{\tau}(n))$.

\textbf{2. Perturbed cosine router:} In response, we propose a novel router called \emph{perturbed cosine router} in which we add noises to the $L^2$ norms of the token representations and the expert embeddings. This not only helps stabilize the router but also eliminates the intrinsic interaction in equation~\eqref{eq:PDE_1}. Additionally, we also establish identifiability conditions to characterize expert functions that have faster estimation rates than others under the exact-specified and over-specified settings, respectively. Those conditions indicate that the rates for estimating experts, which are formulated as feed-forward networks (FFNs) with widely used activation functions such as $\mathrm{ReLU}$ and $\mathrm{GeLU}$, are significantly improved, ranging from $\mathcal{O}_P(\sqrt[4]{\log(n)/n})$ to $\mathcal{O}_P(\sqrt{\log(n)/n})$.

\begin{table*}[!ht]
\caption{Summary of worst possible estimation rates for linear experts, polynomial experts and FFN experts equipped with the ReLU activation function under the MoE with linear router \citep{nguyen2024squares}, cosine router (ours) and perturbed cosine routers (ours).
}
\centering
\begin{tabular}{ | c | c | c | c |c|} 
\hline
\textbf{Routers/ Experts} & {\bf Linear: $a^{\top}x+b$} & {\bf Polynomial:} $(a^{\top}x+b)^p$, $p\geq 2$ & {\bf ReLU FFN}\\
\hline 
{ Linear} & $1/\log^{\tau}(n)$ &$1/\log^{\tau}(n)$ & $n^{-1/4}$ \\
\hline
{ Cosine} & $1/\log^{\tau}(n)$  &$1/\log^{\tau}(n)$  & $1/\log^{\tau}(n)$ \\
\hline
{ Perturbed cosine} & $1/\log^{\tau}(n)$  &$n^{-1/4}$  & $n^{-1/4}$ \\
\hline
\end{tabular}
\label{table:expert_rates}
\end{table*}

\textbf{Outline.} In Section~\ref{sec:cosine_router}, we establish the convergence rates of parameter and expert estimations under the setting of the cosine router MoE. Then, we derive these rates when the cosine router is replaced by the perturbed cosine router in Section~\ref{sec:noisy_cosine_router}. Based on these theoretical results, we provide a few practical implications in Section~\ref{sec:practical_implication}. Next, we empirically verify the (theoretical) benefits of the perturbed cosine router over the cosine router under both the synthetic and real data settings in Section~\ref{sec:experiment} before concluding the paper in Section~\ref{sec:conclusion}. Finally, proofs and additional details of the experiments are deferred to the Appendices.

\textbf{Notations.} We let $[n]$ stand for the set $\{1,2,\ldots,n\}$ for any  $n\in\mathbb{N}$. Next, for any set $S$, we denote $|S|$ as its cardinality. For any vector $v \in \mathbb{R}^{d}$ and $\alpha:=(\alpha_1,\alpha_2,\ldots,\alpha_d)\in\mathbb{N}^d$, we let $v^{\alpha}=v_{1}^{\alpha_{1}}v_{2}^{\alpha_{2}}\ldots v_{d}^{\alpha_{d}}$, $|v|:=v_1+v_2+\ldots+v_d$ and $\alpha!:=\alpha_{1}!\alpha_{2}!\ldots \alpha_{d}!$, while $\|v\|$ stands for its $L^2$-norm value. Lastly, for any two positive sequences $(a_n)_{n\geq 1}$ and $(b_n)_{n\geq 1}$, we write $a_n = \mathcal{O}(b_n)$ or $a_{n} \lesssim b_{n}$ if there exists $C > 0$ such that $a_n \leq C b_n$ for all $ n\in\mathbb{N}$. Meanwhile, the notation $a_{n} = \mathcal{O}_{P}(b_{n})$ indicates that $a_{n}/b_{n}$ is stochastically bounded.

\section{Cosine Router Mixture of Experts}
\vspace{-0.3 em}
\label{sec:cosine_router}
In this section, we characterize the parameter and expert estimation rates under the over-specified setting of the cosine router MoE. We first start with the convergence rate of the regression function estimation $f_{\widehat{G}_n}$ to the true regression function $f_{G_*}$ under the $L^2(\mu)$ norm in the following theorem:

\begin{theorem}
    \label{theorem:density_rate}
    Given the least-square estimator $\widehat{G}_{n}$ defined in equation~\eqref{eq:least_squared_estimator}, the regression estimator $f_{\widehat{G}_n}(.)$ converges to the true regression function $f_{G_*}(.)$ at the following rate:
    \begin{align*}
        \normf{f_{\widehat{G}_n}-f_{G_*}}=\mathcal{O}_{P}(\sqrt{\log(n)/n}).
    \end{align*}
\end{theorem}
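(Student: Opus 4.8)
The plan is to use the standard uniform-convergence/empirical-process machinery for least squares estimation over a parametric class of regression functions, as developed in \citet{vandeGeer-00} and adapted to MoE settings in \citet{nguyen2024squares}. Concretely, I would show that the family of regression functions $\mathcal{F}_k(\Theta) := \{ f_G : G \in \mathcal{G}_k(\Theta) \}$ (respectively $\mathcal{E}_{k_*}(\Theta)$ in the exact-specified case) has bracketing entropy of logarithmic-type growth, i.e. the $\epsilon$-bracketing entropy $H_B(\epsilon, \mathcal{F}_k(\Theta), \|\cdot\|_{L^2(\mu)})$ is bounded by $O(\log(1/\epsilon))$. Once this entropy bound is in hand, the general theorem on rates of convergence for least squares estimators yields that the estimator $f_{\widehat{G}_n}$ converges to the regression function it is estimating at the rate $\sqrt{\log(n)/n}$ (the $\sqrt{\log n}$ factor comes from the Gaussian noise tail combined with the entropy integral; the $1/\sqrt{n}$ is the parametric rate). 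Since $f_{G_*}$ itself lies in the class (both in the exact-specified case, where $G_* \in \mathcal{E}_{k_*}(\Theta)$, and in the over-specified case, where $G_* \in \mathcal{G}_k(\Theta)$ since $k > k_*$), there is no approximation-error term, and the result follows.

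The key steps, in order, are: (i) verify that $\mathcal{F}_k(\Theta)$ is a uniformly bounded class — this uses assumption (A.1) that $\Theta$ is compact and $\mathcal{X}$ is bounded, together with (A.2) that $h(\cdot,\eta)$ is bounded, and the fact that softmax outputs lie in $[0,1]$; (ii) establish Lipschitz continuity of $x \mapsto f_G(x)$ and of $G \mapsto f_G$ in an appropriate metric (e.g. a Wasserstein-type distance on mixing measures), using (A.2) (Lipschitzness of $h$ in $\eta$) and the smoothness of the softmax and of the cosine-similarity map $\beta_1 \mapsto \beta_1^\top x / (\|\beta_1\|\|x\|)$ away from $\beta_1 = 0$ (compactness of $\Theta$ keeps us at bounded parameters, and one needs $\|\beta_1\|$ bounded away from $0$, or one handles the $\beta_1 = 0$ case separately since it just gives a constant routing score); (iii) conclude the bracketing entropy bound $H_B(\epsilon, \mathcal{F}_k(\Theta), \|\cdot\|_{L^2(\mu)}) \lesssim \log(1/\epsilon)$ from the finite-dimensionality of the parameter space via a covering-number argument; (iv) invoke the standard least-squares rate theorem (e.g. Theorem 7.4 / the peeling argument in \citet{vandeGeer-00}), checking that the entropy integral $\int_0^\delta \sqrt{H_B(u,\cdot,\cdot)}\,\dint u$ behaves like $\delta\sqrt{\log(1/\delta)}$, so that the fixed-point equation defining the rate $\delta_n$ gives $\delta_n \asymp \sqrt{\log(n)/n}$.

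The main obstacle is step (ii), specifically handling the cosine-similarity term in the router near $\|\beta_1\| \to 0$: the map $\beta_1 \mapsto \beta_1/\|\beta_1\|$ is not Lipschitz (indeed not even continuous) at the origin, so the usual ``parametric class $\Rightarrow$ logarithmic entropy'' argument needs care. I would circumvent this either by noting that the routing score $\beta_1^\top x/(\|\beta_1\|\|x\|)$ is nonetheless uniformly bounded in $[-1,1]$ for all $\beta_1 \neq 0$ and is Lipschitz on the region $\{\|\beta_1\| \geq \epsilon\}$ with constant $O(1/\epsilon)$, then splitting the parameter space into $\{\|\beta_1\| < \epsilon\}$ (where the routing score contributes a term of size $O(\epsilon)$ that can be absorbed into the bracket width) and $\{\|\beta_1\| \geq \epsilon\}$ (standard covering), which still yields overall entropy $O(\log(1/\epsilon))$; alternatively, one absorbs the direction $\beta_1/\|\beta_1\|$ as a parameter on the compact unit sphere and treats $\|\beta_1\|$ only through $\beta_{0i}$-like offsets. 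Everything else — uniform boundedness, the softmax Lipschitz bounds, the empirical-process peeling — is routine given the compactness and boundedness assumptions, and the final rate $\sqrt{\log(n)/n}$ is exactly what the parametric entropy bound produces under Gaussian noise.
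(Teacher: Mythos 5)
Your proposal matches the paper's own proof essentially step for step: it establishes the logarithmic bracketing entropy bound $H_B(\varepsilon,\mathcal{R}_k(\Theta,\varepsilon),\normf{\cdot})\lesssim\log(1/\varepsilon)$ via an $L^\infty$ covering of the parametric class (using boundedness of the experts, Lipschitzness in $\eta$, Lipschitzness of the softmax, and compactness of $\Theta$ and $\mathcal{X}$), and then invokes the van de Geer least-squares rate theorem with $\Psi(\delta)=\delta\sqrt{\log(1/\delta)}$ to obtain $\delta_n\asymp\sqrt{\log(n)/n}$. If anything, your treatment of the non-Lipschitz point $\beta_1=0$ of the cosine score is more careful than the paper's proof, which applies the Lipschitz bound on the router without commenting on that degeneracy.
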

The proof of Theorem~\ref{theorem:density_rate} is in Appendix~\ref{appendix:density_rate}. The result of Theorem~\ref{theorem:density_rate} indicates that the regression estimation rate is parametric. Therefore, as long as we can establish the lower bound $\normf{f_{\widehat{G}_n}-f_{G_*}} \gtrsim \mathcal{L}(\widehat{G}_n,G_*)$ where  $\mathcal{L}$ is some loss function among parameters, we arrive at the parameter estimation rate $\mathcal{L}(\widehat{G}_n,G_*) =\mathcal{O}_P(\sqrt{\log(n)/n})$. This approach is the key component of the convergence rates of parameter and expert estimations under the cosine router MoE. In the sequel, we will consider the over-specified setting of the cosine router, while the results for the exact-specified setting will be presented in Appendix~\ref{sec:perturbed_cosine_exact}. 

Recall that under the over-specified setting, the true number of experts $k_*$ is unknown. Then, based on the notion of Voronoi cells \citep{manole22refined}, we will construct a Voronoi loss function among parameters tailored to this setting.

\textbf{Voronoi loss.} Let $G$ be a mixing measure with $k'$ atoms $\omega_i:=(\beta_{1i},\eta_i)$. Then, we distribute these atoms to the Voronoi cells generated by the atoms $\omega^*_j:=(\beta^*_{1j},\eta^*_j)$ of $G_*$, which are defined as
\begin{align}
    \label{eq:Voronoi_cells}
    \mathcal{A}_j\equiv\mathcal{A}_j(G):=\{i\in[k']:\|\omega_i-\omega^*_j\|\leq\|\omega_i-\omega^*_{\ell}\|,\forall \ell\neq j\}.
\end{align}
Then, the Voronoi loss of interest is given by
\begin{align*}
   \mathcal{L}_{1,r}(G,G_*):=\sum_{j=1}^{k_*}\Big|\sum_{i\in\mathcal{A}_{j}}\exp(\beta_{0i})-\exp(\beta^*_{0{j}})\Big|+\sum_{j=1}^{k_*}\sum_{i\in\mathcal{A}_{j}}\exp(\beta_{0i})\Big[\|\Delta\beta_{1ij}\|^r+\|\Delta \eta_{ij}\|^r\Big],
\end{align*}
where $r\geq 1$ is some constant, $\Delta\beta_{1ij}:=\beta_{1i}-\beta^*_{1j}$ and $\Delta \eta_{ij}:=\eta_i-\eta^*_{j}$. 

Note that, due to the parameter interaction inside the cosine router captured by the PDE~\eqref{eq:PDE_1}, the lower bound $\normf{f_{\widehat{G}_n}-f_{G_*}} \gtrsim \mathcal{L}_{1,r}(\widehat{G}_n,G_*)$ does not hold true, and thus, we cannot achieve the desired bound $\mathcal{L}_{1,r}(\widehat{G}_n,G_*) =\mathcal{O}_P(\sqrt{\log(n)/n})$ mentioned at the beginning of Section~\ref{sec:cosine_router}. By contrast, we show in Appendix~\ref{appendix:cosine_over} an opposed result to the previous lower bound, saying that  
\begin{align*}
    \lim_{\varepsilon\to0}\inf_{G\in\mathcal{E}_{k_*}(\Theta):\mathcal{L}_{1,r}(G,G_*)\leq\varepsilon}\normf{f_{G}-f_{G_*}}/\mathcal{L}_{1,r}(G,G_*)=0,
\end{align*}
for any $r\geq 1$. This result implies the following minimax lower bound of parameter estimation:
\begin{theorem}
    \label{theorem:cosine_over}
    Under the over-specified setting, the following minimax lower bound of estimating $G_*$ 
    \begin{align*}
        \inf_{\overline{G}_n\in\mathcal{G}_{k}(\Theta)}\sup_{G\in\mathcal{G}_{k}(\Theta)\setminus\mathcal{G}_{k_*-1}(\Theta)}\bbE_{f_{G}}[\mathcal{L}_{1,r}(\overline{G}_n,G)]\gtrsim n^{-1/2},
    \end{align*}
    holds true for any $r\geq 1$, where $\bbE_{f_{G}}$ indicates the expectation taken w.r.t the product measure with $f^n_{G}$ and the infimum is over all estimators taking values in $\mathcal{G}_{k}(\Theta)$.
\end{theorem}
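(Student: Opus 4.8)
The plan is to reduce the minimax lower bound to a local two-point (or Le Cam-type) argument, using the degeneracy statement established just above the theorem, namely that
\[
\lim_{\varepsilon\to0}\ \inf_{G\in\mathcal{E}_{k_*}(\Theta):\mathcal{L}_{1,r}(G,G_*)\leq\varepsilon}\ \normf{f_{G}-f_{G_*}}\big/\mathcal{L}_{1,r}(G,G_*)=0.
\tag{$\star$}
\end{equation*}
First I would unwind ($\star$): it produces a sequence of mixing measures $G_n\in\mathcal{E}_{k_*}(\Theta)$ (hence in $\mathcal{G}_k(\Theta)\setminus\mathcal{G}_{k_*-1}(\Theta)$, since they have exactly $k_*$ atoms) with $\mathcal{L}_{1,r}(G_n,G_*)\to 0$ but $\normf{f_{G_n}-f_{G_*}}/\mathcal{L}_{1,r}(G_n,G_*)\to 0$, i.e. the regression functions are \emph{much} closer in $L^2(\mu)$ than the parameters are in the $\mathcal{L}_{1,r}$ loss. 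The idea is that $G_*$ and $G_n$ are then statistically nearly indistinguishable from $n$ samples even though $\mathcal{L}_{1,r}(G_n,G_*)$ is comparatively large, which forces any estimator to incur a large risk on one of the two.

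Next I would set up the standard Le Cam two-point bound. For each $n$, consider the pair $\{f_{G_*}, f_{G_n}\}$. Since the noise $\varepsilon_i$ is Gaussian with variance $\sigma^2$, the KL divergence between the $n$-fold product laws of the data under $f_{G_*}$ and under $f_{G_n}$ is $\tfrac{n}{2\sigma^2}\normf{f_{G_n}-f_{G_*}}^2$ (up to the boundedness of $h$ guaranteeing everything is finite), and hence the total variation / Hellinger affinity is bounded below by a constant whenever $n\,\normf{f_{G_n}-f_{G_*}}^2 \lesssim 1$. The subtlety is that I get to \emph{choose} the index: pick $n\mapsto m_n$ (a subsequence, or re-index the sequence from ($\star$)) so that $\normf{f_{G_{m_n}}-f_{G_*}} \asymp n^{-1/2}$ — this is possible because $\normf{f_{G_n}-f_{G_*}}\to 0$ continuously enough along the constructed sequence (one may interpolate the construction in ($\star$), which typically comes from a scaling parameter $\epsilon\to 0$, so that $\normf{f_{G_\epsilon}-f_{G_*}}$ ranges continuously through a neighborhood of $0$). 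Then by ($\star$), $\mathcal{L}_{1,r}(G_{m_n},G_*)\gg n^{-1/2}$. Le Cam's lemma then gives, for the two-point subfamily,
\[
\inf_{\overline{G}_n}\ \max_{G\in\{G_*,\,G_{m_n}\}}\ \bbE_{f_G}\big[\mathcal{L}_{1,r}(\overline{G}_n,G)\big]\ \gtrsim\ \mathcal{L}_{1,r}(G_{m_n},G_*)\cdot\big(1-\mathrm{TV}\big)\ \gtrsim\ \mathcal{L}_{1,r}(G_{m_n},G_*)\ \gg\ n^{-1/2},
\]
using that $\mathcal{L}_{1,r}$ satisfies a (weak) triangle-type inequality so that the two risks cannot both be small — I would either verify this pseudo-metric property of $\mathcal{L}_{1,r}$ directly or, more robustly, use the well-known fact that for any loss $\ell$ with $\ell(G_*,G_{m_n})>0$ one has $\inf_{\overline G}\max_{G\in\{G_*,G_{m_n}\}}\bbE_{f_G}[\ell(\overline G,G)]\geq \tfrac14 \ell(G_*,G_{m_n})(1-\mathrm{TV})$ provided $\ell$ is such that $\ell(\overline G, G_*)+\ell(\overline G,G_{m_n})\geq c\,\ell(G_*,G_{m_n})$. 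Since this holds for every $n$, and the supremum in the theorem is over the \emph{full} set $\mathcal{G}_k(\Theta)\setminus\mathcal{G}_{k_*-1}(\Theta)\supseteq\{G_*,G_{m_n}\}$, the claimed bound $\gtrsim n^{-1/2}$ follows. The argument is uniform in $r\geq 1$ because ($\star$) holds for every $r\geq 1$.

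The main obstacle I anticipate is \emph{calibrating the sequence} so that $\normf{f_{G_{m_n}}-f_{G_*}}\asymp n^{-1/2}$ exactly (not merely $o(1)$): ($\star$) only asserts a limit of a ratio, so I must extract from its proof (in Appendix~\ref{appendix:cosine_over}) a one-parameter family $\{G_\epsilon\}_{\epsilon>0}$ with $\normf{f_{G_\epsilon}-f_{G_*}}$ depending continuously on $\epsilon$ and tending to $0$, then invoke the intermediate value theorem to hit the target rate $n^{-1/2}$ for each $n$; the quantities $\mathcal{L}_{1,r}(G_\epsilon,G_*)$ then automatically dominate $n^{-1/2}$ by ($\star$). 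A secondary point is checking that all $G_\epsilon$ lie in $\mathcal{G}_k(\Theta)\setminus\mathcal{G}_{k_*-1}(\Theta)$ — automatic since they have $k_*$ distinct atoms for small $\epsilon$, using assumption (A.2) — and that $f_{G_\epsilon}$ is uniformly bounded so the Gaussian KL computation is legitimate, which follows from (A.1) and (A.2). Everything else (the Le Cam inequality, the Gaussian-channel KL identity, the Hellinger-vs-KL comparison) is standard and I would cite it rather than reprove it.
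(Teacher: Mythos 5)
Your proposal is correct and takes essentially the same route as the paper: the paper likewise derives the minimax bound from the zero-ratio degeneracy via a Le Cam two-point argument (formalized in Lemma~\ref{lemma:minimax_lower_bound}, using the Gaussian KL identity and the weak triangle inequality for $\mathcal{L}_{1,r}$), and devotes Appendix~\ref{appendix:cosine_over} to constructing the one-parameter degenerating sequence that proves the ratio limit. The only cosmetic difference is the calibration variable --- you fix $\normf{f_{G_\epsilon}-f_{G_*}}\asymp n^{-1/2}$ and deduce $\mathcal{L}_{1,r}\gg n^{-1/2}$, while the paper fixes $\mathcal{L}_{1,r}(G'_*,G_*)=2\varepsilon$, obtains the Le Cam bound $\varepsilon\exp(-C_1 n\varepsilon^2)$, and then sets $\varepsilon=n^{-1/2}$; both yield the stated rate and implicitly rely on the same continuity considerations along the constructed family.
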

See Appendix~\ref{appendix:cosine_over} for the proof of Theorem~\ref{theorem:cosine_over}. There are two main implications of the above result:

\textbf{(i) Parameter estimation rates.} The minimax lower bound together with the formulation of $\mathcal{L}_{1,r}$ indicate that
at least one among the rates for estimating parameters $\beta^*_{1j}$, $\beta^*_{0i}$, $\eta^*_{j}$ is slower than any polynomial rates $\mathcal{O}_{P}(n^{-1/2r})$ and, thus, could be of order $\mathcal{O}_{P}(1/\log^{\tau}(n))$, for some constant $\tau>0$.

\textbf{(ii) Router estimation rates:} When the estimation rate of either $\beta^*_1$ or $\beta^*_0$ is slower than any polynomial rates, since the softmax function is Lipschitz w.r.t the Euclidean norm \citep{gao2018softmax}, we deduce that the worst possible rate of estimating the cosine router or the mixture weights in equation~\eqref{eq:standard_MoE} could also be slower than any polynomial rates. In practice, the router and the expert networks are trained simultaneously (see Section 1.2 in \citep{shazeer2017topk}). Thus, the slow convergence of the router might decelerate the model convergence.  

\vspace{-0.1 em}
\textbf{(iii) Expert estimation rates.} Assume that $\widehat{G}_n:=\sum_{i=1}^{k_*}\exp(\widehat{\beta}_{0i})\delta_{(\widehat{\beta}^n_{1i},\widehat{\eta}^n_i)}$. Since the expert $h(\cdot,\eta)$ is Lipschitz continuous, it follows that
\begin{align}
    \label{eq:expert_rate}
    \sup_{x} |h(x,\widehat{\eta}^n_i)-h(x,\eta^*_j)| \lesssim\|\widehat{\eta}^n_i-\eta^*_j\|.     
\end{align}
Consequently, the worst possible rate for estimating the expert $h(x,\eta^*_j)$ is identical to the worst possible rate for estimating the parameter $\eta^*_j$. For instance, if the expert function takes the polynomial form of $h(x,\eta):=(\eta x)^2$ (considered in \citep{mendes2012polynomial,nguyen2021polynomial,nguyen2024squares}), where we assume $x\in\mathbb{R}$ for simplicity, then we have $|h(x,\hat{\eta})-h(x,\eta^*)|=|\hat{\eta}-\eta^*|\cdot|\hat{\eta}+\eta^*|\cdot|x|$. As a result, the expert estimation rate is exactly the parameter estimation rate. Thus, when the estimation rate of $\eta^*_j$ is slower than any polynomial rates, the worst possible estimation rates for the experts $h(\cdot,\eta^*_j)$ could also be slower than any polynomial rates and be as slow as $\mathcal{O}_{P}(1/\log^{\tau}(n))$.
This indicates that the cosine router is even less sample efficient than the linear router (see also Table~\ref{table:expert_rates}). Note that by employing techniques of partitioning the input space in \citep{nguyen2024statistical}, we can show that these results still hold when using the Top-$K$ sparse softmax gating function. Therefore, in order to improve the sample efficiency while preserving the robustness to the representation collapse, we need to slightly modify the structure of the cosine router in the next section.

\vspace{-0.3 em}
\section{Perturbed Cosine Router Mixture of Experts}
\vspace{-0.3 em}
\label{sec:noisy_cosine_router}
In this section, we demonstrate that the pessimistic non-polynomial convergence rates of parameter and expert estimation under the cosine router can be easily circumvented by the widely used technique in practice to stabilize the cosine router: adding noises to the $L^2$ norm in the cosine router. Although this perturbation technique and the cosine router have been well studied in the literature, we would like to emphasize that to the best of our knowledge, our work is the first to be aware of combining these two methods together which we refer to as the \emph{perturbed cosine router MoE}. We now present the formulation of a MoE with the perturbed cosine router under the regression setting.

\textbf{Problem setup for the perturbed cosine router MoE model}.
We assume that an i.i.d. sample of size $n$: $(X_{1}, Y_{1}), (X_{2}, Y_{2}), \ldots, (X_{n}, Y_{n})\in\mathbb{R}^{d_1}\times\mathbb{R}$ is generated according to the model
\begin{align}
    Y_{i} = g_{G_{*}}(X_{i}) + \varepsilon_{i}, \quad i=1,\ldots,n, \label{eq:perturbed_moe_regression_model}
\end{align}
where regression function $g_{G_{*}}(\cdot)$ takes the following form:
\begin{align}
    \label{eq:noisy_cosine_MoE_exact}
    g_{G_{*}}(x) := \sum_{i=1}^{k_*} \softmax\left(\frac{(\boi)^{\top}x}{(\|\boi\|+\tau_1)\cdot(\|x\|+\tau_2)}+\bzi\right)\cdot h(x,\eta^*_i).
\end{align}
Here, $\tau_{1}, \tau_{2} > 0$ are two noise hyper-parameters. The main difference between the two regression functions $f_{G_{*}}$ and $g_{G_{*}}$ is the noise hyper-parameters $\tau_{1}, \tau_{2}$ that we add to the norms of the expert embeddings $\beta_{1i}^{*}$ and the token input $x$, which leads to the perturbed cosine router. By doing so, the parameter interaction inside the router as in equation~\eqref{eq:PDE_1} does not occur. More specifically, let us denote $\widetilde{H}(x,\beta_{1}, \eta):=\exp\Big(\frac{\beta_{1}^{\top}x}{(\|\beta_{1}\|+\tau_1)\cdot (\|x\|+\tau_2)}\Big) h(x, \eta)$, then it can be verified that $\beta_1^{\top}\frac{\partial \widetilde{H}}{\partial\beta_1}(x,\beta_1,\eta)\neq0$.

\textbf{Least squares estimation.} Similar to the cosine router setting, we can estimate the unknown ground-truth parameters $(\beta^*_{0i},\beta^*_{1i},\eta^*_{i})_{i=1}^{k_*}$ using the least-square estimator, which is given by:  
\begin{align}
    \label{eq:noisy_least_squared_estimator}
    \widetilde{G}_n:=\argmin_{G}\sum_{i=1}^{n}\Big(Y_i-g_{G}(X_i)\Big)^2.
\end{align}
In the following theory, we provide a convergence rate of regression function estimation under the perturbed cosine router MoE model.
\begin{theorem}
    \label{theorem:noisy_cosine_regression}
    Given a least squares estimator $\widetilde{G}_n$ defined in equation~\eqref{eq:noisy_least_squared_estimator}, the regression function estimation $g_{\widetilde{G}_n}(\cdot)$ admits the following convergence rate: 
    \begin{align}
        \label{eq:noisy_model_bound}
        \normf{g_{\widetilde{G}_n}-g_{G_*}}=\mathcal{O}_{P}(\sqrt{\log(n)/n}).
    \end{align}
\end{theorem}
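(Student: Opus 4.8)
The asserted bound is the usual parametric rate for least squares regression, and I would obtain it by the classical empirical process machinery for $M$-estimation \citep{vandeGeer-00}, mirroring the proof of Theorem~\ref{theorem:density_rate}. Let $\mathcal{G}$ denote the class of mixing measures over which \eqref{eq:noisy_least_squared_estimator} is minimized ($\mathcal{G}_k(\Theta)$ in the over-specified setting, $\mathcal{E}_{k_*}(\Theta)$ in the exact-specified one), and set $\mathcal{F}:=\{g_G:G\in\mathcal{G}\}$; note that $g_{G_*}\in\mathcal{F}$ since $k_*\le k$, so the model is well specified and there is no approximation error to control. The argument rests on two facts: (i) $\mathcal{F}$ is uniformly bounded, and (ii) the localized class $\mathcal{F}_\delta:=\{g_G\in\mathcal{F}:\normf{g_G-g_{G_*}}\le\delta\}$ has bracketing entropy $\mathcal{H}_B(\epsilon,\mathcal{F}_\delta,\normf{\cdot})\lesssim\log(1/\epsilon)$. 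Given these, the basic inequality for $\widetilde{G}_n$ combined with a peeling bound on the Gaussian empirical process $n^{-1}\sum_i\varepsilon_i\,(g_{\widetilde{G}_n}(X_i)-g_{G_*}(X_i))$ forces $\normf{g_{\widetilde{G}_n}-g_{G_*}}$ below the fixed point $\delta_n$ of $\sqrt{n}\,\delta_n^2\gtrsim\int_0^{\delta_n}\sqrt{\mathcal{H}_B(t,\mathcal{F}_{\delta_n},\normf{\cdot})}\,\dint t\vee\delta_n$, and with logarithmic entropy this fixed point is of order $\sqrt{\log(n)/n}$.

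For (i), the softmax weights lie in $[0,1]$, each $h(\cdot,\eta)$ is bounded uniformly in $\eta$ by assumption (A.2), and the sum has at most $k$ terms, so $\sup_x|g_G(x)|\le M<\infty$ uniformly over $G\in\mathcal{G}$. For (ii) --- the only place the perturbation is used in this theorem --- the decisive point is that $\tau_1,\tau_2>0$ remove the singularity of the cosine score at $\beta_1=0$: since $(\|\beta_1\|+\tau_1)(\|x\|+\tau_2)\ge\tau_1\tau_2>0$, the perturbed score $\frac{\beta_1^{\top}x}{(\|\beta_1\|+\tau_1)(\|x\|+\tau_2)}$ is Lipschitz in $\beta_1$, uniformly over the compact $\Theta$ and the bounded $\mathcal{X}$ (differentiate in $\beta_1$ and bound the gradient using $\|x\|\le\mathrm{diam}(\mathcal{X})$ and $\|\beta_1\|+\tau_1\ge\tau_1$). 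Composing with the Lipschitzness of softmax \citep{gao2018softmax} and of $h(\cdot,\eta)$ in $\eta$ (assumption (A.2)) then yields a uniform Lipschitz estimate $\sup_x|g_{G_1}(x)-g_{G_2}(x)|\lesssim D(G_1,G_2)$ for a suitable Euclidean distance $D$ between parameter tuples. Because the parameters lie in a compact subset of $\mathbb{R}^{(1+d_1+d_2)k}$ of fixed dimension, its $\epsilon$-covering number is $\mathcal{O}(\epsilon^{-(1+d_1+d_2)k})$; converting a parameter cover into an $L^2(\mu)$ bracket, with the radius inflated by the Lipschitz constant, gives $\mathcal{H}_B(\epsilon,\mathcal{F}_\delta,\normf{\cdot})\le\mathcal{H}_B(\epsilon,\mathcal{F},\normf{\cdot})\lesssim\log(1/\epsilon)$.

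Plugging this entropy bound into the standard least squares rate theorem under sub-Gaussian noise (the $\sigma^2$-Gaussian errors being handled by the usual maximal inequality) would then return the fixed point $\delta_n\asymp\sqrt{\log(n)/n}$, which is exactly the claim. I do not anticipate a genuine obstacle here: the proof is essentially a transcription of that of Theorem~\ref{theorem:density_rate}, and it is in fact slightly cleaner, since the perturbed router is globally smooth and one need not carve out the neighborhood of $\beta_1=0$ that complicates the pure cosine-router analysis. The one step deserving a touch of care is the explicit uniform Lipschitz constant of the perturbed cosine score in $\beta_1$ (which depends on $\tau_1,\tau_2$ but stays finite since $\tau_1,\tau_2$ are fixed positive constants); once that is in hand, all remaining estimates are routine.
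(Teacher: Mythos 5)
Your proposal is correct and is essentially the paper's own argument: the paper omits this proof, noting only that it mirrors Theorem~\ref{theorem:density_rate}, whose proof is exactly the route you describe --- uniform boundedness plus a Lipschitz-based parameter cover giving bracketing entropy $\lesssim\log(1/\varepsilon)$, fed into the van de Geer localization lemma with $\Psi(\delta)=\delta[\log(1/\delta)]^{1/2}$ and $\delta_n=\sqrt{\log(n)/n}$. Your added remark that the perturbation makes the score globally Lipschitz in $\beta_1$ (denominator bounded below by $\tau_1\tau_2$) is a correct and slightly more careful observation than the paper makes explicit.
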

\vspace{-0.5em}
Proof of Theorem~\ref{theorem:noisy_cosine_regression} is in Appendix~\ref{appendix:noisy_cosine_regression}. The result of Theorem~\ref{theorem:noisy_cosine_regression} proves that the regression function estimation rate ${\mathcal{O}}_P(\sqrt{\log(n)/n})$ under the perturbed cosine router MoE is of the same order as that with the vanilla cosine router in Theorem~\ref{theorem:density_rate}. Following the similar proof strategy in the cosine router MoE in Section~\ref{sec:cosine_router} for capturing the convergence rates of parameter and expert estimations under the perturbed cosine router MoE model, it is sufficient to establish the lower bound between the difference of regression functions and the difference of parameters under both the exact-specified and over-specified settings.

In this section, we study the over-specified setting of the perturbed cosine router. The results for the exact-specified setting of the perturbed cosine router is in Appendix~\ref{sec:perturbed_cosine_exact}. 

We now derive a condition called \emph{strong identifiability} on the expert function $h(\cdot,\eta)$ to identify which experts exhibit faster estimation rates than others under the over-specified setting.
\begin{definition}[Strong identifiability]
    \label{def:over_general_conditions}
    An expert function $x \mapsto h(x,\eta)$ is called strongly identifiable if it is twice differentiable with respect to its parameter $\eta$, and the set of functions in $x$
    \begin{align*}
        \left\{\frac{\partial^{|\alpha_1|+|\alpha_2|}\widetilde{H}}{\partial\beta_1^{\alpha_1}\partial\eta^{\alpha_2}}(x,\beta_{1i},\eta_{i}):\alpha_1\in\mathbb{N}^{d_1},\alpha_2\in\mathbb{N}^{d_2}, 0\leq|\alpha_1|+|\alpha_2|\leq2\right\},
    \end{align*}
    is linearly independent for almost every $x$ for any $k\geq 1$ and distinct parameters $\eta_1,\ldots,\eta_k$, where we denote $\widetilde{H}(x,\beta_{1}, \eta):=\exp(\frac{\beta_{1}^{\top}x}{(\|\beta_{1}\|+\tau_1)\cdot (\|x\|+\tau_2)}) h(x, \eta)$.
\end{definition}
\vspace{-0.5em}
\textbf{Example.} For experts formulated as  neural networks, i.e. $h(x,(a,b))=\phi(a^{\top}x+b)$, if the activation $\phi$ is selected as $\phi(z)=\relu(z)$, $\phi(z)=\tanh(\cdot)$ or $\phi(z)=z^p$ for $p\geq 2$, then they are strongly identifiable. Conversely, a linear expert $h(x,(a,b))=a^{\top}x+b$ fails to meet the strong identifiability the condition.

To capture the convergence behavior of expert estimation rate under the over-specified setting in Theorem~\ref{theorem:noisy_cosine_over}, we will use the Voronoi loss defined as follows:
\begin{align}
    \label{eq:loss_perturbed_over}
    \mathcal{L}_2(G,G_*):=\sum_{j=1}^{k_*}\Big|\sum_{i\in\mathcal{A}_{j}}\exp(\beta_{0i})&-\exp(\beta^*_{0j})\Big|+\sum_{\substack{j\in[k_*]: |\mathcal{A}_{j}|=1}}\sum_{i\in\mathcal{A}_{j}}\exp(\beta_{0i})\Big[\|\Delta\beta_{1ij}\|+\|\Delta \eta_{ij}\|\Big]\nonumber\\
    &\quad+\sum_{\substack{j\in[k_*]: |\mathcal{A}_{j}|>1}}\sum_{i\in\mathcal{A}_{j}}\exp(\beta_{0i})\Big[\|\Delta\beta_{1ij}\|^2+\|\Delta \eta_{ij}\|^2\Big].
\end{align}
\begin{theorem}
    \label{theorem:noisy_cosine_over}
    Suppose that the expert function $h(x,\eta)$ satisfies the strong identifiability condition in Definition~\ref{def:over_general_conditions}, then the following $L^2$-lower bound holds for any mixing measure $G\in\mathcal{G}_{k}(\Theta)$:
    \begin{align*}
        \normf{g_{G}-g_{G_*}}\gtrsim\mathcal{L}_2(G,G_*).
    \end{align*}
    Furthermore, this bound and the result in Theorem~\ref{theorem:noisy_cosine_regression} imply that $\mathcal{L}_2(\widetilde{G}_n,G_*)=\mathcal{O}_{P}(\sqrt{\log(n)/n})$.
\end{theorem}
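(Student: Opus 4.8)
The plan is to prove the $L^2$ lower bound $\normf{g_G-g_{G_*}}\gtrsim\mathcal{L}_2(G,G_*)$ in two stages — a \emph{local} bound in a neighborhood of $G_*$ and a \emph{global} bound away from $G_*$ — and then to read off the parameter rate by specializing this bound to $G=\widetilde{G}_n$ and combining with Theorem~\ref{theorem:noisy_cosine_regression}. The structural fact that makes the argument work, and that distinguishes the perturbed router from the vanilla one, is that $\widetilde{H}(x,\beta_1,\eta)=\exp\big(\frac{\beta_1^\top x}{(\|\beta_1\|+\tau_1)(\|x\|+\tau_2)}\big)h(x,\eta)$ no longer satisfies the PDE~\eqref{eq:PDE_1}, i.e.\ $\beta_1^\top\frac{\partial\widetilde H}{\partial\beta_1}\neq0$, so that the Taylor-expansion terms indexed by the derivatives in Definition~\ref{def:over_general_conditions} are genuinely the ones appearing in the decomposition, with no linearly dependent families whose coefficients must be aggregated.

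\textbf{Local bound.} I would first establish $\lim_{\varepsilon\to0}\inf_{G\in\mathcal{G}_k(\Theta):\mathcal{L}_2(G,G_*)\le\varepsilon}\normf{g_G-g_{G_*}}/\mathcal{L}_2(G,G_*)>0$ by contradiction. Assume a sequence $G_n=\sum_i\exp(\beta_{0i}^n)\delta_{(\beta_{1i}^n,\eta_i^n)}$ with $\mathcal{L}_2(G_n,G_*)\to0$ and $\normf{g_{G_n}-g_{G_*}}/\mathcal{L}_2(G_n,G_*)\to0$. Distributing the atoms of $G_n$ to the Voronoi cells $\mathcal{A}_j=\mathcal{A}_j(G_n)$, I would Taylor-expand $g_{G_n}(x)-g_{G_*}(x)$ around each $(\beta_{1j}^*,\eta_j^*)$ — to first order on cells with $|\mathcal{A}_j|=1$ and to second order on cells with $|\mathcal{A}_j|>1$ — which matches the two exponents of $\|\Delta\beta_{1ij}\|$ and $\|\Delta\eta_{ij}\|$ appearing in $\mathcal{L}_2$. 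Grouping terms by the basis functions $E_{j,\alpha}(x):=\frac{\partial^{|\alpha_1|+|\alpha_2|}\widetilde H}{\partial\beta_1^{\alpha_1}\partial\eta^{\alpha_2}}(x,\beta_{1j}^*,\eta_j^*)$ (together with the companion terms coming from expanding the softmax denominator) gives $g_{G_n}-g_{G_*}=\sum_{j,\alpha}S_{n,j,\alpha}E_{j,\alpha}+R_n$, with $R_n$ a higher-order remainder and $S_{n,j,\alpha}$ explicit polynomials in $\exp(\beta_{0i}^n)-\exp(\beta_{0j}^*)$, $\Delta\beta_{1ij}$ and $\Delta\eta_{ij}$ over $i\in\mathcal{A}_j$. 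Dividing by $\mathcal{L}_2(G_n,G_*)$ keeps all coefficients bounded, so along a subsequence $S_{n,j,\alpha}/\mathcal{L}_2(G_n,G_*)\to\tau_{j,\alpha}$, and a case analysis on which block of $\mathcal{L}_2$ is of the same order as $\mathcal{L}_2$ itself shows the $\tau_{j,\alpha}$ are not all zero. Since the normalized difference tends to $0$ in $L^2(\mu)$, we obtain $\sum_{j,\alpha}\tau_{j,\alpha}E_{j,\alpha}(x)=0$ for $\mu$-a.e.\ $x$; strong identifiability then forces every $\tau_{j,\alpha}=0$, a contradiction.

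\textbf{Global bound.} To upgrade the local estimate to the claimed uniform bound over $\mathcal{G}_k(\Theta)$, suppose it fails: for some $\varepsilon'>0$ there is a sequence $G_n\in\mathcal{G}_k(\Theta)$ with $\mathcal{L}_2(G_n,G_*)\ge\varepsilon'$ yet $\normf{g_{G_n}-g_{G_*}}\to0$. By the compactness of $\Theta$ in (A.1), a subsequence of $G_n$ converges to some $G'\in\mathcal{G}_k(\Theta)$; since the experts and the perturbed router are bounded and continuous, $\normf{g_{G'}-g_{G_*}}=0$, hence $g_{G'}=g_{G_*}$ $\mu$-a.e. The identifiability of the perturbed cosine routing MoE under (A.2)–(A.4) (the analogue of the identifiability argument for the vanilla cosine router) gives $G'\equiv G_*$, so $\mathcal{L}_2(G',G_*)=0$, contradicting $\mathcal{L}_2(G_n,G_*)\ge\varepsilon'$ and the continuity of $G\mapsto\mathcal{L}_2(G,G_*)$ at $G_*$. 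Combining the two stages yields $\normf{g_G-g_{G_*}}\gtrsim\mathcal{L}_2(G,G_*)$ for all $G\in\mathcal{G}_k(\Theta)$; specializing to $G=\widetilde{G}_n$ and invoking Theorem~\ref{theorem:noisy_cosine_regression} gives $\mathcal{L}_2(\widetilde{G}_n,G_*)\lesssim\normf{g_{\widetilde{G}_n}-g_{G_*}}=\mathcal{O}_P(\sqrt{\log(n)/n})$, the asserted consequence.

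\textbf{Main obstacle.} I expect the crux to be the bookkeeping in the local step: arranging the multi-index Taylor expansion so that the first-order (resp.\ second-order) terms on singleton (resp.\ non-singleton) Voronoi cells align exactly with the two different powers in $\mathcal{L}_2$, attributing each coefficient $S_{n,j,\alpha}$ to the correct basis function $E_{j,\alpha}$ without spurious cancellation, and — the genuinely delicate point — proving that after normalization by $\mathcal{L}_2(G_n,G_*)$ at least one limiting coefficient survives, which requires a careful splitting into the cases where the weight term, the $\beta_1$-term, or the $\eta$-term dominates $\mathcal{L}_2$. Verifying that the perturbation indeed destroys the interaction~\eqref{eq:PDE_1} (so that, unlike in Section~\ref{sec:cosine_router}, no coefficient-aggregation over linearly dependent terms is needed) is comparatively routine, but it is precisely the reason the argument closes with Taylor order at most $2$.
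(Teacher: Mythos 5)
Your overall architecture --- a contradiction sequence with Voronoi cells, first-order Taylor expansion on singleton cells and second-order on non-singleton cells, strong identifiability to force the limiting coefficients to vanish, a compactness-plus-identifiability global step, and finally the combination with Theorem~\ref{theorem:noisy_cosine_regression} --- is exactly the route the paper takes. The one step that fails as written is the extraction of the limits $\tau_{j,\alpha}$: you assert that ``dividing by $\mathcal{L}_2(G_n,G_*)$ keeps all coefficients bounded,'' and this is false in general. On a cell with $|\mathcal{A}_j|>1$ the loss $\mathcal{L}_2$ only charges the \emph{second} powers $\|\Delta\beta_{1ij}\|^2+\|\Delta\eta_{ij}\|^2$, whereas the Taylor coefficients with $|\alpha_1|+|\alpha_2|=1$ are sums of \emph{first} powers; for instance, with two atoms at $\beta_{1j}^*\pm\delta_n e$ carrying unequal weights summing to $\exp(\beta_{0j}^*)$ and with $\eta$-components equal to $\eta^*_j$, the first-order coefficient is of order $\delta_n$ while $\mathcal{L}_2(G_n,G_*)\asymp\delta_n^2$, so the ratio diverges. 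Consequently you cannot pass to convergent subsequences of $S_{n,j,\alpha}/\mathcal{L}_2(G_n,G_*)$, and the limiting identity $\sum_{j,\alpha}\tau_{j,\alpha}E_{j,\alpha}=0$ is not obtained; the ``careful splitting into cases'' you flag as the delicate point establishes only that the ratios do not all tend to zero, not that they are bounded.

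The paper closes this gap with a second normalization: let $m_n$ be the maximum of the absolute values of all the ratios (coefficients over $\mathcal{L}_2(G_n,G_*)$). The Step-2-style case analysis shows these ratios do not all vanish, hence $1/m_n\not\to\infty$; dividing instead by $m_n\,\mathcal{L}_2(G_n,G_*)$ puts every coefficient in $[-1,1]$, guarantees a subsequential limit of modulus one (hence nonzero), and still sends $\normf{g_{G_n}-g_{G_*}}/\bigl(m_n\,\mathcal{L}_2(G_n,G_*)\bigr)$ to zero, after which Fatou's lemma applied to the $L^1(\mu)$ norm yields the pointwise vanishing of the limiting combination. Note also that the paper avoids the quotient structure of the softmax by working with $Q_n(x)=\bigl[\sum_{j}\exp\bigl(\tfrac{(\beta^*_{1j})^{\top}x}{(\|\beta^*_{1j}\|+\tau_1)(\|x\|+\tau_2)}+\beta^*_{0j}\bigr)\bigr]\,\bigl(g_{G_n}(x)-g_{G_*}(x)\bigr)$, so the ``companion terms'' you allude to are products of derivatives of $F$ with $g_{G_n}$, and strong identifiability then kills all limits at once. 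With the $m_n$-renormalization and this explicit bookkeeping inserted, your local step matches the paper's proof; your global part and the final deduction from Theorem~\ref{theorem:noisy_cosine_regression} are as in the paper.
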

The proof of Theorem~\ref{theorem:noisy_cosine_over} is in Appendix~\ref{appendix:noisy_cosine_over}. A few comments regarding this theorem are in order:

\textbf{(i) Parameter estimation rates.} Under the over-specified setting, parameters $\beta^*_{1j},\eta^*_{j}$ which are fitted by one atom, i.e. $|\mathcal{A}_{j}(\widetilde{G}_n)|=1$, share the same estimation rate of order $\mathcal{O}_{P}(\sqrt{\log(n)/n})$. 
Meanwhile, those for parameters fitted by more than one atom, i.e. $|\mathcal{A}_{j}(\widetilde{G}_n)|>1$, are slightly slower, standing at order $\mathcal{O}_{P}(\sqrt[4]{\log(n)/n})$.

\textbf{(ii) Expert estimation rates.} Given the above parameter estimation rates and the inequality~\eqref{eq:expert_rate}, we observe that the rates for estimating strongly identifiable experts $h(\cdot,\eta^*_j)$ range from $\mathcal{O}_{P}(\sqrt[4]{\log(n)/n})$ to $\mathcal{O}_{P}(\sqrt{\log(n)/n})$. Notably, these rates apply even for polynomial experts of degree at least two, i.e. $h(x,(a,b))=(a^{\top}x+b)^p$ with $p\geq 2$, as they satisfy the strong identifiability condition. By contrast, the estimation rates for polynomial experts when using the vanilla cosine router (see Theorem~\ref{theorem:cosine_over}) and the linear router (see Theorem 4.6, \citep{nguyen2024squares}) are significantly slower and could be of order $\mathcal{O}_P(1/\log^{\tau}(n))$, where $\tau>0$ is some constant (see also Table~\ref{table:expert_rates}). This observation highlights that our proposed perturbed cosine router is more sample efficient than both the linear router and the cosine router.

\vspace{-0.9 em}
\section{Practical Implications}
\vspace{-0.9 em}
\label{sec:practical_implication}
We now discuss two important practical implications from the theoretical results of the paper.

\textbf{1. Router and expert design:} From the benefits of the perturbed cosine router for the expert estimation of MoE models, our theories suggest that when using the cosine router to avoid the representation collapse, practitioners should add noises to $L^2$ norms of the token hidden representations and the expert embeddings to achieve a favorable performance. Additionally, the strong identifiability condition also verifies the advantages of using non-linear expert networks over linear ones.

\textbf{2. Misspecified settings.} Thus far in the paper, we have only considered well-specified settings, namely, the data are assumed to be sampled from the (perturbed) cosine router MoE. Although it may look restrictive, the results under this setting lay an important foundation for a more realistic misspecified setting where the data are not necessarily generated from those models. Under that misspecified setting, we assume that the data are generated from a regression framework as in equation~\eqref{eq:moe_regression_model} but with an arbitrary regression function $q(\cdot)$, which is not a (perturbed) cosine router MoE. Then, we can demonstrate that the LSE $\widehat{G}_n$ converges to a mixing measure $\overline{G} \in \argmin_{G \in \mathcal{G}_{k}(\Theta)} \normf{q-f_{G}}$, where $f_{G}(\cdot)$ is a regression function taking the form of the (perturbed) cosine router MoE. Furthermore, the optimal mixing measure will be in the boundary of the parameter space $\mathcal{G}_k(\Theta)$, namely, $\overline{G}$ has $k$ atoms. Thus, as $n$ becomes sufficiently large, $\widehat{G}_{n}$ also has $k$ atoms. 
The insights from our theories for the well-specified setting indicate that the Voronoi losses can be used to obtain the estimation rates of individual parameters of the LSE $\widehat{G}_n$ to those of $\overline{G}$ and therefore, achieve the following expert estimation rates under the misspecified settings, which will be empirically validated via numerical experiments in Appendix~\ref{appendix:additional_experiments}:

\emph{(2.1) Cosine router MoE:} the worst expert estimation rate could be as slow as $\mathcal{O}_P(1/\log^{\tau}(n))$ for some $\tau > 0$. It indicates that we still need an exponential number of data (roughly $\exp(1/\epsilon^{\tau})$ where $\epsilon$ is the desired approximation error) to estimate the experts as well as select important experts. 

\emph{(2.2) Perturbed cosine router MoE:} the slowest expert estimation rate is of order $\mathcal{O}_P(n^{-1/4})$. Thus, we only need a polynomial number of data (roughly $\epsilon^{-4}$) to estimate the experts.
This explains why the perturbed cosine router is a solution to the parameter estimation problem, or more generally, the expert estimation problem of the MoE models.

However, the convergence analysis under the misspecified setting suffers from the challenges of understanding the universal approximation power of the (perturbed) cosine router, which have remained elusive in the literature. However, since this is beyond the scope of our paper, we leave it for future development.

\vspace{-0.9em}
\section{Experiments}
\vspace{-0.8em}
\label{sec:experiment}
In this section, we first conduct numerical experiments on synthetic data (cf. Section~\ref{sec:synthetic_data}), and then carry out experiments with real data on language modeling (cf. Section~\ref{sec:language_model}) and domain generalization (cf. Section~\ref{sec:domain_generalization}) tasks. Our main goal is to empirically demonstrate the efficacy of the perturbed cosine router over the vanilla cosine router and the linear router in MoE models.

\vspace{-0.3 em}
\subsection{Numerical Experiments}
\vspace{-0.3 em}
\label{sec:synthetic_data}
We first perform numerical experiments on synthetic data to empirically verify the theoretical convergence rates of the least squares estimation for both perturbed and vanilla cosine router MoE models. We generate synthetic data based on the model described in equation~\eqref{eq:moe_regression_model}. Specifically, we generate $\{(X_i, Y_i)\}_{i=1}^n \subset \mathbb{R}^d \times \mathbb{R}$ by first sampling $X_i \sim \mathrm{Uniform}([-1, 1]^d)$ for $i = 1, \ldots, n$. Then, we generate $Y_i$ according to the model: $Y_{i} = f_{G_{*}}(X_{i}) + \varepsilon_{i}$ for $i\in[n]$, 
where the regression function $f_{G_{*}}(\cdot)$ is defined as: $f_{G_{*}}(x) := \sum_{i=1}^{k_*} \softmax\left(\frac{(\boi)^{\top}x}{(\|\boi\| + \tau)\cdot(\|x\| + \tau)}+\bzi\right)\cdot \phi\left((a_i^*)^\top x + b_{i}^{*}\right).$
The input data dimension is set at $d = 32$. We employ $k_* = 8$ experts of the form $\phi\left((a_i^*)^\top x + b_{i}^{*}\right)$, where the activation function $\phi$ is set to be the $\relu$ function. 
The details of the values of the parameters as well as the training procedure are in Appendix~\ref{appendix:synthetic_data}.


\begin{figure}[t!]
    \centering
    \begin{subfigure}{.47\textwidth}
        \centering
        \includegraphics[scale = .32]{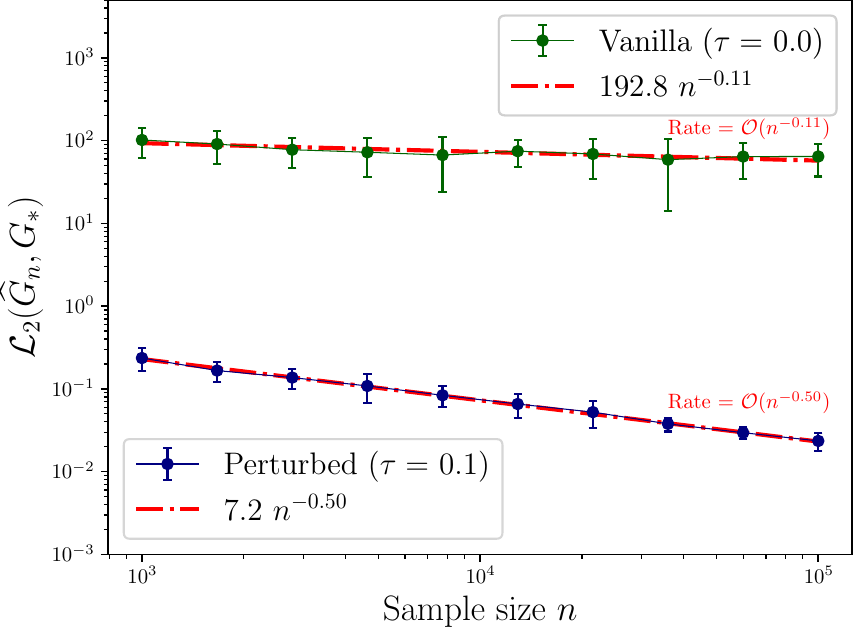}
        \caption{Exact-specified setting with $k = k_* = 8$ experts}
        \label{fig:exact_plot}
    \end{subfigure}
    \hspace{0.6cm}
    \begin{subfigure}{.47\textwidth}
        \centering
        \includegraphics[scale = .32]{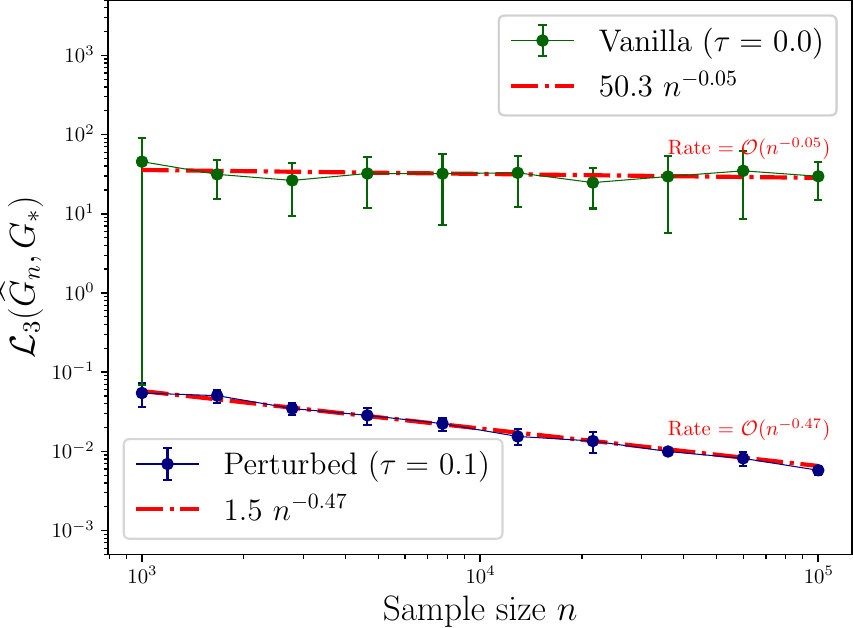}
        \caption{Over-specified setting with $k = k_* + 1 = 9$}	
        \label{fig:over_plot}
    \end{subfigure}
     \vspace{-1.2 em}
    \caption{Logarithmic plots displaying empirical convergence rates. Subfigures~\ref{fig:exact_plot} and \ref{fig:over_plot} depict the empirical averages of the Voronoi losses $\mathcal{L}_3(\widehat{G}_n,G_*)$ (cf. equation~\eqref{eq:loss_perturbed_exact}) and $\mathcal{L}_2(\widehat{G}_n,G_*)$ (cf. equation~\eqref{eq:loss_perturbed_over}) for the exact and over-specified settings, respectively.
    The blue lines depict the Voronoi loss associated with the perturbed router, whereas the green lines are indicative of the Voronoi loss associated with the standard cosine router. The red dash-dotted lines are used to illustrate the fitted lines for determining the empirical convergence rate.}
    \label{fig:emp_rates}
    \vspace{-1.2 em}
\end{figure}

\textbf{Results.} Two experimental settings are examined: (1) Exact-specified, and (2) Over-specified. In the exact-specified setting, the model is fitted with the same number of experts as the data generation model, specifically $k = k_* = 8$. In the over-specified setting, the model includes one additional expert, totaling $k = k_* + 1 = 9$ experts.
In each setting, experiments are conducted using both the standard and the perturbed cosine routers, with $\tau$ set to zero for the standard router and $0.1$ for the perturbed router. For each experiment, we calculate the Voronoi losses for every model and report the mean values for each sample size in Figure~\ref{fig:emp_rates}. Error bars representing two standard deviations are also shown. 
In Figure~\ref{fig:exact_plot}, the empirical convergence rates of both the standard and perturbed routers are analyzed under the exact-specified setting. The perturbed router shows a rapid convergence rate of ${\mathcal{O}}(n^{-0.5})$, while the standard vanilla router has a noticeably slower rate of ${\mathcal{O}}(n^{-0.11})$. Similarly, in Figure~\ref{fig:over_plot}, the convergence rates are assessed for the same routers under the over-specified setting. Here, the perturbed router again shows a faster convergence rate of ${\mathcal{O}}(n^{-0.47})$, compared to the cosine router's slower rate of ${\mathcal{O}}(n^{-0.05})$.

\textbf{Misspecified settings.} In Appendix~\ref{appendix:additional_experiments}, we will also conduct numerical experiments for comparing the sample efficiency of the cosine router and its perturbed variant under the setting where the data are generated from the regression framework with the same regression function.

\vspace{-0.3 em}
\subsection{Language Modeling}
\vspace{-0.3 em}
\label{sec:language_model}
In this section, we focus on the language modeling task ~\citep{language_model}, a fundamental challenge in natural language processing that involves predicting the next word or character in a sequence to evaluate a model's ability to generate and understand text. To assess how different routers influence the model's ability to capture linguistic structures and enhance performance across varying levels of textual granularity, we compare the performance of perturbed and vanilla cosine router Mixture of Experts (MoEs) on both character-level \citep{graves2013generating} and word-level \citep{word_level} tasks.

\textbf{Datasets.} We evaluate the model's pre-training capabilities on character-level language modeling using Enwik8 and Text8 datasets \citep{mahoney2011large}, and assess its word-level language modeling performance on Wikitext-103 \citep{merity2016pointer}.

\textbf{Metrics.} To quantify the performance of our perturbed cosine router relative to the vanilla cosine one, we utilize the Bit Per Character (BPC) metric \citep{graves2013generating} for character-level language modeling and Perplexity (PPL) \citep{jelinek1977perplexity} for word-level language modeling tasks.

\textbf{Architecture and training procedure.} In order to alleviate the representation collapse issue associated with estimating routing scores in the original space, we follow \cite{chi2022on} to first project input representations on lower-dimensional space and parameterize experts with corresponding lower-dimensional embeddings. Subsequently, we calculate the routing scores of inputs and embeddings in this reduced-dimensional space using our proposed perturbed cosine router. Our experiments adopt the Switch Transformer \citep{Fedus2021SwitchTS}, which is fundamentally a sparse variant of the T5 encoder-decoder \citep{t5}, with MoE layers replacing the MLPs.
Detailed information regarding the datasets, metrics, training setup, and hyperparameters for this task is provided in Appendix \ref{language_modeling_appendix}.

\textbf{Results.} The empirical advantage of our proposed cosine router over the vanilla version when applied to language modeling tasks is demonstrated in Table \ref{tab:language_modeling}. The results indicate that the perturbed cosine router enhances the performance of the original cosine router in all datasets across both small and medium configurations. It notably improves results for the Enwik8 and Text8 datasets at various scales and slightly outperforms the original cosine router for the Wikitext-103 dataset.
\begin{table}[t!]
\caption{Performance of vanilla and perturbed cosine routers on language modeling tasks.}
  \vspace{-0.8 em}
\medskip
\centering
\scalebox{0.9}{\begin{tabular}{l|cc|cc|cc} 
\toprule
\multicolumn{1}{l}{\multirow{2}{*}{\textbf{Router/Experts}}} & \multicolumn{2}{|c}{\textbf{Enwik8} (\textbf{BPC} $\downarrow$)} & \multicolumn{2}{|c}{\textbf{Text8} (\textbf{BPC} $\downarrow$)} & \multicolumn{2}{|c}{\textbf{Wikitext-103} (\textbf{PPL} $\downarrow$)} \\
\cmidrule(lr){2-7}
& \textbf{Small} & \textbf{Medium} & \textbf{Small} & \textbf{Medium} & \textbf{Small} & \textbf{Medium} \\
\midrule
Cosine & 1.213 & 1.161 & 1.310 & 1.271 & 90.070 & 38.018 \\

Perturbed cosine & \textbf{1.197} & \textbf{1.147} & \textbf{1.303} & \textbf{1.251} & \textbf{89.910} & \textbf{37.859} \\
\bottomrule
\end{tabular}}

\label{tab:language_modeling}
\vspace{-0.8 em}
\end{table}



\vspace{-0.5 em}
\subsection{Domain Generalization}
\label{sec:domain_generalization}
\vspace{-0.5 em}

We conduct experiments on the applications of MoE models in domain generalization. Our objective is to empirically demonstrate the efficacy of our proposed perturbed cosine router over the vanilla cosine router in this field. Domain generalization \citep{zhou_dg_survey} aims to generalize a model's performance to unseen test domains with distributions different from those encountered during training. Specifically, in domain generalization, a model is expected to leverage multiple training datasets gathered from various domains and exhibit robustness to domain shifts during testing. Such ability of out-of-distribution generalization largely hinges on the model's capability to incorporate invariances across multiple domains \citep{li2023sparse}. Given that distribution shifts in data correspond to distribution shifts in (visual) attributes \citep{wiles2022a}, capturing these diverse attributes and aligning them with invariant correlations is crucial. Mixture of Experts emerges as a powerful tool for efficiently capturing these visual attributes, and it has been proven effective in enhancing performance in domain generalization \citep{li2023sparse}. Therefore, we further justify the effectiveness of our perturbed cosine router in domain generalization.

\vspace{-0.3 em}
\textbf{Datasets.} We followed the experimental setting of \cite{li2023sparse} and evaluated our method using 5 benchmark datasets in DomainBed: PACS, VLCS, OfficeHome, TerraIncognita, and DomainNet. Each dataset is comprised of images for classification tasks from different domains.

\vspace{-0.3 em}
\textbf{Architecture.} Following \cite{gulrajani2021in}, we conduct experiments on ViT-S/16, which has an input patch size of $16 \times 16$, comprising 6 heads in multi-head attention layers, and a total of 12 transformer blocks. We adopt a \emph{last-two} two-layer configuration, where each MoE block comprises 6 experts. The router selects the top 2 out of 6 experts for each image patch.

\vspace{-0.3 em}
\textbf{Training procedure and result.} 
We follow the training-domain validation procedure outlined in \citep{li2023sparse, gulrajani2021in}, where each training domain is split into training and validation subsets. The final overall validation set consists of the validation subsets from all training domains. Subsequently, we select the model with the highest performance on the overall validation set. To ensure fair comparisons, the results are averaged over three runs.

\vspace{-0.3 em}
Table \ref{table:dg_avg} summarizes the experimental results. For each dataset, we report the average results across test domains. The results demonstrate that our perturbed cosine router consistently outperforms the linear and vanilla cosine router across all datasets, thereby convincingly justifying the effectiveness of adding noise to cosine routers. Detailed performances for each domain are reported in Tables \ref{table:dg_per4} and \ref{table:dg_per_dn}.

\begin{table}[t!]
  \caption{Average out-of-distribution test accuracies.}
    \vspace{-0.5 em}
  \label{table:dg_avg}
  \centering
  \scalebox{0.9}{
  \begin{tabular}{l|ccccc|c}
    \toprule
   \textbf{Router/Experts} & \textbf{PACS} & \textbf{VLCS} & \textbf{OfficeHome} & \textbf{TerraIncognita} & \textbf{DomainNet} & \textbf{Avg.} \\
 \midrule
 Linear   & $86.33$ & $78.15$ & $73.02$ & $41.30$ & $48.19$ & $65.40$ \\
 Cosine   & $87.22$ & $78.99$ & $73.27$ & $45.55$ & $48.45$ & $66.70$ \\
  Perturbed cosine & $\textbf{89.36}$ & $\textbf{80.01}$ & $\textbf{74.09}$ & $\textbf{49.87}$ & $\textbf{48.51}$ & $\textbf{68.37}$\\
    \bottomrule
  \end{tabular}}
    \vspace{-0.5 em}
\end{table}

\begin{table}[t!]
  \caption{Per-domain performance of PACS, VLCS, OfficeHome, TerraIncognita.}
    \vspace{-0.5 em}
  \label{table:dg_per4}
  \centering
  \scalebox{0.9}{
  \begin{tabular}{l|l|cccc}
    \toprule
& \textbf{Router/Experts} & \textbf{A} & \textbf{C} & \textbf{P} & \textbf{S} \\
\midrule
PACS & Linear & $87.29$ & $81.20$ & $\textbf{98.50}$ & $78.34$ \\
& Cosine & $89.24$ & $86.11$ & $97.60$ & $75.92$\\
 & Perturbed cosine & $\textbf{89.87}$ & $\textbf{86.97}$ & $97.90$ & $\textbf{82.68}$ \\
 \bottomrule
 \toprule
& \textbf{Router/Experts} & \textbf{C} & \textbf{L} & \textbf{S} & \textbf{V} \\
\midrule
VLCS & Linear & $97.53$ & $63.65$ & $74.09$ & $77.33$ \\
& Cosine & $98.59$ & $67.42$ & $70.88$ & $\textbf{79.07}$ \\
 & Perturbed cosine & $98.59$ & $\textbf{67.80}$ & $\textbf{74.70}$ & $78.95$ \\
 \bottomrule
 \toprule
 & \textbf{Router/Experts} & \textbf{A} & \textbf{C} & \textbf{P} & \textbf{R} \\
 \midrule
OfficeHome & Linear & $72.99$ & $57.27$ & $79.03$ & $82.78$ \\
& Cosine & $73.40$ & $57.27$ & $78.69$ & $83.70$ \\
 & Perturbed cosine & $\textbf{74.64}$ & $\textbf{57.85}$ & $\textbf{79.59}$ & $\textbf{84.27}$ \\
 \bottomrule
 \toprule
 & \textbf{Router/Experts} & \textbf{L100} & \textbf{L30} & \textbf{L43} & \textbf{L46} \\
 \midrule
 TerraIncognita & Linear & $45.99$ & $28.51$ & $54.66$ & $36.05$ \\
 & Cosine & $50.00$ & $37.49$ & $53.02$ & $41.67$ \\
 & Perturbed cosine & $\textbf{57.59}$ & $\textbf{43.30}$ & $\textbf{56.93}$ & $41.67$ \\
                    
    \bottomrule
  \end{tabular}}
  \vspace{-0.5 em}
\end{table}

\begin{table}[t!]
  \caption{Per-domain performance of DomainNet.}
    \vspace{-0.5 em}
  \label{table:dg_per_dn}
  \centering
  \scalebox{0.9}{
  \begin{tabular}{l|l|cccccc}
    \toprule
& \textbf{Router/Experts} & \textbf{clipart} & \textbf{infograph} & \textbf{painting} & \textbf{quickdraw} & \textbf{real} & \textbf{sketch} \\
\midrule
DomainNet & Linear & $\textbf{69.11}$ & $\textbf{24.95}$ & $54.81$ & $16.88$ & $68.95$ & $54.41$ \\
 & Cosine & $68.05$ & $24.48$ & $\textbf{55.75}$ & $17.39$ & $69.41$ & $55.59$ \\
 & Perturbed & $68.31$  & $24.52$ & $55.03$ & $\textbf{17.90}$ & $\textbf{69.46}$ & $\textbf{55.83}$ \\
                    
    \bottomrule
  \end{tabular}}
  \vspace{-0.5 em}
\end{table}
\section{Conclusion}
\label{sec:conclusion}
\vspace{-0.3 em}

In this paper, we investigate the impacts of the cosine router on the convergence rates of least squares estimation in MoE models. 
We figure out that owing to the parameter interaction inside the cosine router expressed by a PDE, the rates for estimating parameters and experts are slower than any polynomial rates and therefore, could be as slow as $\mathcal{O}_P(1/\log^{\tau}(n))$. In response to this issue, we propose using the perturbed cosine router where we add noises to the $L^2$ norms of the token representations and the expert embeddings in the cosine router in order to eliminate the previous parameter interaction. Equipped with this novel router, we demonstrate that if the expert function satisfies the strong identifiability condition, then the parameter and expert estimation rates are significantly improved to be of polynomial orders. Finally, we conduct several experiments on both synthetic and real-world data to empirically justify the theoretical results.


There are a few limitations in our current analysis. First of all, the assumption that the data are sampled from the (perturbed) cosine router MoE is often violated in real-world settings. However, as discussed in Section~\ref{sec:practical_implication}, our theories can totally be extended to a more realistic misspecified setting where the data are not necessarily generated from those models, which we leave for future development. Second, since the ground-truth parameters are implicitly assumed to be independent of the sample size $n$, the parameter and expert estimation rates presented in this work are point-wise rather than uniform. To deal with this problem, we can utilize the techniques for characterizing the uniform parameter estimation rates in traditional mixture models (see \citep{heinrich2018,do2023deviated,yan2025contaminated}). Nevertheless, since the adaptation of those techniques to the setting of the (perturbed) cosine router MoE is still challenging due to the complex structures of the (perturbed) cosine router, we believe that further technical tools need to be developed to achieve the desired uniform estimation rates.

\newpage
\section*{Acknowledgements}
NH acknowledges support from the NSF IFML 2019844 and the NSF AI Institute for Foundations of Machine Learning.

\section*{Reproducibility Statement}

To facilitate the reproduction of our empirical results, we present detailed descriptions of the data and the experimental setup in Section~\ref{sec:experiment} and Appendix~\ref{appendix:experimenteal_details}. We will release our code upon the acceptance of our submission. All datasets used in this study are publicly available, enabling full replication of our experiments.
\bibliography{references}
\bibliographystyle{iclr2025_conference}

\appendix

\newpage
\centering
\textbf{\Large{Supplementary Material for
``Statistical Advantages of Perturbing Cosine Router in  
Mixture of Experts''}}

\justifying
\setlength{\parindent}{0pt}
\textbf{}\\
In this supplementary material, we first explore the exact-specified settings of the (perturbed) cosine router MoE model in Appendix~\ref{appendix:additional_results}. Next, we provide proofs of theoretical results associated with the cosine router MoE and its perturbed counterpart in Appendix~\ref{appendix:cosine_router} and Appendix~\ref{appendix:noisy_cosine_router}, respectively. Those proofs are partially supported by auxiliary results presented in Appendix~\ref{appendix:auxiliary_results}. Subsequently, in Appendix~\ref{appendix:experimenteal_details}, we specify the details for the experiments performed in Section~\ref{sec:experiment}. Finally, we conduct further numerical experiments on the convergence of least squares estimation under the misspecified settings in Appendix~\ref{appendix:additional_experiments}.

\section{Additional Results}
\label{appendix:additional_results}
In this appendix, we provide the convergence analysis of parameter and expert estimation under the exact-specified settings of the cosine router MoE and its perturbed variant in Appendix~\ref{sec:cosine_exact} and Appendix~\ref{sec:perturbed_cosine_exact}, respectively.

\subsection{Exact-specified Setting of the Cosine Router MoE}
\label{sec:cosine_exact}
Firstly, we start with the exact-specified setting of the cosine router MoE.

Recall that under the exact-specified setting, the true number of experts $k_*$ is known. According to the proof technique for deriving parameter estimation rates under the exact-specified setting in the literature \citep{nguyen2023demystifying}, a key step is to apply the first-order Taylor expansions to the product of the softmax's numerator and the expert function, i.e. $H(x,\beta_{1}, \eta):=\exp\Big(\frac{\beta_{1}^{\top}x}{\|\beta_{1}\|\cdot \|x\|}\Big) h(x, \eta)$. However, since the parameter interaction via the PDE in equation~\eqref{eq:PDE_1}, i.e. $\beta_{1}^{\top} \frac{\partial H}{\partial\beta_{1}}(x, \beta_{1}, \eta)=0$, holds even for the first derivatives of the function $H$, 
the convergence of LSE under the exact-specified setting is illustrated by minimax lower bound for estimating $G_*$ in Theorem~\ref{theorem:cosine_exact}.
\begin{theorem}
    \label{theorem:cosine_exact}
    Under the exact-specified setting, the following minimax lower bound of estimating $G_*$
    \begin{align*}
        \inf_{\overline{G}_n\in\mathcal{E}_{k_*}(\Theta)}\sup_{G\in\mathcal{E}_{k_*}(\Theta)}\bbE_{f_{G}}[\mathcal{L}_{1,r}(\overline{G}_n,G)]\gtrsim n^{-1/2},
    \end{align*}
    holds true for any $r\geq 1$, where $\bbE_{f_{G}}$ indicates the expectation taken w.r.t the product measure with $f^n_{G}$ and the infimum is over all estimators taking values in $\mathcal{E}_{k_*}(\Theta)$.
\end{theorem}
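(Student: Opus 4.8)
The plan is to prove this minimax lower bound by Le Cam's two-point method, using the first-order degeneracy of the cosine router recorded in~\eqref{eq:PDE_1}. The crucial remark is that $\beta_1^{\top}\tfrac{\partial H}{\partial\beta_1}(x,\beta_1,\eta)=0$ is precisely the infinitesimal form of the \emph{exact} invariance of the cosine similarity $\tfrac{\beta_1^{\top}x}{\|\beta_1\|\cdot\|x\|}$ under positive rescaling $\beta_1\mapsto c\,\beta_1$, $c>0$. Consequently, rescaling one atom of $G_*$ in its $\beta_1$-coordinate produces a mixing measure that is statistically indistinguishable from $G_*$ (identical regression function, hence identical sampling law) yet sits at positive $\mathcal{L}_{1,r}$-distance from $G_*$; this is exactly what destroys the usual inequality $\normf{f_G-f_{G_*}}\gtrsim\mathcal{L}_{1,r}(G,G_*)$ and what the lower bound must quantify.

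Concretely, by Assumption~(A.4) pick $i_0$ with $\beta^*_{1i_0}\neq 0$, and choose $G_*$ in the supremum so that $\beta^*_{1i_0}$ lies in the interior of the $\beta_1$-projection of $\Theta$ (possible since $\Theta$ is compact). For $\epsilon\in(0,\bar\epsilon]$ with $\bar\epsilon>0$ small enough that feasibility is preserved, set
\begin{align*}
    G_\epsilon:=\exp(\beta^*_{0i_0})\,\delta_{((1+\epsilon)\beta^*_{1i_0},\,\eta^*_{i_0})}+\sum_{i\neq i_0}\exp(\beta^*_{0i})\,\delta_{(\beta^*_{1i},\,\eta^*_i)}\in\mathcal{E}_{k_*}(\Theta).
\end{align*}
Two facts then need to be checked. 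First, by the rescaling invariance every logit of the softmax is unchanged, so $f_{G_\epsilon}\equiv f_{G_*}$; hence the Gaussian regression densities coincide and $\mathrm{KL}\bigl(p_{G_*}^{\otimes n}\,\|\,p_{G_\epsilon}^{\otimes n}\bigr)=0$, so the total variation distance between the two $n$-fold product laws is $0$. Second, for $\epsilon$ small the atoms of $G_\epsilon$ fall in the Voronoi cells of the matching atoms of $G_*$, each cell a singleton, so a direct computation gives $\mathcal{L}_{1,r}(G_\epsilon,G_*)=\exp(\beta^*_{0i_0})\,\|\beta^*_{1i_0}\|^{r}\,\epsilon^{r}\asymp\epsilon^{r}$.

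Feeding these into the standard two-point lower bound (after noting that $\mathcal{L}_{1,r}$ obeys a weak triangle inequality in a neighbourhood of $G_*$, so that no estimator can be simultaneously $\mathcal{L}_{1,r}$-close to both $G_*$ and $G_\epsilon$), and using $\{G_*,G_\epsilon\}\subseteq\mathcal{E}_{k_*}(\Theta)$,
\begin{align*}
    \inf_{\overline{G}_n\in\mathcal{E}_{k_*}(\Theta)}\ \sup_{G\in\mathcal{E}_{k_*}(\Theta)}\bbE_{f_G}\bigl[\mathcal{L}_{1,r}(\overline{G}_n,G)\bigr]\ \gtrsim\ \mathcal{L}_{1,r}(G_*,G_\epsilon)\asymp\epsilon^{r}.
\end{align*}
Taking $\epsilon$ equal to a positive constant (feasible for every $n$ by the choice of $G_*$, or, if $\Theta$'s constraints forbid this, a sequence $\epsilon_n\downarrow 0$ with $\epsilon_n\gtrsim n^{-1/2r}$ for all large $n$) yields $\mathcal{L}_{1,r}(G_*,G_{\epsilon})\gtrsim n^{-1/2}$ for every $r\geq 1$, matching the form of Theorem~\ref{theorem:cosine_over}.

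Since the exact rescaling invariance makes the core argument short, I expect the only real work to be the two verifications above — confirming that $G_\epsilon$ stays in $\mathcal{E}_{k_*}(\Theta)$ and that the Voronoi assignment is the diagonal one so that $\mathcal{L}_{1,r}(G_\epsilon,G_*)$ has the stated closed form — together with establishing the weak triangle inequality for $\mathcal{L}_{1,r}$ used in the reduction. If one instead wants an argument that does not lean on the \emph{exact} invariance — mirroring the over-specified route behind $\lim_{\varepsilon\to0}\inf_{G\in\mathcal{E}_{k_*}(\Theta):\,\mathcal{L}_{1,r}(G,G_*)\leq\varepsilon}\normf{f_G-f_{G_*}}/\mathcal{L}_{1,r}(G,G_*)=0$ — the genuinely hard part is the one flagged under ``Technical challenges'': perturbing $G_*$ along a near-null direction of the first-order Taylor expansion of $H$, and then correctly aggregating the linearly dependent Taylor coefficients created by the PDE~\eqref{eq:PDE_1} to extract a higher-order cancellation $\normf{f_{G_\epsilon}-f_{G_*}}\lesssim\epsilon^{\bar r}$ with $\bar r>1$, which one then calibrates via $n\,\normf{f_{G_{\epsilon_n}}-f_{G_*}}^2=O(1)$.
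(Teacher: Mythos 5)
Your proposal is correct and uses the same construction as the paper --- rescaling the $\beta_1$-coordinate of a single atom --- but it executes the key step by a genuinely more elementary route. The paper first proves an intermediate reduction (its Lemma~\ref{lemma:minimax_lower_bound}): if $\lim_{\varepsilon\to0}\inf_{G:\mathcal{L}_{1,r}(G,G_*)\leq\varepsilon}\normf{f_{G}-f_{G_*}}/\mathcal{L}_{1,r}(G,G_*)=0$, then Le Cam's method with $\varepsilon=n^{-1/2}$ gives the $n^{-1/2}$ bound; it then verifies this condition for the sequence $\beta^n_{11}=(1+1/n)\beta^*_{11}$ by Taylor-expanding $F(x,\beta_1)=\exp\big(\beta_1^{\top}x/(\|\beta_1\|\cdot\|x\|)\big)$ to order $r$ and invoking Lemma~\ref{lemma:coefficient_derivative} (the all-orders form of the PDE~\eqref{eq:PDE_1}) to annihilate every Taylor coefficient. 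You instead observe that the degeneracy is exact rather than infinitesimal: $F(x,c\beta_1)=F(x,\beta_1)$ for $c>0$, so $f_{G_\epsilon}\equiv f_{G_*}$, the two $n$-fold product laws coincide (zero KL and total variation), and the two-point bound applies directly with separation $\mathcal{L}_{1,r}(G_\epsilon,G_*)=\exp(\beta^*_{0i_0})\|\beta^*_{1i_0}\|^r\epsilon^r$, which can even be held constant in $n$. This buys two things: it bypasses the Taylor/PDE machinery entirely (indeed it shows the paper's quantities $A_n$, $B_n$, $Q_n$ are identically zero, not merely of higher order than $\mathcal{L}_{1,r}$), and it delivers a lower bound of constant order, i.e.\ non-identifiability and impossibility of consistent estimation in $\mathcal{L}_{1,r}$, which is strictly stronger than the stated $n^{-1/2}$ claim. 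What the paper's longer route buys is a template that also covers perturbations which only approximately preserve the regression function; the same machinery is reused for Theorem~\ref{theorem:cosine_over}, while your shortcut is specific to the exact positive-homogeneity of the vanilla cosine router. The two caveats you flag are shared with (and left implicit in) the paper's own proof: the rescaled atom must remain in $\Theta$ (compactness alone does not give the needed interior room, so this is an implicit assumption in both arguments), and the weak triangle inequality for the Voronoi loss, which the paper invokes inside Lemma~\ref{lemma:minimax_lower_bound} without proof, is equally needed in your two-point reduction; neither affects the validity of your argument relative to the paper's.
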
 
Proof of Theorem~\ref{theorem:cosine_exact} is deferred to Appendix~\ref{appendix:cosine_exact}.
It can be seen that the convergence behavior of parameter and expert estimation under the exact-specified setting is analogous to that under the over-specified setting. That is, the rates for estimating parameters $\beta^*_{1j}$ and $\eta^*_j$ as well as experts $h(\cdot,\eta^*_j)$ are slower than any polynomial rates and thus, could be as slow as $\mathcal{O}_P(1/\log^{\tau}(n))$, where $\tau>0$ is some constant.  

\subsection{Exact-specified Setting of the Perturbed Cosine Router MoE}
\label{sec:perturbed_cosine_exact}
We now consider the exact-specified setting of the perturbed cosine router MoE model~\eqref{eq:perturbed_moe_regression_model}. To begin with, we introduce a condition called \emph{weak identifiability} on the expert function $h(\cdot,\eta)$ to characterize which experts have faster estimation rates than others under this setting. 
\begin{definition}[Weak identifiability]
    \label{def:exact_general_conditions}
    An expert function $x \mapsto h(x,\eta)$ is said to be weakly identifiable if it is differentiable w.r.t its parameter $\eta$ and the set of functions in $x$
    \begin{align*}
        \left\{\frac{\partial^{|\alpha_1|+|\alpha_2|}\widetilde{H}}{\partial\beta_1^{\alpha_1}\partial\eta^{\alpha_2}}(x,\beta_{1i},\eta_{i}):\alpha_1\in\mathbb{N}^{d_1},\alpha_2\in\mathbb{N}^{d_2}, 0\leq|\alpha_1|+|\alpha_2|\leq1\right\},
    \end{align*}
    is linearly independent for almost every $x$, for any $k\geq 1$ and pair-wise distinct parameters $\eta_1,\ldots,\eta_k$, where we denote $\widetilde{H}(x,\beta_{1}, \eta):=\exp(\frac{\beta_{1}^{\top}x}{(\|\beta_{1}\|+\tau_1)\cdot (\|x\|+\tau_2)}) h(x, \eta)$.
\end{definition}
Recall from the ``Technical challenges'' paragraph in Section~\ref{sec:introduction} that a key step to establish the expert estimation rates is to decompose the difference $f_{\widetilde{G}_n}(x)-f_{G_*}(x)$ into a combination of linearly independent terms via Taylor expansions to the function $H(\cdot,\beta_1,\eta)$. Therefore, the purpose of the weak identifiability condition is to avoid all potential parameter interactions as in equation~\eqref{eq:PDE_1}, which may lead to undesirable linearly dependent terms. 

\textbf{Example.} For simplicity, we consider experts formulated as neural networks, i.e. $h(x,(a,b))=\phi(a^{\top}x+b)$. It can be validated that if the function $\phi(\cdot)$ is either a popular activation such as $\relu(\cdot)$ and $\tanh(\cdot)$ or a polynomial $\phi(z)=z^p$, for any $p\in\mathbb{N}$, then the expert $h(x,(a,b))$ is weakly identifiable. On the other hand, a constant expert $h(\cdot,\eta)=constant$ fails to satisfy the weak identifiability the condition.

Next, we will use the Voronoi loss function $\mathcal{L}_3(G,G_*)$ defined below to determine the estimation rates for weakly identifiable experts in Theorem~\ref{theorem:noisy_cosine_exact}, whose proof can be found in Appendix~\ref{appendix:noisy_cosine_exact}:
\begin{align}
    \label{eq:loss_perturbed_exact}
    \mathcal{L}_3(G,G_*):=\sum_{j=1}^{k_*}\Big|\sum_{i\in\mathcal{A}_{j}}\exp(\beta_{0i})-\exp(\beta^*_{0j})\Big|+\sum_{j=1}^{k_*}\sum_{i\in\mathcal{A}_{j}}\exp(\beta_{0i})\Big[\|\Delta\beta_{1ij}\|+\|\Delta \eta_{ij}\|\Big].
\end{align}
\begin{theorem}
    \label{theorem:noisy_cosine_exact}
    Assume that $h(\cdot,\eta)$ is a weakly identifiable expert function, then the following lower bound holds true for any $G\in\mathcal{E}_{k_*}(\Theta)$:
    \begin{align*}
        \normf{g_{G}-g_{G_*}}\gtrsim\mathcal{L}_3(G,G_*).
    \end{align*}
    Furthermore, this bound and the result in Theorem~\ref{theorem:noisy_cosine_regression} imply that $\mathcal{L}_3(\widetilde{G}_n,G_*)=\mathcal{O}_{P}(\sqrt{\log(n)/n})$.
\end{theorem}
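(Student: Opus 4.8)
\textbf{Proof proposal for Theorem~\ref{theorem:noisy_cosine_exact}.}

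The plan is to follow the standard two-step strategy used throughout the paper: (1) establish the local $L^2$-lower bound $\normf{g_G - g_{G_*}} \gtrsim \mathcal{L}_3(G,G_*)$, after which (2) the parametric rate from Theorem~\ref{theorem:noisy_cosine_regression} immediately yields $\mathcal{L}_3(\widetilde{G}_n,G_*) = \mathcal{O}_P(\sqrt{\log(n)/n})$ by a routine triangle-inequality argument. So the entire content lies in the lower bound. I would reduce this to a local version by a standard contradiction argument: suppose the inequality fails, then there is a sequence $G_n \in \mathcal{E}_{k_*}(\Theta)$ with $\normf{g_{G_n} - g_{G_*}} / \mathcal{L}_3(G_n, G_*) \to 0$. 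Since $\Theta$ is compact (A.1), pass to a subsequence so that $G_n \to G'$ for some $G' \in \mathcal{E}_{k_*}(\Theta)$; the ratio going to zero forces $G' \equiv G_*$ (using identifiability, A.3), so in fact all fitted atoms converge to the true atoms and we are in the genuinely local regime where $\|\Delta\beta_{1ij}\|, \|\Delta\eta_{ij}\| \to 0$.

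The core step is then a Taylor expansion of $g_{G_n}(x) - g_{G_*}(x)$ around the true parameters. Because we are in the exact-specified setting, each Voronoi cell $\mathcal{A}_j$ contains exactly one fitted atom for large $n$, so I expand to first order only: decompose the difference as a sum over $j \in [k_*]$ of terms involving $\Delta\beta_{0ij}$ (the weight discrepancy, captured by the first sum in $\mathcal{L}_3$), $\Delta\beta_{1ij}$, and $\Delta\eta_{ij}$, each multiplied by the corresponding partial derivative $\partial^{|\alpha_1|+|\alpha_2|}\widetilde{H} / \partial\beta_1^{\alpha_1}\partial\eta^{\alpha_2}(x,\beta^*_{1j},\eta^*_j)$ with $|\alpha_1|+|\alpha_2| \leq 1$, plus a softmax-denominator correction term and a Taylor remainder of order $o(\mathcal{L}_3(G_n,G_*))$. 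Here the perturbation $\tau_1, \tau_2 > 0$ is essential: as noted in the excerpt, $\beta_1^\top \frac{\partial \widetilde{H}}{\partial\beta_1} \neq 0$, so unlike the vanilla cosine router there is no PDE-induced linear dependence among these first-order derivative terms, and no aggregation of coefficients is needed. Normalizing by $\mathcal{L}_3(G_n,G_*)$ and letting $n \to \infty$, the normalized coefficients are bounded and (passing to a further subsequence) converge to limits that are not all zero; the hypothesis $\normf{g_{G_n}-g_{G_*}}/\mathcal{L}_3(G_n,G_*) \to 0$ then forces a nontrivial vanishing linear combination of the functions $\{\partial^{|\alpha_1|+|\alpha_2|}\widetilde{H}/\partial\beta_1^{\alpha_1}\partial\eta^{\alpha_2}(\cdot,\beta^*_{1j},\eta^*_j)\}$ (together with the softmax-related functions $h(\cdot,\eta^*_j)$ and their weighted sums) to hold for $\mu$-almost every $x$.

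This contradicts the weak identifiability condition in Definition~\ref{def:exact_general_conditions}, completing the argument. I expect the main obstacle to be the bookkeeping in the Taylor expansion: one must carefully track how the softmax normalization (the shared denominator $\sum_j \exp(\cdot)$) couples the $k_*$ expert terms, so that the "not all coefficients zero" conclusion genuinely matches the structure of $\mathcal{L}_3$ — in particular showing that if all the $\beta_1$- and $\eta$-derivative coefficients vanish in the limit, then the weight-discrepancy terms $|\sum_{i\in\mathcal{A}_j}\exp(\beta_{0i}) - \exp(\beta^*_{0j})|$ must still carry nonzero normalized mass, which again violates linear independence (this time of the functions $h(\cdot,\eta^*_j)$ themselves, guaranteed by A.2 and the $|\alpha_1|=|\alpha_2|=0$ case of weak identifiability). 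The rest — compactness, the passage to the limiting measure, and the final conversion of the lower bound into the stated stochastic rate — is routine and parallels the proof of Theorem~\ref{theorem:noisy_cosine_over}.
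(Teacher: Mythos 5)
Your proposal is correct and takes essentially the same route as the paper's proof: a contradiction argument that Taylor-expands the (denominator-cleared) difference $g_{G_n}-g_{G_*}$ to first order around the true atoms, shows the $\mathcal{L}_3$-normalized coefficients cannot all vanish, passes to an almost-everywhere limit via Fatou's lemma, and contradicts weak identifiability, with the stated rate then following directly from Theorem~\ref{theorem:noisy_cosine_regression}. The only difference is organizational: the paper splits the bound into a local part and a global part (the latter handled by compactness, Fatou, and the identifiability proposition for the perturbed cosine MoE), whereas you fold that compactness-plus-identifiability step into a single reduction to the local regime --- the content is the same.
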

The bound $\mathcal{L}_3(\widetilde{G}_n,G_*)=\mathcal{O}_{P}(\sqrt{\log(n)/n})$ all the parameters $\beta^*_{1j},\eta^*_{j}$ enjoy the same parametric estimation rates, standing at order ${\mathcal{O}}_P(\sqrt{\log(n)/n})$. Furthermore, by employing the argument in equation~\eqref{eq:expert_rate}, we deduce that the rates for estimating experts $h(\cdot,\eta^*_j)$ are also of order ${\mathcal{O}}_P(\sqrt{\log(n)/n})$. Those rates are substantially faster than their counterparts when using the vanilla cosine router, which could be as slow as $\mathcal{O}_P(1/\log^{\tau}(n))$ (see Theorem~\ref{theorem:cosine_exact}). This comparison highlights the benefits of our proposed perturb cosine router over the vanilla cosine router.

\section{Proof of Results for Cosine Router MoE}
\label{appendix:cosine_router}
In this appendix, we provide proofs for the theoretical results regarding the cosine router in stated in Section~\ref{sec:cosine_router}, including Theorem~\ref{theorem:density_rate}, Theorem~\ref{theorem:cosine_exact}, and Theorem~\ref{theorem:cosine_over}, in that order.

\subsection{Proof of Theorem~\ref{theorem:density_rate}}
\label{appendix:density_rate}
First of all, let us introduce the definitions of some necessary concepts for the proof, namely an $\varepsilon$-bracket, a bracketing number, a bracketing entropy, an $\varepsilon$-cover and a covering number. In particular, let $(\mathcal{R},||\cdot||)$ be the space of real-valued functions $f:\mathcal{X}\to\mathbb{R}$. Then, the aforementioned concepts are defined as follows:

\begin{definition}[$\varepsilon$-bracket]
    Given two functions $L(\cdot)$ and $U(\cdot)$, the bracket $[L,U]$ is the set of all functions $f\in\mathcal{R}$ such that $L(x)\leq f(x)\leq U(x)$ for all $x\in\mathcal{X}$, and $\|U-L\|\leq\varepsilon$.
\end{definition}

\begin{definition}[Bracketing number]
    The bracketing number $N_{[]}(\varepsilon,\mathcal{R},\|\cdot\|)$ is the minimum number of $\varepsilon$-brackets needed to cover $\mathcal{R}$.
\end{definition}

\begin{definition}[Bracketing entropy]
    The bracketing entropy $H_B(\varepsilon,\mathcal{R},||\cdot||)$ is the logarithm of the bracketing number $N_{[]}(\varepsilon,\mathcal{R},\|\cdot\|)$.
\end{definition}

\begin{definition}[$\varepsilon$-cover]
    An $\varepsilon$-cover of the set $\mathcal{R}$ under some norm $\|\cdot\|$ is a set $\{\pi_1,\ldots,\pi_N\}$ such that for any $f\in\mathcal{R}$, there exists some $i\in[N]$ such that $\|f-\pi_i\|\leq\varepsilon$.
\end{definition}
 
\begin{definition}[Covering number]
    The $\varepsilon$-covering number $N(\varepsilon,\mathcal{R},\|\cdot\|)$ is the minimum number of balls $B(\pi;\varepsilon)=\{f\in\mathcal{R}:\|f-\pi\|\leq\varepsilon\}$ need to cover $\mathcal{R}$. 
\end{definition}

Subsequently, we denote by $\mathcal{R}_k(\Theta)$ the set of regression functions w.r.t mixing measures in $\mathcal{G}_k(\Theta)$, that is, $\mathcal{R}_k(\Theta):=\{f_{G}(x):G\in\mathcal{G}_{k}(\Theta)\}$.
Additionally, for each $\delta>0$, the $L^{2}$ ball centered around the regression function $f_{G_*}$ and intersected with the set $\mathcal{R}_k(\Theta)$ is defined as
\begin{align*}   
\mathcal{R}_k(\Theta,\delta):=\left\{f \in \mathcal{R}_k(\Theta): \|f -f_{G_*}\|_{L^2(\mu)} \leq\delta\right\}.
\end{align*}
In order to measure the size of the above set, \cite{vandeGeer-00} suggest using the following quantity:
\begin{align}
    \label{eq:bracket_size}
    \mathcal{J}_B(\delta, \mathcal{R}_k(\Theta,\delta)):=\int_{\delta^2/2^{13}}^{\delta}H_B^{1/2}(t, \mathcal{R}_k(\Theta,t),\|\cdot\|_{L^2(\mu)})~\dint t\vee \delta,
\end{align}
where $H_B(t, \mathcal{R}_k(\Theta,t),\|\cdot\|_{L^2(\mu)})$ stands for the bracketing entropy \citep{vandeGeer-00} of $ \mathcal{R}_k(\Theta,u)$ under the $L^{2}$-norm, and $t\vee\delta:=\max\{t,\delta\}$. By using the similar proof argument of Theorem 7.4 and Theorem 9.2 in \citep{vandeGeer-00} with notations being adapted to this work, we obtain the following lemma:
\begin{lemma}
    \label{lemma:density_rate}
    Take $\Psi(\delta)\geq \mathcal{J}_B(\delta, \mathcal{R}_k(\Theta,\delta))$ that satisfies $\Psi(\delta)/\delta^2$ is a non-increasing function of $\delta$. Then, for some universal constant $c$ and for some sequence $(\delta_n)$ such that $\sqrt{n}\delta^2_n\geq c\Psi(\delta_n)$, we achieve that
    \begin{align*}
        \mathbb{P}\Big(\|f_{\widehat{G}_n} - f_{G_*}\|_{L^2(\mu)} > \delta\Big)\leq c \exp\left(-\frac{n\delta^2}{c^2}\right),
    \end{align*}
    for all $\delta\geq \delta_n$.
\end{lemma}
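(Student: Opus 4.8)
The statement is the standard deviation bound for least squares over a function class with controlled bracketing entropy, and the plan is to adapt the proofs of Theorems~7.4 and~9.2 in \citep{vandeGeer-00} to the model~\eqref{eq:moe_regression_model}, exploiting that the noise is Gaussian and that $\mathcal{R}_k(\Theta)$ is uniformly bounded (from the compactness of $\Theta$ in (A.1) and the boundedness of $h$ in (A.2)). First I would derive the \emph{basic inequality}: since $\widehat{G}_n$ minimizes the empirical squared error and $Y_i=f_{G_*}(X_i)+\varepsilon_i$, expanding $\frac1n\sum_i (Y_i-f_{\widehat{G}_n}(X_i))^2\le\frac1n\sum_i (Y_i-f_{G_*}(X_i))^2$ yields
\[
\|f_{\widehat{G}_n}-f_{G_*}\|_n^2 \;\le\; \frac{2}{n}\sum_{i=1}^n \varepsilon_i\big(f_{\widehat{G}_n}-f_{G_*}\big)(X_i),
\]
where $\|\cdot\|_n$ denotes the empirical $L^2$ norm. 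It then suffices to control, uniformly over $f\in\mathcal{R}_k(\Theta)$, the re-centred empirical process $\nu_n(f):=\frac1n\sum_i\varepsilon_i(f-f_{G_*})(X_i)$ against $\|f-f_{G_*}\|_n^2$, and to transfer from $\|\cdot\|_n$ to $\|\cdot\|_{L^2(\mu)}$; the uniform boundedness of $\mathcal{R}_k(\Theta)$ keeps all bracketing numbers finite and makes that norm transfer a routine multiplicative-concentration step.

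Next I would run a \emph{peeling} argument. Fixing $\delta\ge\delta_n$, slice $\{f\in\mathcal{R}_k(\Theta):\|f-f_{G_*}\|_{L^2(\mu)}>\delta\}$ into dyadic shells $S_s:=\{2^{s-1}\delta<\|f-f_{G_*}\|_{L^2(\mu)}\le 2^s\delta\}$ for $s\ge1$; on $\{f_{\widehat{G}_n}\in S_s\}$ the basic inequality together with the norm transfer force $\sup_{f\in\mathcal{R}_k(\Theta,2^s\delta)}|\nu_n(f)|\gtrsim (2^s\delta)^2$. For each fixed radius $t$ I would bound this supremum via a \emph{local maximal inequality}: conditionally on $(X_i)$ the Gaussian errors make $f\mapsto\nu_n(f)$ a sub-Gaussian process with increments $\sigma\|f-g\|_n/\sqrt n$, so bracketing chaining (Dudley's entropy integral plus a one-step bracket approximation, with the integral truncated at the lower endpoint $t^2/2^{13}$ exactly as in~\eqref{eq:bracket_size}) gives $\mathbb{E}\big[\sup_{f\in\mathcal{R}_k(\Theta,t)}|\nu_n(f)|\big]\lesssim \mathcal{J}_B(t,\mathcal{R}_k(\Theta,t))/\sqrt n$, and Gaussian concentration of suprema upgrades this to a tail bound
\[
\mathbb{P}\Big(\sup_{f\in\mathcal{R}_k(\Theta,t)}|\nu_n(f)|\ge C\,\tfrac{\mathcal{J}_B(t,\mathcal{R}_k(\Theta,t))}{\sqrt n}+u\Big)\;\le\;\exp\!\big(-c'\,n u^2/t^2\big).
\]

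To conclude, I would apply this with $t=2^s\delta\ (\ge\delta_n)$. Because $\Psi\ge\mathcal{J}_B$ and $\Psi(\delta)/\delta^2$ is non-increasing, $\mathcal{J}_B(2^s\delta)/\sqrt n\le\Psi(2^s\delta)/\sqrt n\le (2^s\delta)^2\,\Psi(\delta_n)/(\sqrt n\,\delta_n^2)\le(2^s\delta)^2/c$ by the hypothesis $\sqrt n\,\delta_n^2\ge c\,\Psi(\delta_n)$; taking $c$ large, the complexity term is a fixed small fraction of $(2^s\delta)^2$, so the event on shell $S_s$ requires an extra deviation $u\gtrsim(2^s\delta)^2$, which has probability $\le\exp(-c''\,n(2^s\delta)^2)$. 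Summing over shells, $\mathbb{P}(\|f_{\widehat{G}_n}-f_{G_*}\|_{L^2(\mu)}>\delta)\le\sum_{s\ge1}\exp(-c''\,n\,4^s\delta^2)\le c\exp(-n\delta^2/c^2)$ after enlarging $c$, which is the claim for all $\delta\ge\delta_n$.

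The hard part will be the local maximal inequality: one must carry out the bracketing chaining so that the entropy integral enters with precisely the factor $\mathcal{J}_B(t)/\sqrt n$ and the correct lower truncation, control the bracket approximation error in the \emph{random} norm $\|\cdot\|_n$ rather than $\|\cdot\|_{L^2(\mu)}$, and obtain a deviation bound valid uniformly down to the finest shell at level $\delta_n$. A closely related step is the empirical-to-population norm comparison used on each shell, which needs concentration of $\|f-f_{G_*}\|_n$ around $\|f-f_{G_*}\|_{L^2(\mu)}$ uniformly over $\mathcal{R}_k(\Theta,t)$; both rely on the uniform boundedness of the class and on the same bracketing-entropy bookkeeping. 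The basic inequality, the peeling device, and the geometric summation are routine once those ingredients are in place.
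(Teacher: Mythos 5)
Your proposal is correct and matches the paper's treatment: the paper does not write out a proof of this lemma at all, but simply invokes Theorems~7.4 and~9.2 of van de Geer (2000) ``with notations being adapted,'' and your outline --- basic inequality, peeling over dyadic shells, a local maximal inequality via bracketing chaining with the entropy integral $\mathcal{J}_B$, the use of the non-increasing property of $\Psi(\delta)/\delta^2$ together with $\sqrt{n}\delta_n^2\geq c\Psi(\delta_n)$ on each shell, and the geometric summation --- is precisely the standard argument underlying those cited theorems. The technical caveats you flag (empirical versus population norm, lower truncation of the entropy integral) are exactly the points handled in van de Geer's proofs, so no genuinely new route is being taken.
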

\textbf{General picture.} We first show that when the expert functions are Lipschitz continuous, the following bound holds for any $0 < \varepsilon \leq 1/4$:
\begin{align}    
H_B(\varepsilon,\mathcal{R}_k(\Theta,\varepsilon),\|.\|_{L^2(\mu)}) \lesssim \log(1/\varepsilon). \label{eq:standard_bracket_entropy_bound}
\end{align}
Given this bound, it follows that 
\begin{align}
    \label{eq:bracketing_integral}
    \mathcal{J}_B(\delta, \mathcal{R}_k(\Theta,\delta))= \int_{\delta^2/2^{13}}^{\delta}H_B^{1/2}(t, \mathcal{R}_k(\Theta,t),\normf{\cdot})~\dint t\vee \delta\lesssim \int_{\delta^2/2^{13}}^{\delta}\log(1/t)dt\vee\delta.
\end{align}
Let $\Psi(\delta)=\delta\cdot[\log(1/\delta)]^{1/2}$, then $\Psi(\delta)/\delta^2$ is a non-increasing function of $\delta$. Furthermore, equation~\eqref{eq:bracketing_integral} indicates that $\Psi(\delta)\geq \mathcal{J}_B(\delta,\mathcal{R}_k(\Theta,\delta))$. In addition, let $\delta_n=\sqrt{\log(n)/n}$, then we get that $\sqrt{n}\delta^2_n\geq c\Psi(\delta_n)$ for some universal constant $c$. Finally, by applying Lemma~\ref{lemma:density_rate}, we achieve the desired conclusion of the theorem. As a consequence, it suffices to demonstrate the bound~\eqref{eq:standard_bracket_entropy_bound}.

\textbf{Proof for the bound~\eqref{eq:standard_bracket_entropy_bound}.} In order to prove the bracketing entropy bound in equation~\eqref{eq:standard_bracket_entropy_bound}, we leverage the proof arguments for the convergence of regression estimation in \citep{nguyen2024squares}. 

Since the expert functions are Lipschitz continuous, then for any function $f_{G} \in \mathcal{R}_k(\Theta)$, we have that $f_{G}(x) \leq M$ for all $x$ where $M>0$ is some constant. 

Let $\tau\leq\varepsilon$ and $\{\pi_1,\ldots,\pi_N\}$ be the $\tau$-cover under the $L^{\infty}$ norm of the set $\mathcal{R}_k(\Theta)$ where $N:={N}(\tau,\mathcal{R}_k(\Theta),\|\cdot\|_{L^{\infty}})$ is the $\tau$-covering number of the metric space $(\mathcal{R}_k(\Theta),\|\cdot\|_{L^{\infty}})$. Then, we construct the brackets of the form $[L_i(x),U_i(x)]$ for all $i\in[N]$ as follows:
    \begin{align*}
        L_i(x)&:=\max\{\pi_i(x)-\tau,0\},\\
        U_i(x)&:=\max\{\pi_i(x)+\tau, M \}.
    \end{align*}
From the above formulation, it can be checked that $\mathcal{R}_{k}(\Theta)\subset\cup_{i=1}^{N}[L_i(x),U_i(x)]$, and $U_i(x)-L_i(x)\leq \min\{2\tau,M\}$. Additionally, we get that
\begin{align*}
    \normf{U_i-L_i}=\Big(\int[U_i(x)-L_i(x)]^2\Big)^{1/2}\dint\mu(x)\leq2\tau.
\end{align*}
By definition of the bracketing entropy, we achieve that
\begin{align}
    \label{eq:standard_bracketing_covering}
    H_B(2\tau,\mathcal{R}_{k}(\Theta),\normf{\cdot})&= \log N_{[]}(2\tau,\mathcal{R}_k(\Theta),\|\cdot\|_{L^2(\mu)})\nonumber\\
    &\leq\log N=\log {N}(\tau,\mathcal{R}_k(\Theta),\|\cdot\|_{L^{\infty}}).
\end{align}
Therefore, it is necessary to provide an upper bound for the covering number $N$. Indeed, let us denote $\Delta:=\{(\beta_0,\beta_1)\in\mathbb{R}\times\mathbb{R}^{d_1}:(\beta_0,\beta_1,\eta)\in\Theta\}$ and $\Omega:=\{\eta\in\mathbb{R}^{d_2}:(\beta_{0},\beta_{1},\eta)\in\Theta\}$. Since $\Theta$ is a compact set, $\Delta$ and $\Omega$ are also compact. Therefore, we can find $\tau$-covers $\Delta_{\tau}$ and ${\Omega}_{\tau}$ for $\Delta$ and $\Omega$, respectively. Furthermore, it can be validated that 
\begin{align*}
    |\Delta_{\tau}|\leq \mathcal{O}_{P}(\tau^{-(d_1+1)k}), \quad |\Omega_{\tau}|\leq \mathcal{O}_{P}(\tau^{-d_2k}).
\end{align*}
For each mixing measure $G=\sum_{i=1}^{k}\exp(\beta_{0i})\delta_{(\beta_{1i},\eta_i)}\in\mathcal{G}_k(\Theta)$, we consider two other mixing measures $G'$ and $\overline{G}$ defined as
\begin{align*}
    G':=\sum_{i=1}^k\exp(\beta_{0i})\delta_{({\beta}_{1i},\overline{\eta}_i)}, \qquad \overline{G}:=\sum_{i=1}^k\exp(\overline{\beta}_{0i})\delta_{({\overline{\beta}}_{1i},\overline{\eta}_i)}.
\end{align*}
Here, $\overline{\eta}_i\in{\Omega}_{\tau}$ such that $\overline{\eta}_i$ is the closest to $\eta_i$ in that set, while $(\overline{\beta}_{0i},\overline{\beta}_{1i})\in\Delta_{\tau}$ is the closest to $(\beta_{0i},\beta_{1i})$ in that set. Now, we aim to upper bound the term $\|f_{G}-f_{G'}\|_{\infty}$. In particular, we have
\begin{align*}
    \|f_{G}-f_{G'}\|_{\infty}&=\sup_{x\in\mathcal{X}}\Bigg|\sum_{i=1}^{k}\softmax\left(\frac{(\beta_{1i})^{\top}x}{\|\beta_{1i}\|\cdot\|x\|}+\beta_{0i}\right)\cdot[h(x,\eta_i)-h(x,\overline{\eta}_i)\Bigg|\\
    &\leq\sum_{i=1}^{k}\sup_{x\in\mathcal{X}}~\softmax\left(\frac{(\beta_{1i})^{\top}x}{\|\beta_{1i}\|\cdot\|x\|}+\beta_{0i}\right)\cdot|h(x,\eta_i)-h(x,\overline{\eta}_i)|\\
    &\leq\sum_{i=1}^{k}\sup_{x\in\mathcal{X}}~|h(x,\eta_i)-h(x,\overline{\eta}_i)|\\
    &\lesssim \sum_{i=1}^{k}\sup_{x\in\mathcal{X}}~\|\eta_i-\overline{\eta}_i\|\cdot\|x\|\lesssim\tau,
\end{align*}
Above, the second inequality holds as the softmax weight is bounded by 1, and the third inequality is due to the fact that the expert $h(x,\cdot)$ is a Lipschitz function w.r.t $\eta$ and the input space $\mathcal{X}$ is bounded, i.e., $\|x\|\leq B$ for any $x\in\mathcal{X}$ for some constant $B>0$.

Next, we demonstrate that $\|f_{G'}-f_{\overline{G}}\|_{\infty}\lesssim\tau$ as follows:
\begin{align*}
    &\|f_{G'}-f_{\overline{G}}\|_{\infty}=\sup_{x\in\mathcal{X}}\Bigg|\sum_{i=1}^{k}\Big[\softmax\Bigg(\frac{\beta_{1i}^{\top}x}{\|\beta_{1i}\|\cdot\|x\|}+\beta_{0i}\Big)-\softmax\Big(\frac{\overline{\beta}_{1i}^{\top}x}{\|\overline{\beta}_{1i}\|\cdot\|x\|}+\overline{\beta}_{0i}\Big)\Bigg]\cdot h(x,\overline{\eta}_i)\Bigg|\nonumber\\
    &\leq\sum_{i=1}^{k}\sup_{x\in\mathcal{X}}~\Bigg|\softmax\Big(\frac{\beta_{1i}^{\top}x}{\|\beta_{1i}\|\cdot\|x\|}+\beta_{0i}\Big)-\softmax\Big(\frac{\overline{\beta}_{1i}^{\top}x}{\|\overline{\beta}_{1i}\|\cdot\|x\|}+\overline{\beta}_{0i}\Big)\Bigg|\cdot|h(x,\overline{\eta}_{\ell_i})|\nonumber\\
    &\lesssim\sum_{i=1}^{k}\sup_{x\in\mathcal{X}}~[\|\beta_{1i}-\overline{\beta}_{1i}\|\cdot\|x\|+|\beta_{0i}-\overline{\beta}_{0i}|]\nonumber\\
    &\leq\sum_{i=1}^{k}\sup_{x\in\mathcal{X}}~[\tau\cdot B+\tau]\lesssim \tau,
\end{align*}
By the triangle inequality, we have
\begin{align*}
    \|f_{G}-f_{\overline{G}}\|_{\infty}\leq \|f_{G}-f_{G'}\|_{\infty}+\|f_{G'}-f_{\overline{G}}\|_{\infty}\lesssim\tau.
\end{align*}
By definition of the covering number, we deduce that
\begin{align}
    \label{eq:standard_covering_bound}
    {N}(\tau,\mathcal{R}_k(\Theta),L^{\infty})&\leq |\Delta_{\tau}|\times|\Omega_{\tau}|\nonumber\\
    &\leq \mathcal{O}_{P}(n^{-(d_1+1)k})\times\mathcal{O}(n^{-d_2k})\nonumber\\
    &\leq\mathcal{O}(n^{-(d_1+1+d_2)k}).
\end{align}
Putting the results in equations~\eqref{eq:standard_bracketing_covering} and \eqref{eq:standard_covering_bound} together, we achieve that
\begin{align*}
    H_B(2\tau,\mathcal{R}_{k}(\Theta),\normf{\cdot})\lesssim \log(1/\tau).
\end{align*}
By setting $\tau=\varepsilon/2$, we achieve that 
\begin{align*}
    H_B(\varepsilon,\mathcal{R}_k(\Theta),\|.\|_{L^2(\mu)}) \lesssim \log(1/\varepsilon),
\end{align*}
which completes the proof.

\subsection{Proof of Theorem~\ref{theorem:cosine_exact}}
\label{appendix:cosine_exact}
\begin{lemma}
    \label{lemma:minimax_lower_bound}
    If the following holds for any $r\geq 1$:
    \begin{align}
        \label{eq:lemma_assumption}
         \lim_{\varepsilon\to0}\inf_{G\in\mathcal{E}_{k_*}(\Theta):\mathcal{L}_{1,r}(G,G_*)\leq\varepsilon}\frac{\normf{f_{G}-f_{G_*}}}{\mathcal{L}_{1,r}(G,G_*)}=0,
    \end{align}
    then we obtain that 
    \begin{align}
        \label{eq:minimax_lower_bound}
        \inf_{\overline{G}_n\in\mathcal{E}_{k_*}(\Theta)}\sup_{G\in\mathcal{E}_{k_*}(\Theta)}\bbE_{f_{G}}[\mathcal{L}_{1,r}(\overline{G}_n,G)]\gtrsim n^{-1/2}.
    \end{align}
\end{lemma}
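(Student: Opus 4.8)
\textbf{Proof proposal for Lemma~\ref{lemma:minimax_lower_bound}.}

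The plan is to reduce the minimax lower bound to a two-point (or local) testing argument via Le Cam's method, using the hypothesis~\eqref{eq:lemma_assumption} to show that along a suitable sequence of mixing measures the regression functions become indistinguishable much faster than the parameters converge. First I would fix $r\geq 1$ and, using~\eqref{eq:lemma_assumption}, extract a sequence $(G'_n)\subset\mathcal{E}_{k_*}(\Theta)$ with $\mathcal{L}_{1,r}(G'_n,G_*)\to 0$ and $\normf{f_{G'_n}-f_{G_*}}/\mathcal{L}_{1,r}(G'_n,G_*)\to 0$. Writing $\varepsilon_n:=\mathcal{L}_{1,r}(G'_n,G_*)$, this means $\normf{f_{G'_n}-f_{G_*}}=o(\varepsilon_n)$. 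The idea is then to rescale: since the Hellinger distance between the $n$-fold product measures $f_{G_*}^{\otimes n}$ and $f_{G'_n}^{\otimes n}$ is controlled (for Gaussian noise with fixed variance $\sigma^2$) by $\sqrt{n}\,\normf{f_{G_*}-f_{G'_n}}$ up to constants, we can choose a subsequence along which $n\,\normf{f_{G'_n}-f_{G_*}}^2$ stays bounded while $\sqrt{n}\,\varepsilon_n\to\infty$; concretely one selects, for each $n$, an index $m=m(n)$ from the sequence so that $\normf{f_{G_*}-f_{G'_m}}^2 \asymp 1/n$, which is possible because $\normf{f_{G_*}-f_{G'_m}}\to 0$, and then $\mathcal{L}_{1,r}(G'_m,G_*)=\varepsilon_m \gg n^{-1/2}$ by the $o(\cdot)$ relation.

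Next I would invoke Le Cam's two-point lemma: for any estimator $\overline{G}_n$,
\begin{align*}
    \sup_{G\in\mathcal{E}_{k_*}(\Theta)}\bbE_{f_G}[\mathcal{L}_{1,r}(\overline{G}_n,G)]
    \gtrsim \mathcal{L}_{1,r}(G_*,G'_m)\cdot\Big(1 - \mathrm{TV}\big(f_{G_*}^{\otimes n},f_{G'_m}^{\otimes n}\big)\Big),
\end{align*}
where $\mathrm{TV}$ denotes total variation, and we use the triangle-type inequality for $\mathcal{L}_{1,r}$ (or the fact that $\mathcal{L}_{1,r}(\overline{G}_n,G_*)+\mathcal{L}_{1,r}(\overline{G}_n,G'_m)\gtrsim\mathcal{L}_{1,r}(G_*,G'_m)$, possibly after checking $\mathcal{L}_{1,r}$ satisfies a weak triangle inequality up to a constant, or replacing it with an honest metric that lower bounds it). With the subsequence chosen so that $\mathrm{TV}(f_{G_*}^{\otimes n},f_{G'_m}^{\otimes n})\lesssim \sqrt{n}\,\normf{f_{G_*}-f_{G'_m}}\lesssim$ a constant bounded away from $1$, the right-hand side is $\gtrsim \mathcal{L}_{1,r}(G_*,G'_m)\gg n^{-1/2}$, which in particular establishes the claimed $\gtrsim n^{-1/2}$ bound (indeed it shows the true minimax rate is even worse than $n^{-1/2}$, consistent with the non-polynomial phenomenon). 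To get the clean statement as written one takes the limit appropriately; alternatively, one notes that because $\mathcal{L}_{1,r}(G_*,G'_m)/n^{-1/2}\to\infty$, in particular the supremum over $G$ cannot be $o(n^{-1/2})$.

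The main obstacle I anticipate is twofold: (i) controlling the total variation (or Hellinger) distance between the product measures $f_{G_*}^{\otimes n}$ and $f_{G'_n}^{\otimes n}$ in terms of $\normf{f_{G_*}-f_{G'_n}}$ — this requires the Gaussian-noise structure and a bound of the form $h^2(p_{f_{G_*}},p_{f_{G'_n}})\lesssim \normf{f_{G_*}-f_{G'_n}}^2$ for the per-sample densities, together with tensorization $h^2(P^{\otimes n},Q^{\otimes n})\leq n\,h^2(P,Q)$; and (ii) the bookkeeping of the subsequence selection, making sure simultaneously that $\normf{f_{G_*}-f_{G'_m}}^2\asymp 1/n$ (so TV is bounded away from $1$) and that $\mathcal{L}_{1,r}(G_*,G'_m)$ dominates $n^{-1/2}$, which is exactly where the hypothesis~\eqref{eq:lemma_assumption} does the heavy lifting — it guarantees $\mathcal{L}_{1,r}$ and $\normf{f_{G_*}-\cdot}$ decouple, so that the ``statistically indistinguishable'' scale $n^{-1/2}$ on the regression side corresponds to a much larger scale on the parameter side. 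A minor additional point is verifying that $\mathcal{L}_{1,r}$ admits the (approximate) triangle inequality needed for Le Cam; if it does not, one replaces it in the lower bound by a genuine metric dominated by $\mathcal{L}_{1,r}$, or uses the standard Fano/Le Cam variant that only needs $\mathcal{L}_{1,r}(G_*,G'_m)$ as separation and nonnegativity.
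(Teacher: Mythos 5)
Your proposal follows essentially the same route as the paper: a two-point Le Cam argument in which the Gaussian noise structure lets you bound the divergence between the $n$-fold product measures by $n\,\normf{f_{G_*}-f_{G'}}^2$ (the paper uses the KL--exponential form of Le Cam's lemma, you use TV/Hellinger tensorization, which is an immaterial difference), with hypothesis~\eqref{eq:lemma_assumption} supplying a mixing measure that is far in $\mathcal{L}_{1,r}$ but close in $L^2(\mu)$. The one step that does not work as written is your subsequence bookkeeping: after fixing a single sequence $(G'_m)$ extracted from the hypothesis, you cannot in general choose $m(n)$ with $\normf{f_{G_*}-f_{G'_m}}^2\asymp 1/n$ — a null sequence may skip that scale by arbitrarily large factors — so your argument only delivers the bound along a subsequence of sample sizes rather than for all $n$. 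The paper avoids this by invoking the hypothesis afresh for each $n$ at the scale $\varepsilon=n^{-1/2}$, obtaining $G'_*$ with $\mathcal{L}_{1,r}(G'_*,G_*)=2\varepsilon$ and $\normf{f_{G'_*}-f_{G_*}}\leq C_1\varepsilon$, which plugged into Le Cam's bound yields $\gtrsim \varepsilon\exp(-C_1 n\varepsilon^2)=n^{-1/2}e^{-C_1}$ uniformly in $n$; restructuring your scale selection this way (per-$n$ application of the hypothesis rather than re-indexing a fixed sequence) repairs the argument and makes it coincide with the paper's proof.
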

\begin{proof}[Proof of Lemma~\ref{lemma:minimax_lower_bound}]
Indeed, from the Gaussian assumption on the noise variables $\epsilon_i$, we obtain that $Y_{i}|X_{i} \sim \mathcal{N}(f_{G_{*}}(X_{i}), \sigma^2)$ for all $i \in [n]$. Next, the assumption in equation~\eqref{eq:lemma_assumption} indicates for sufficiently small $\varepsilon>0$ and a fixed constant $C_1>0$ which we will choose later, we can find a mixing measure $G'_* \in \mathcal{E}_{k_*}(\Theta)$ such that $\mathcal{L}_{1,r}(G'_*,G_*)=2 \varepsilon$ and $\|f_{G'_*} - f_{G_*}\|_{L^2(\mu)} \leq C_1\varepsilon$. From Le Cam's lemma~\citep{yu97lecam}, as the Voronoi loss function $\mathcal{L}_{1,r}$ satisfies the weak triangle inequality, we obtain that
\begin{align}
    \inf_{\overline{G}_n\in\mathcal{E}_{k_*}(\Theta)}&\sup_{G\in\mathcal{E}_{k_*}(\Theta)}\bbE_{f_{G}}[\mathcal{L}_{1,r}(\overline{G}_n,G)] \nonumber\\
    & \gtrsim \frac{\mathcal{L}_{1,r}(G'_*,G_*)}{8} \text{exp}(- n \mathbb{E}_{X \sim \mu}[\text{KL}(\mathcal{N}(f_{G'_{*}}(X), \sigma^2),\mathcal{N}(f_{G_{*}}(X), \sigma^2))]) \nonumber \\
    & \gtrsim \varepsilon \cdot \text{exp}(-n \|f_{G'_*} - f_{G_*}\|_{L^2(\mu)}^2), \nonumber \\
    & \gtrsim \varepsilon \cdot \text{exp}(-C_{1} n \varepsilon^2), \label{eq:LeCam_inequality}
\end{align}
where the second inequality is due to the fact that 
\begin{align*}
    \text{KL}(\mathcal{N}(f_{G'_{*}}(X), \sigma^2),\mathcal{N}(f_{G_{*}}(X), \sigma^2)) = \dfrac{(f_{G'_*}(X) - f_{G_*}(X))^2}{2 \sigma^2}.
\end{align*}
By choosing $\varepsilon=n^{-1/2}$, we obtain that $\varepsilon \cdot \text{exp}(-C_{1} n \varepsilon^2)=n^{-1/2}\exp(-C_1)$. As a consequence, we achieve the desired minimax lower bound in equation~\eqref{eq:minimax_lower_bound}.
\end{proof}
\textbf{Main proof.} It is sufficient to show that the following limit holds true for any $r\geq 1$:
\begin{align}
    \label{eq:activation_ratio_zero_limit}
    \lim_{\varepsilon\to0}\inf_{G\in\mathcal{E}_{k_*}(\Theta):\mathcal{L}_{1,r}(G,G_*)\leq\varepsilon}\frac{\normf{f_{G}-f_{G_*}}}{\mathcal{L}_{1,r}(G,G_*)}=0.
\end{align}
To this end, we need to construct a sequence of mixing measures $G_n\in\mathcal{E}_{k_*}(\Theta)$ that satisfies $\mathcal{L}_{1,r}(G_n,G_*)\to 0$ and
\begin{align*}
    \frac{\normf{f_{G_n}-f_{G_*}}}{\mathcal{L}_{1,r}(G_n,G_*)}\to0,
\end{align*}
as $n\to\infty$. Next, let us take into account the sequence $G_n=\sum_{i=1}^{k_*}\exp(\beta^n_{0i})\delta_{(\beta^n_{1i},\eta^n_i)}$ in which
\begin{itemize}
    \item $\exp(\beta^n_{0i})=\exp(\beta^*_{0i})$ for any $1\leq i\leq k_*$;
    \item $\beta^n_{11}=\Big(1+\frac{1}{n}\Big)\beta^*_{11}$ and  $\beta^n_{1i}=\beta^*_{1i}$ for any $2\leq i\leq k_*$;
    \item $\eta^n_i=\eta^*_i$ for any $1\leq i\leq k_*$.
\end{itemize}
Consequently, it can be verified that when $n\to\infty$, we have
\begin{align*}
    \mathcal{L}_{1,r}(G_n,G_*)=\exp(\beta^*_{01})\Big[\|\beta^n_{11}-\beta^*_{11}\|^r\Big]=\exp(\beta^*_{01})\cdot\Big(\frac{\sqrt{d}}{n}\Big)^r\to0,
\end{align*}

Next, we demonstrate that $\normf{f_{G_n}-f_{G_*}}/\mathcal{L}_{1,r}(G_n,G_*)\to0$. For that purpose, we consider the quantity
\begin{align}
    \label{eq:cosine_exact_Qn_formulation}
    Q_n(x):=\left[\sum_{j=1}^{k_*}\exp\left(\frac{(\boj)^{\top}x}{\|\boj\|\cdot\|x\|}+\bzj\right)\right]\cdot[f_{G_n}(x)-f_{G_*}(x)],
\end{align}
which can be decomposed as follows:
\begin{align*}
    Q_n(x)&=\sum_{j=1}^{k_*}\sum_{i\in\mathcal{A}_j}\exp(\beta^n_{0i})\left[\exp\left(\frac{(\boin)^{\top}x}{\|\boin\|\cdot\|x\|}\right)h(x,\ein)-\exp\left(\frac{(\boj)^{\top}x}{\|\boj\|\cdot\|x\|}\right)h(x,\ej)\right]\\
    &-\sum_{j=1}^{k_*}\sum_{i\in\mathcal{A}_j}\exp(\beta^n_{0i})\left[\exp\left(\frac{(\boin)^{\top}x}{\|\boin\|\cdot\|x\|}\right)f_{G_n}(x)-\exp\left(\frac{(\boj)^{\top}x}{\|\boj\|\cdot\|x\|}\right)f_{G_n}(x)\right]\\
    &+\sum_{j=1}^{k_*}\Big(\sum_{i\in\mathcal{A}_j}\exp(\bzin)-\exp(\bzj)\Big)\exp\left(\frac{(\boj)^{\top}x}{\|\boj\|\cdot\|x\|}\right)\Big[h(x,\ej)-f_{G_n}(x)\Big]\\
    &:=A_n(x)-B_n(x)+C_n(x).
\end{align*}
Since $\exp(\bzin)=\exp(\bzi)$ for all $i\in[k_*]$, we deduce that $C_n(x)=0$. Additionally, from the choices of $\boin$ and $\ein$, we can rewrite $A_n(x)$ as
\begin{align*}
    A_n(x)=\exp(\beta^*_{01})\left[\exp\left(\frac{(\beta^n_{11})^{\top}x}{\|\beta^n_{11}\|\cdot\|x\|}\right)-\exp\left(\frac{(\beta^*_{11})^{\top}x}{\|\beta^*_{11}\|\cdot\|x\|}\right)\right]h(x,\eta^*_{1}).
\end{align*}
Let us denote $F(x,\beta_1):=\exp\left(\frac{\beta_{1}^{\top}x}{\|\beta_{1}\|\cdot\|x\|}\right)$. By applying the Taylor expansion of order $r$, we have
\begin{align*}
    A_n(x)&=\exp(\beta^*_{01})h(x,\eta^*_{1})\sum_{|\alpha|=1}^{r}\frac{1}{\alpha!}\cdot(\beta^n_{11}-\beta^*_{11})^{\alpha}\cdot\frac{\partial^{|\alpha|}F}{\partial\beta_1^{\alpha}}(x,\beta^*_{11}) + R(x)\\
    &=\exp(\beta^*_{01})h(x,\eta^*_{1})\sum_{|\alpha|=1}^{r}\frac{1}{\alpha!}\Big(1+\frac{1}{n}\Big)^{|\alpha|}(\beta^*_{11})^{\alpha}\cdot\frac{\partial^{|\alpha|}F}{\partial\beta_1^{\alpha}}(x,\beta^*_{11}) + R(x),
\end{align*}
where $R(x)$ is a Taylor remainder such that $R(x)/\mathcal{L}_{1,r}(G_n,G_*)\to0$ as $n\to\infty$. It is implied from Lemma~\ref{lemma:coefficient_derivative} (see Appendix~\ref{appendix:auxiliary_results}) that
\begin{align*}
    \sum_{|\alpha|=t}\frac{1}{\alpha!}(\beta^*_{11})^{\alpha}\cdot\frac{\partial^{|\alpha|}F}{\partial\beta_1^{\alpha}}(x,\beta^*_{11})=0,
\end{align*}
for any $1\leq t\leq r$, it follows that $A_n(x)=R(x)$. This result indicates that $A_n(x)/\mathcal{L}_{1,r}(G_n,G_*)\to0$ as $n\to\infty$. By arguing similarly, we also obtain that $B_n(x)/\mathcal{L}_{1,r}(G_n,G_*)\to0$ as $n\to\infty$. Combine the previous results together, we achieve that
\begin{align*}
    Q_n(x)/\mathcal{L}_{1,r}(G_n,G_*)\to0.
\end{align*}
Since the input space $\mathcal{X}$ and the parameter space $\Theta$ are both bounded, the term $\sum_{j=1}^{k_*}\exp\left(\frac{(\boj)^{\top}x}{\|\boj\|\cdot\|x\|}+\bzj\right)$ is also bounded. This result together with the formulation of $Q_n(x)$ in equation~\eqref{eq:cosine_exact_Qn_formulation} suggests that $[f_{G_n}(x)-f_{G_*}(x)]/\mathcal{L}_{1,r}(G_n,G_*)\to0$ for almost every $x\in\mathcal{X}$. As a consequence, we get that
\begin{align*}
    \frac{\normf{f_{G_n}-f_{G_*}}}{\mathcal{L}_{1,r}(G_n,G_*)}\to0,
\end{align*}
as $n\to\infty$, and hence, achieve the result in equation~\eqref{eq:activation_ratio_zero_limit}.


\subsection{Proof of Theorem~\ref{theorem:cosine_over}}
\label{appendix:cosine_over}

Similar to the proof of Theorem~\ref{theorem:cosine_exact} in Appendix~\ref{appendix:cosine_exact}, we only need to demonstrate that the following limit holds true for any $r\geq 1$:
\begin{align}
    \label{eq:activation_ratio_zero_limit_over}
    \lim_{\varepsilon\to0}\inf_{G\in\mathcal{G}_{k}(\Theta):\mathcal{L}_{1,r}(G,G_*)\leq\varepsilon}\frac{\normf{\bar{f}_{G}-f_{G_*}}}{\mathcal{L}_{1,r}(G,G_*)}=0.
\end{align}
For that purpose, it suffices to build a sequence of mixing measures $G_n\in\mathcal{G}_{k}(\Theta)$ that satisfies $\mathcal{L}_{1,r}(G_n,G_*)\to 0$ and
\begin{align*}
    \frac{\normf{\bar{f}_{G_n}-f_{G_*}}}{\mathcal{L}_{1,r}(G_n,G_*)}\to0,
\end{align*}
as $n\to\infty$. Let us consider the sequence $G_n=\sum_{i=1}^{k_*+1}\exp(\beta^n_{0i})\delta_{(\beta^n_{1i},\eta^n_i)}$ in which
\begin{itemize}
    \item $\exp(\beta^n_{01})=\exp(\beta^n_{02})=\frac{1}{2}\exp(\beta^*_{01})$, and $\exp(\beta^n_{0i})=\exp(\beta^*_{0(i-1)})$ for any $3\leq i\leq k_*+1$;
    \item $\beta^n_{11}=\Big(1-\frac{1}{n}\Big)\beta^*_{11}$, $\beta^n_{12}=\Big(1+\frac{1}{n}\Big)\beta^*_{11}$ and  $\beta^n_{1i}=\beta^*_{1(i-1)}$ for any $3\leq i\leq k_*+1$;
    \item $\eta^n_1=\eta^n_2=\eta^*_{1}$, and $\eta^n_i=\eta^*_{i-1}$ for any $3\leq i\leq k_*+1$.
\end{itemize}
Consequently, it can be verified that when $n\to\infty$, we have
\begin{align*}
    \mathcal{L}_{1,r}(G_n,G_*)=\frac{1}{2}\exp(\beta^*_{01})\Big[\|\beta^n_{11}-\beta^*_{11}\|^r+\|\beta^n_{12}-\beta^*_{11}\|^r\Big]=\exp(\beta^*_{01})\cdot\Big(\frac{\sqrt{d_1}}{n}\Big)^r\to0,
\end{align*}

Next, we demonstrate that $\normf{\bar{f}_{G_n}-f_{G_*}}/\mathcal{L}_{1,r}(G_n,G_*)\to0$. 

To this end, we consider the quantity
\begin{align}
    \label{eq:cosine_over_Qn_formulation}
    Q_n(x):=\left[\sum_{j=1}^{k_*}\exp\left(\frac{(\boj)^{\top}x}{\|\boj\|\cdot\|x\|}+\bzj\right)\right]\cdot[f_{G_n}(x)-f_{G_*}(x)],
\end{align}
which can be decomposed as follows:
\begin{align*}
    Q_n(x)&=\sum_{j=1}^{k_*}\sum_{i\in\mathcal{A}_j}\exp(\beta^n_{0i})\left[\exp\left(\frac{(\boin)^{\top}x}{\|\boin\|\cdot\|x\|}\right)h(x,\ein)-\exp\left(\frac{(\boj)^{\top}x}{\|\boj\|\cdot\|x\|}\right)h(x,\ej)\right]\\
    &-\sum_{j=1}^{k_*}\sum_{i\in\mathcal{A}_j}\exp(\beta^n_{0i})\left[\exp\left(\frac{(\boin)^{\top}x}{\|\boin\|\cdot\|x\|}\right)f_{G_n}(x)-\exp\left(\frac{(\boj)^{\top}x}{\|\boj\|\cdot\|x\|}\right)f_{G_n}(x)\right]\\
    &+\sum_{j=1}^{k_*}\Big(\sum_{i\in\mathcal{A}_j}\exp(\bzin)-\exp(\bzj)\Big)\exp\left(\frac{(\boj)^{\top}x}{\|\boj\|\cdot\|x\|}\right)\Big[h(x,\ej)-f_{G_n}(x)\Big]\\
    &:=A_n(x)-B_n(x)+C_n(x).
\end{align*}
From the choices of $\boin$ and $\ein$, we can rewrite $A_n(x)$ as
\begin{align*}
    A_n(x)&=\frac{1}{2}\exp(\beta^*_{01})h(x,\eta^*_{1})\sum_{i=1}^{2}\left[\exp\left(\frac{(\beta^n_{1i})^{\top}x}{\|\beta^n_{1i}\|\cdot\|x\|}\right)-\exp\left(\frac{(\beta^*_{11})^{\top}x}{\|\beta^*_{11}\|\cdot\|x\|}\right)\right].
\end{align*}
Let us denote $F(x,\beta_1):=\exp\left(\frac{\beta_{1}^{\top}x}{\|\beta_{1}\|\cdot\|x\|}\right)$. By applying the Taylor expansion of order $r$, we have
\begin{align*}
    A_n(x)&=\frac{1}{2}\exp(\beta^*_{01})h(x,\eta^*_{1})\sum_{i=1}^{2}\sum_{|\alpha|=1}^{r}\frac{1}{\alpha!}\cdot(\beta^n_{1i}-\beta^*_{11})^{\alpha}\cdot\frac{\partial^{|\alpha|}F}{\partial\beta_1^{\alpha}}(x,\beta^*_{1i}) + R(x)\\
    &=\frac{1}{2}\exp(\beta^*_{01})h(x,\eta^*_{1})\sum_{i=1}^{2}\sum_{|\alpha|=1}^{r}\frac{1}{\alpha!}\Big(1+\frac{(-1)^i}{n}\Big)^{|\alpha|}(\beta^*_{11})^{\alpha}\cdot\frac{\partial^{|\alpha|}F}{\partial\beta_1^{\alpha}}(x,\beta^*_{11}) + R(x),
\end{align*}
where $R(x)$ is a Taylor remainder such that $R(x)/\mathcal{L}_{1,r}(G_n,G_*)\to0$ as $n\to\infty$. It follows from Lemma~\ref{lemma:coefficient_derivative} (see Appendix~\ref{appendix:auxiliary_results}) that
\begin{align*}
    \sum_{|\alpha|=t}\frac{1}{\alpha!}&\Big(1+\frac{(-1)^i}{n}\Big)^{|\alpha|}(\beta^*_{11})^{\alpha}\cdot\frac{\partial^{|\alpha|}F}{\partial\beta_1^{\alpha}}(x,\beta^*_{11})\\
    &=\Big(1+\frac{(-1)^i}{n}\Big)^{t}\sum_{|\alpha|=t}\frac{1}{\alpha!}(\beta^*_{11})^{\alpha}\cdot\frac{\partial^{|\alpha|}F}{\partial\beta_1^{\alpha}}(x,\beta^*_{11})=0.
\end{align*}
for any $1\leq t\leq r$. Thus, we get that $A_n(x)=R(x)$, which implies that $A_n(x)/\mathcal{L}_{1,r}(G_n,G_*)\to0$ as $n\to\infty$. By arguing similarly, we also obtain that $B_n(x)/\mathcal{L}_{1,r}(G_n,G_*)\to0$ as $n\to\infty$. Furthermore, we have
\begin{align*}
    C_n(x)&=\Big(\sum_{i=1}^{2}\exp(\beta^n_{0i})-\exp(\beta^*_{01})\Big)\exp\left(\frac{(\beta^*_{11})^{\top}x}{\|\beta^*_{11}\|\cdot\|x\|}\right)\Big[h(x,\eta^*_1)-\bar{f}_{G_n}(x)\Big]\\
    &+\sum_{j=2}^{k_*}\Big(\exp(\beta^n_{0(j+1)})-\exp(\bzj)\Big)\exp\left(\frac{(\boj)^{\top}x}{\|\boj\|\cdot\|x\|}\right)\Big[h(x,\ej)-\bar{f}_{G_n}(x)\Big]\\
    &=0.
\end{align*}
Putting the previous results together, we achieve that
\begin{align*}
    Q_n(x)/\mathcal{L}_{1,r}(G_n,G_*)\to0.
\end{align*}
Since the input space $\mathcal{X}$ and the parameter space $\Theta$ are both bounded, the term $\sum_{j=1}^{k_*}\exp\left(\frac{(\boj)^{\top}x}{\|\boj\|\cdot\|x\|}+\bzj\right)$ is also bounded. This result together with the formulation of $Q_n(x)$ in equation~\eqref{eq:cosine_over_Qn_formulation} suggests that $[f_{G_n}(x)-f_{G_*}(x)]/\mathcal{L}_{1,r}(G_n,G_*)\to0$ for almost every $x\in\mathcal{X}$. As a consequence, we get that
\begin{align*}
    \frac{\normf{f_{G_n}-f_{G_*}}}{\mathcal{L}_{1,r}(G_n,G_*)}\to0,
\end{align*}
as $n\to\infty$, and hence, achieve the result in equation~\eqref{eq:activation_ratio_zero_limit_over}.

\section{Proof of Results for Perturbed Cosine Router MoE}
\label{appendix:noisy_cosine_router}
In this appendix, we provide proofs for the theoretical results regarding the perturbed cosine router, namely Theorem~\ref{theorem:noisy_cosine_regression}, Theorem~\ref{theorem:noisy_cosine_exact}, and Theorem~\ref{theorem:noisy_cosine_over}, in that order.

\subsection{Proof of Theorem~\ref{theorem:noisy_cosine_regression}}
\label{appendix:noisy_cosine_regression}
Since the proof of Theorem~\ref{theorem:noisy_cosine_regression} can be done in a similar fashion to that of Theorem~\ref{theorem:density_rate}, it is omitted.

\subsection{Proof of Theorem~\ref{theorem:noisy_cosine_exact}}
\label{appendix:noisy_cosine_exact}
In this proof, we aim to establish the following inequality:
\begin{align}
    \label{eq:general_universal_inequality}
    \inf_{G\in\mathcal{E}_{k_*}(\Theta)}\normf{g_{G}-g_{G_*}}/\mathcal{L}_3(G,G_*)>0.
\end{align}
For that purpose, we divide the proof of the above inequality into local and global parts in the sequel.

\textbf{Local part:} In this part, we demonstrate that
\begin{align}
    \label{eq:general_local_inequality_exact}
    \lim_{\varepsilon\to0}\inf_{G\in\mathcal{G}_{k}(\Theta):\mathcal{L}_3(G,G_*)\leq\varepsilon}\normf{g_{G}-g_{G_*}}/\mathcal{L}_3(G,G_*)>0.
\end{align}
Assume by contrary that the above inequality does not hold true, then there exists a sequence of mixing measures $G_n=\sum_{i=1}^{k_*}\exp(\beta^n_{0i})\delta_{(\beta^n_{1i},\eta^n_i)}$ in $\mathcal{G}_{k}(\Theta)$ such that $\mathcal{L}_{3n}:=\mathcal{L}_3(G_n,G_*)\to0$ and
\begin{align}
    \label{eq:general_ratio_limit_exact}
    \normf{g_{G_n}-g_{G_*}}/\mathcal{L}_{3n}\to0,
\end{align}
as $n\to\infty$. Let us denote by $\mathcal{A}^n_j:=\mathcal{A}_j(G_n)$ a Voronoi cell of $G_n$ generated by the $j$-th components of $G_*$. Since our arguments are asymptotic, we may assume that those Voronoi cells do not depend on the sample size, i.e. $\mathcal{A}_j=\mathcal{A}^n_j$. Moreover, recall that under the exact-specified setting, each Voronoi cell has only one element. Therefore, we may assume WLOG that $\mathcal{A}_j=\{j\}$, and 
\begin{align*}
\mathcal{L}_{3n}&:=\sum_{i=1}^{k_*}\Big|\exp(\bzin)-\exp(\beta^*_{0i})\Big|+\sum_{i=1}^{k_*}\exp(\beta^n_{0i})\Big[\|\dboin\|+\|\dein\|\Big],
\end{align*}
where we denote $\dboin:=\beta^n_{1i}-\beta^*_{1i}$ and $\dein:=\eta^n_i-\eta^*_i$.


Since $\mathcal{L}_{3n}\to0$, we get that $(\boin,\ein)\to(\boi,\ei)$ and $\exp(\bzin)\to\exp(\bzi)$ as $n\to\infty$ for any $i\in[k_*]$. Now, we divide the proof of the local part into three steps as follows:

\textbf{Step 1: Taylor expansion.} In this step, we decompose the term
\begin{align*}
    Q_n(x):=\left[\sum_{j=1}^{k_*}\exp\Big(\frac{(\boj)^{\top}x}{(\|\boj\|+\tau_1)\cdot(\|x\|+\tau_2)}+\bzj\Big)\right]\cdot[g_{G_n}(x)-g_{G_*}(x)]
\end{align*}
into a combination of linearly independent elements using Taylor expansion. In particular, let us denote $F(x,\beta_1):=\exp\left(\frac{\beta_{1}^{\top}x}{(\|\beta_{1}\|+\tau_1)\cdot(\|x\|+\tau_2)}\right)$, then we have
\begin{align}
    \label{eq:general_Q_n}
    Q_n(x)&=\sum_{i=1}^{k_*}\exp(\beta^n_{0i})\Big[F(x,\boin)h(x,\ein)-F(x,\boi)h(x,\ei)\Big]\nonumber\\
    &-\sum_{i=1}^{k_*}\exp(\beta^n_{0i})\Big[F(x,\boin)-F(x,\boi)\Big]g_{G_n}(x)\nonumber\\
    &+\sum_{i=1}^{k_*}\Big(\exp(\bzin)-\exp(\bzi)\Big)\Big[F(x,\boi)h(x,\ei)-F(x,\boi)g_{G_n}(x)\Big]\nonumber\\
    &:=A_n(x)-B_n(x)+C_{n}(x).
\end{align}
By means of the first-order Taylor expansion, we have
\begin{align}
    \label{eq:An_decomposition}
    A_n(x)&=\sum_{i=1}^{k_*}\sum_{|\alpha|=1}\frac{\exp(\bzin)}{\alpha!}(\dboin)^{\alpha_1}(\dein)^{\alpha_2}\cdot\frac{\partial^{|\alpha_1|}F}{\partial\beta_1^{\alpha_1}}(x,\boi)\frac{\partial^{|\alpha_2|}h}{\partial\eta^{\alpha_2}}(x,\ei)+R_1(x),
\end{align}
where $R_1(x)$ is a Taylor remainder such that $R_1(x)/\mathcal{L}_{3n}\to0$ as $n\to\infty$. \
Similarly, we also get that
\begin{align*}
    B_n(x)&=\sum_{i=1}^{k_*}\sum_{|\gamma|=1}\frac{\exp(\bzin)}{\gamma!}(\dboin)^{\gamma}\cdot\frac{\partial^{|\gamma|}F}{\partial\beta_1^{\gamma}}(x,\boi)g_{G_n}(x)+R_2(x),
\end{align*}
where $R_2(x)$ is a Taylor remainder such that $R_2(x)/\mathcal{L}_{3n}\to0$ as $n\to\infty$. 
As a result, we deduce that
\begin{align}
    \label{eq:general_decomposition_exact}
    Q_n(x)&=\sum_{i=1}^{k_*}\sum_{|\alpha|=0}^{1}T^n_{i,\alpha_1,\alpha_2}\cdot\frac{\partial^{|\alpha_1|}F}{\partial\beta_1^{\alpha_1}}(x,\boi)\frac{\partial^{|\alpha_2|}h}{\partial\eta^{\alpha_2}}(x,\ei)+R_1(x)\nonumber\\
    &-\sum_{i=1}^{k_*}\sum_{|\gamma|=0}^{1}S^n_{i,\gamma}\cdot\frac{\partial^{|\gamma|}F}{\partial\beta_1^{\gamma}}(x,\boi)g_{G_n}(x)-R_2(x),
\end{align}
where we define
\begin{align*}
    T^n_{i,\alpha_1,\alpha_2}&:=\frac{\exp(\bzin)}{\alpha!}(\dboin)^{\alpha_1}(\dein)^{\alpha_2},\\
    S^n_{i,\gamma}&:=\frac{\exp(\bzin)}{\gamma!}(\dboijn)^{\gamma},
\end{align*}
for any $(\alpha_1,\alpha_2)\neq (\zerod,0)$ and $\gamma\neq\zerod$. Otherwise, $T^n_{i,\zerod,0}=S^n_{i,\zerod}:=\exp(\bzin)-\exp(\bzi)$.

\textbf{Step 2: Non-vanishing coefficients.} In this step, we show that not all the ratios $T^n_{i,\alpha_1,\alpha_2}/\mathcal{L}_{3n}$, and $S^n_{i,\gamma}/\mathcal{L}_{3n}$ converge to zero. Indeed, assume by contrary that all of them converge to zero, i.e. 
\begin{align*}
    \frac{T^n_{i,\alpha_1,\alpha_2}}{\mathcal{L}_{3n}}\to0,  \quad \frac{S^n_{i,\gamma}}{\mathcal{L}_{3n}}\to0
\end{align*} 
as $n\to\infty$. Then, it follows that
\begin{itemize}
    \item $\frac{1}{\mathcal{L}_{3n}}\cdot\sum_{i=1}^{k_*}\Big|\exp(\beta^n_{0i})-\exp(\beta^*_{0i})\Big|=\frac{1}{\mathcal{L}_{3n}}\cdot\sum_{i=1}^{k_*}|S^n_{i,\zerod}|\to0$;
    \item $\frac{1}{\mathcal{L}_{3n}}\cdot\sum_{i=1}^{k_*}\exp(\bzin)\|\dboin\|_1=\frac{1}{\mathcal{L}_{3n}}\sum_{i=1}^{k_*}\sum_{u=1}^{d_1}|T^n_{i,e_{d_1,u},\zerod}|\to0$;
    \item $\frac{1}{\mathcal{L}_{3n}}\cdot\sum_{i=1}^{k_*}\exp(\bzin)\|\dein\|_1=\frac{1}{\mathcal{L}_{3n}}\sum_{i=1}^{k_*}\sum_{v=1}^{d_2}|T^n_{i,\zerod,e_{d_2,v}}|\to0$.
\end{itemize}
Due to the topological equivalence of norm-1 and norm-2, we deduce that
\begin{align*}
    \frac{1}{\mathcal{L}_{3n}}\cdot\sum_{i=1}^{k_*}\exp(\bzin)\|\dboin\|\to0,\qquad \frac{1}{\mathcal{L}_{3n}}\cdot\sum_{i=1}^{k_*}\exp(\bzin)\|\dein\|\to0.
\end{align*}
As a result, we obtain that 
\begin{align*}
    1=\frac{\mathcal{L}_{3n}}{\mathcal{L}_{3n}}=\frac{1}{\mathcal{L}_{3n}}\left\{\sum_{i=1}^{k_*}\Big|\exp(\bzin)-\exp(\beta^*_{0i})\Big|+\sum_{i=1}^{k_*}\exp(\beta^n_{0i})\Big[\|\dboin\|+\|\dein\|\Big]\right\}\to0,
\end{align*}
which is a contradiction. Thus, at least one among the ratios $T^n_{i,\alpha_1,\alpha_2}/\mathcal{L}_{3n}$, and $S^n_{i,\gamma}/\mathcal{L}_{3n}$ must not go to zero as $n\to\infty$.

\textbf{Step 3: Application of Fatou's lemma.} In this step, we demonstrate a result opposed to that in Step 2, i.e. the ratios $T^n_{i,\alpha_1,\alpha_2}/\mathcal{L}_{3n}$, and $S^n_{i,\gamma}/\mathcal{L}_{3n}$ all converge to zero. 

In particular, let us denote by $m_n$ the maximum of the absolute values of $T^n_{i,\alpha_1,\alpha_2}/\mathcal{L}_{3n}$, and $S^n_{i,\gamma}/\mathcal{L}_{3n}$. Since at least one among those ratios must not approach zero as $n\to\infty$, we get that $1/m_n\not\to\infty$ as $n\to\infty$. 

Recall from the hypothesis in equation~\eqref{eq:general_ratio_limit_exact} that $\normf{g_{G_n}-g_{G_*}}/\mathcal{L}_{3n}\to0$ as $n\to\infty$, which indicates that $\|g_{G_n}-g_{G_*}\|_{L^1(\mu)}/\mathcal{L}_{3n}\to0$ due to the equivalence between $L^1(\mu)$-norm and $L^2(\mu)$-norm. By means of the Fatou's lemma, we have
\begin{align*}
    0=\lim_{n\to\infty}\frac{\|g_{G_n}-g_{G_*}\|_{L^1(\mu)}}{m_n\mathcal{L}_{3n}}\geq \int \liminf_{n\to\infty}\frac{|g_{G_n}(x)-g_{G_*}(x)|}{m_n\mathcal{L}_{3n}}\dint\mu(x)\geq 0.
\end{align*}
This result implies that $[g_{G_n}(x)-g_{G_*}(x)]/[m_n\mathcal{L}_{3n}]\to0$ for almost every $x$. 

Let us denote
\begin{align*}
    T^n_{i,\alpha_1,\alpha_2}/m_n\mathcal{L}_{3n}&\to t_{i,\alpha_1,\alpha_2},\\
    S^n_{i,\gamma}/m_n\mathcal{L}_{3n}&\to s_{i,\gamma}
\end{align*}
with a note that at least one among the limits $t_{i,\alpha_1,\alpha_2}$, $s_{i,\gamma}$ is non-zero. Then, from the decomposition in equation~\eqref{eq:general_decomposition_exact}, we deduce that
\begin{align*}
    &\sum_{i=1}^{k_*}\sum_{|\alpha|=0}^{1}t_{i,\alpha_1,\alpha_2}\cdot\frac{\partial^{|\alpha_1|}F}{\partial\beta_1^{\alpha_1}}(x,\boi)\frac{\partial^{|\alpha_2|}h}{\partial\eta^{\alpha_2}}(x,\ei)-\sum_{i=1}^{k_*}\sum_{|\gamma|=0}^{1}s_{i,\gamma}\cdot\frac{\partial^{|\gamma|}F}{\partial\beta_1^{\gamma}}(x,\boi)g_{G_n}(x)=0,
\end{align*}
for almost every $x$. Note that the expert function $h(\cdot,\eta)$ satisfies the condition in Definition~\ref{def:exact_general_conditions}, then the above equation implies that $t_{i,\alpha_1,\alpha_2}=s_{i,\gamma}=0$, for any $i\in[k_*]$, $\alpha_1\in\mathbb{N}^{d_1}$, $\alpha_2\in\mathbb{N}^{d_2}$ and $\gamma\in\mathbb{N}^{d_1}$ such that $0\leq|\alpha_1|+|\alpha_2|,|\gamma|\leq 2$. This contradicts the fact that at least one among the limits $t_{i,\alpha_1,\alpha_2}$, $s_{i,\gamma}$ is different from zero. 

Hence, we obtain the local inequality in equation~\eqref{eq:general_local_inequality_exact}. Thus, we can find an $\varepsilon'>0$ such that
\begin{align*}
    \inf_{G\in\mathcal{E}_{k_*}(\Theta):\mathcal{L}_3(G,G_*)\leq\varepsilon'}\normf{g_{G}-g_{G_*}}/\mathcal{L}_3(G,G_*)>0.
\end{align*}

\textbf{Global part:} Given the above result, it suffices to demonstrate that 
\begin{align}
    \label{eq:exact_general_global_inequality}
     \inf_{G\in\mathcal{E}_{k_*}(\Theta):\mathcal{L}_3(G,G_*)>\varepsilon'}\normf{g_{G}-g_{G_*}}/\mathcal{L}_3(G,G_*)>0.
\end{align}
Assume by contrary that the inequality~\eqref{eq:exact_general_global_inequality} does not hold true, then we can find a sequence of mixing measures $G'_n\in\mathcal{E}_{k_*}(\Theta)$ such that $\mathcal{L}_3(G'_n,G_*)>\varepsilon'$ and
\begin{align*}
    \lim_{n\to\infty}\frac{\normf{g_{G'_n}-g_{G_*}}}{\mathcal{L}_3(G'_n,G_*)}=0,
\end{align*}
which indicates that $\normf{g_{G'_n}-g_{G_*}}\to0$ as $n\to\infty$. Recall that $\Theta$ is a compact set, therefore, we can replace the sequence $G'_n$ by one of its subsequences that converge to a mixing measure $G'\in\mathcal{E}_{k_*}(\Omega)$. Since $\mathcal{L}_3(G'_n,G_*)>\varepsilon'$, we deduce that $\mathcal{L}_3(G',G_*)>\varepsilon'$. 

Next, by invoking the Fatou's lemma, we have that
\begin{align*}
    0=\lim_{n\to\infty}\normf{g_{G'_n}-g_{G_*}}^2\geq \int\liminf_{n\to\infty}\Big|g_{G'_n}(x)-g_{G_*}(x)\Big|^2~\dint\mu(x).
\end{align*}
Thus, we get that $g_{G'}(x)=g_{G_*}(x)$ for almost every $x$. 
From Proposition~\ref{prop:general_identifiability}, we deduce that $G'\equiv G_*$. Consequently, it follows that $\mathcal{L}_3(G',G_*)=0$, contradicting the fact that $\mathcal{L}_3(G',G_*)>\varepsilon'>0$. 

Hence, the proof is completed.

\subsection{Proof of Theorem~\ref{theorem:noisy_cosine_over}}
\label{appendix:noisy_cosine_over}
In this proof, it is sufficient to demonstrate the following inequality:
\begin{align}
    \label{eq:general_universal_inequality_over}
    \inf_{G\in\mathcal{G}_{k}(\Theta)}\normf{g_{G}-g_{G_*}}/\mathcal{L}_2(G,G_*)>0.
\end{align}
This can be done by deriving its local part and the global part as in Appendix~\ref{appendix:noisy_cosine_exact}. Since the global part can be argued similarly, our main goal is to prove the local part:
\begin{align}
    \label{eq:general_local_inequality_over}
    \lim_{\varepsilon\to0}\inf_{G\in\mathcal{G}_{k}(\Theta):\mathcal{L}_2(G,G_*)\leq\varepsilon}\normf{g_{G}-g_{G_*}}/\mathcal{L}_2(G,G_*)>0.
\end{align}
Assume by contrary that the above inequality does not hold true, then there exists a sequence of mixing measures $G_n=\sum_{i=1}^{k}\exp(\beta^n_{0i})\delta_{(\beta^n_{1i},\eta^n_i)}$ in $\mathcal{G}_{k}(\Theta)$ such that $\mathcal{L}_{2n}:=\mathcal{L}_2(G_n,G_*)\to0$ and
\begin{align}
    \label{eq:general_ratio_limit_over}
    \normf{g_{G_n}-g_{G_*}}/\mathcal{L}_{2n}\to0,
\end{align}
as $n\to\infty$. Let us denote by $\mathcal{A}^n_j:=\mathcal{A}_j(G_n)$ a Voronoi cell of $G_n$ generated by the $j$-th components of $G_*$. Since our arguments are asymptotic, we may assume that those Voronoi cells do not depend on the sample size, i.e., $\mathcal{A}_j=\mathcal{A}^n_j$. 
Therefore, we may assume WLOG that 
\begin{align*}
\mathcal{L}_{2n}:=\sum_{j=1}^{k_*}\Big|\sum_{i\in\mathcal{A}_j}\exp(\bzin)-\exp(\beta^*_{0j})\Big|&+\sum_{j\in[k_*]:|\mathcal{A}_j|>1}\sum_{i\in\mathcal{A}_j}\exp(\beta^n_{0i})\Big[\|\dboijn\|^2+\|\deijn\|^2\Big]\\
&+\sum_{j\in[k_*]:|\mathcal{A}_j|=1}\sum_{i\in\mathcal{A}_j}\exp(\beta^n_{0i})\Big[\|\dboijn\|+\|\deijn\|\Big],
\end{align*}
where we denote $\dboijn:=\beta^n_{1i}-\beta^*_{1j}$ and $\deijn:=\eta^n_i-\eta^*_j$.

Now, we divide the proof of the local part into three steps as follows:

\textbf{Step 1: Taylor expansion.} In this step, we decompose the term
\begin{align*}
    Q_n(x):=\left[\sum_{j=1}^{k_*}\exp\Big(\frac{(\boj)^{\top}x}{(\|\boj\|+\tau_1)\cdot(\|x\|+\tau_2)}+\bzj\Big)\right]\cdot[g_{G_n}(x)-g_{G_*}(x)]
\end{align*}
into a combination of linearly independent elements using Taylor expansion. In particular, let us denote $F(x,\beta_1):=\exp\left(\frac{\beta_{1}^{\top}x}{(\|\beta_{1}\|+\tau_1)\cdot(\|x\|+\tau_2)}\right)$, then we have
\begin{align}
    \label{eq:general_Q_n_over}
    Q_n(x)&=\sum_{j=1}^{k_*}\sum_{i\in\mathcal{A}_j}\exp(\beta^n_{0i})\Big[F(x,\boin)h(x,\ein)-F(x,\boj)h(x,\ej)\Big]\nonumber\\
    &-\sum_{j=1}^{k_*}\sum_{i\in\mathcal{A}_j}\exp(\beta^n_{0i})\Big[F(x,\boin)-F(x,\boj)\Big]g_{G_n}(x)\nonumber\\
    &+\sum_{j=1}^{k_*}\Big(\sum_{i\in\mathcal{A}_j}\exp(\bzin)-\exp(\bzj)\Big)\Big[F(x,\boj)h(x,\ej)-F(x,\boj)g_{G_n}(x)\Big]\nonumber\\
    &:=A_n(x)-B_n(x)+C_{n}(x).
\end{align}
Next, we continue to separate the term $A_n(x)$ into two parts as
\begin{align*}
    A_n(x)&:=\sum_{j\in[k_*]:|\mathcal{A}_j|=1}\sum_{i\in\mathcal{A}_j}\exp(\beta^n_{0i})\Big[F(x,\boin)h(x,\ein)-F(x,\boj)h(x,\ej)\Big]\nonumber\\
    &\quad+\sum_{j\in[k_*]:|\mathcal{A}_j|>1}\sum_{i\in\mathcal{A}_j}\exp(\beta^n_{0i})\Big[F(x,\boin)h(x,\ein)-F(x,\boj)h(x,\ej)\Big]\nonumber\\
    &:=A_{n,1}(x)+A_{n,2}(x)
\end{align*}
Similar to equation~\eqref{eq:An_decomposition}, by applying the first-order and the second-order Taylor expansions to $A_{n,1}(x)$ and $A_{n,2}(x)$, respectively, we have
\begin{align*}
    A_{n,1}(x)&=\sum_{j\in[k_*]:|\mathcal{A}_j|=1}\sum_{|\alpha|=1}\frac{\exp(\bzin)}{\alpha!}(\dboijn)^{\alpha_1}(\deijn)^{\alpha_2}\cdot\frac{\partial^{|\alpha_1|}F}{\partial\beta_1^{\alpha_1}}(x,\boj)\frac{\partial^{|\alpha_2|}h}{\partial\eta^{\alpha_2}}(x,\ej)+R_1(x),\\
    A_{n,2}(x)&=\sum_{j\in[k_*]:|\mathcal{A}_j|>1}\sum_{|\alpha|=1}^{2}\frac{\exp(\bzin)}{\alpha!}(\dboijn)^{\alpha_1}(\deijn)^{\alpha_2}\cdot\frac{\partial^{|\alpha_1|}F}{\partial\beta_1^{\alpha_1}}(x,\boj)\frac{\partial^{|\alpha_2|}h}{\partial\eta^{\alpha_2}}(x,\ej)+R_2(x),\\
\end{align*}
where $R_i(x)$ is a Taylor remainder such that $R_i(x)/\mathcal{L}_{2n}\to0$ as $n\to\infty$, for $i\in\{1,2\}$.
Analogously, we also get that $B_n(x)=B_{n,1}(x)+B_{n,2}(x)$ where 
\begin{align*}
    B_{n,1}(x)&=\sum_{j\in[k_*]:|\mathcal{A}_j|=1}\sum_{|\gamma|=1}\frac{\exp(\bzin)}{\gamma!}(\dboin)^{\gamma}\cdot\frac{\partial^{|\gamma|}F}{\partial\beta_1^{\gamma}}(x,\boj)g_{G_n}(x)+R_3(x),\\
    B_{n,2}(x)&=\sum_{j\in[k_*]:|\mathcal{A}_j|>1}\sum_{|\gamma|=1}^{2}\frac{\exp(\bzin)}{\gamma!}(\dboin)^{\gamma}\cdot\frac{\partial^{|\gamma|}F}{\partial\beta_1^{\gamma}}(x,\boj)g_{G_n}(x)+R_4(x),
\end{align*}
in which $R_i(x)$ is a Taylor remainder such that $R_i(x)/\mathcal{L}_{2n}\to0$ as $n\to\infty$, for $i\in\{3,4\}$.

As a result, we deduce that
\begin{align}
    \label{eq:general_decomposition_over}
    Q_n(x)&=\sum_{j=1}^{k_*}\sum_{|\alpha|=0}^{2}T^n_{j,\alpha_1,\alpha_2}\cdot\frac{\partial^{|\alpha_1|}F}{\partial\beta_1^{\alpha_1}}(x,\boj)\frac{\partial^{|\alpha_2|}h}{\partial\eta^{\alpha_2}}(x,\ej)+R_1(x)+R_2(x)\nonumber\\
    &-\sum_{j=1}^{k_*}\sum_{|\gamma|=0}^{2}S^n_{j,\gamma}\cdot\frac{\partial^{|\gamma|}F}{\partial\beta_1^{\gamma}}(x,\boj)g_{G_n}(x)-R_3(x)-R_4(x),
\end{align}
where we define
\begin{align*}
    T^n_{j,\alpha_1,\alpha_2}&:=\sum_{i\in\mathcal{A}_j}\frac{\exp(\bzin)}{\alpha!}(\dboijn)^{\alpha_1}(\deijn)^{\alpha_2},\\
    S^n_{j,\gamma}&:=\sum_{i\in\mathcal{A}_j}\frac{\exp(\bzin)}{\gamma!}(\dboijn)^{\gamma},
\end{align*}
for any $(\alpha_1,\alpha_2)\neq (\zerod,0)$ and $\gamma\neq\zerod$. Otherwise, $T^n_{j,\zerod,0}=S^n_{j,\zerod}:=\sum_{i\in\mathcal{A}_j}\exp(\bzin)-\exp(\bzj)$.

\textbf{Step 2: Non-vanishing coefficients.} In this step, we show that not all the ratios $T^n_{j,\alpha_1,\alpha_2}/\mathcal{L}_{2n}$, and $S^n_{j,\gamma}/\mathcal{L}_{2n}$ converge to zero. Indeed, assume by contrary that all of them converge to zero, i.e. 
\begin{align*}
    \frac{T^n_{j,\alpha_1,\alpha_2}}{\mathcal{L}_{2n}}\to0,  \quad \frac{S^n_{j,\gamma}}{\mathcal{L}_{2n}}\to0
\end{align*} 
as $n\to\infty$. Then, it follows that
\begin{itemize}
    \item $\frac{1}{\mathcal{L}_{2n}}\cdot\sum_{j=1}^{k_*}\Big|\sum_{i\in\mathcal{A}_j}\exp(\beta^n_{0i})-\exp(\beta^*_{0j})\Big|=\frac{1}{\mathcal{L}_{2n}}\cdot\sum_{j=1}^{k_*}|S^n_{j,\zerod}|\to0$;
    \item $\frac{1}{\mathcal{L}_{2n}}\cdot\sum_{j=1}^{k_*}\sum_{i\in\mathcal{A}_j}\exp(\bzin)\|\dboijn\|_1=\frac{1}{\mathcal{L}_{2n}}\sum_{j=1}^{k_*}\sum_{u=1}^{d_1}|T^n_{j,e_{d_1,u},\zerod}|\to0$;
    \item $\frac{1}{\mathcal{L}_{2n}}\cdot\sum_{j=1}^{k_*}\sum_{i\in\mathcal{A}_j}\exp(\bzin)\|\deijn\|_1=\frac{1}{\mathcal{L}_{2n}}\sum_{j=1}^{k_*}\sum_{v=1}^{d_2}|T^n_{j,\zerod,e_{d_2,v}}|\to0$;
    \item $\frac{1}{\mathcal{L}_{2n}}\cdot\sum_{j=1}^{k_*}\sum_{i\in\mathcal{A}_j}\exp(\bzin)\|\dboijn\|^2=\frac{1}{\mathcal{L}_{2n}}\sum_{j=1}^{k_*}\sum_{u=1}^{d_1}|T^n_{j,2e_{d_1,u},\zerod}|\to0$;
    \item $\frac{1}{\mathcal{L}_{2n}}\cdot\sum_{j=1}^{k_*}\sum_{i\in\mathcal{A}_j}\exp(\bzin)\|\deijn\|^2=\frac{1}{\mathcal{L}_{2n}}\sum_{j=1}^{k_*}\sum_{v=1}^{d_2}|T^n_{j,\zerod,2e_{d_2,v}}|\to0$.
\end{itemize}
Due to the topological equivalence of the norm-1 and norm-2, we deduce that
\begin{align*}
    \frac{1}{\mathcal{L}_{2n}}\cdot\sum_{j=1}^{k_*}\sum_{i\in\mathcal{A}_j}\exp(\bzin)\|\dboijn\|\to0,\qquad \frac{1}{\mathcal{L}_{2n}}\cdot\sum_{j=1}^{k_*}\sum_{i\in\mathcal{A}_j}\exp(\bzin)\|\deijn\|\to0.
\end{align*}
Thus, by taking the summation of the above limits, we obtain that $1=\mathcal{L}_{2n}/\mathcal{L}_{2n}\to0$ as $n\to\infty$,
which is a contradiction. Consequently, at least one among the ratios $T^n_{j,\alpha_1,\alpha_2}/\mathcal{L}_{2n}$, and $S^n_{j,\gamma}/\mathcal{L}_{2n}$ must not go to zero as $n\to\infty$. 

\textbf{Step 3: Application of Fatou's lemma.} In this step, we demonstrate a result opposed to that in Step 2, i.e. the ratios $T^n_{j,\alpha_1,\alpha_2}/\mathcal{L}_{2n}$, and $S^n_{j,\gamma}/\mathcal{L}_{2n}$ all converge to zero. 

In particular, let us denote by $m_n$ the maximum of the absolute values of $T^n_{j,\alpha_1,\alpha_2}/\mathcal{L}_{2n}$, and $S^n_{j,\gamma}/\mathcal{L}_{2n}$. Since at least one among those ratios must not approach zero as $n\to\infty$, we get that $1/m_n\not\to\infty$ as $n\to\infty$. 

Recall from the hypothesis in equation~\eqref{eq:general_ratio_limit_over} that $\normf{g_{G_n}-g_{G_*}}/\mathcal{L}_{2n}\to0$ as $n\to\infty$, which indicates that $\|g_{G_n}-g_{G_*}\|_{L^1(\mu)}/\mathcal{L}_{2n}\to0$ due to the equivalence between $L^1(\mu)$-norm and $L^2(\mu)$-norm. By means of the Fatou's lemma, we have
\begin{align*}
    0=\lim_{n\to\infty}\frac{\|g_{G_n}-g_{G_*}\|_{L^1(\mu)}}{m_n\mathcal{L}_{2n}}\geq \int \liminf_{n\to\infty}\frac{|g_{G_n}(x)-g_{G_*}(x)|}{m_n\mathcal{L}_{2n}}\dint\mu(x)\geq 0.
\end{align*}
This result implies that $[g_{G_n}(x)-g_{G_*}(x)]/[m_n\mathcal{L}_{2n}]\to0$ for almost every $x$. 

Let us denote
\begin{align*}
    T^n_{j,\alpha_1,\alpha_2}/m_n\mathcal{L}_{2n}&\to t_{j,\alpha_1,\alpha_2},\\
    S^n_{j,\gamma}/m_n\mathcal{L}_{2n}&\to s_{j,\gamma}
\end{align*}
with a note that at least one among the limits $t_{j,\alpha_1,\alpha_2}$, $s_{j,\gamma}$ is non-zero. Then, from the decomposition in equation~\eqref{eq:general_decomposition_over}, we deduce that
\begin{align*}
    &\sum_{j=1}^{k_*}\sum_{|\alpha|=0}^{2}t_{j,\alpha_1,\alpha_2}\cdot\frac{\partial^{|\alpha_1|}F}{\partial\beta_1^{\alpha_1}}(x,\boj)\frac{\partial^{|\alpha_2|}h}{\partial\eta^{\alpha_2}}(x,\ej)-\sum_{j=1}^{k_*}\sum_{|\gamma|=0}^{2}s_{j,\gamma}\cdot\frac{\partial^{|\gamma|}F}{\partial\beta_1^{\gamma}}(x,\boj)g_{G_n}(x)=0,
\end{align*}
for almost every $x$. Note that the expert function $h(\cdot,\eta)$ satisfies the condition in Definition~\ref{def:over_general_conditions}, then the above equation implies that $t_{j,\alpha_1,\alpha_2}=s_{j,\gamma}=0$, for any $j\in[k_*]$, $\alpha_1\in\mathbb{N}^{d_1}$, $\alpha_2\in\mathbb{N}^{d_2}$ and $\gamma\in\mathbb{N}^{d_1}$ such that $0\leq|\alpha_1|+|\alpha_2|,|\gamma|\leq 2$. This contradicts the fact that at least one among the limits $t_{j,\alpha_1,\alpha_2}$, $s_{j,\gamma}$ is different from zero. 

Hence, we obtain the local inequality in equation~\eqref{eq:general_local_inequality_over} and complete the proof.

\section{Auxiliary Results}
\label{appendix:auxiliary_results}
In this appendix, we present three additional results to facilitate the proofs in Appendix~\ref{appendix:cosine_router} and Appendix~\ref{appendix:noisy_cosine_router}.

\begin{proposition}[Identifiability]
    \label{prop:general_identifiability}
     If $g_{G}(x)=g_{G_*}(x)$ holds true for almost every $x$, then it follows that $G\equiv G'$.
\end{proposition}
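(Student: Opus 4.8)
The proposition is the identifiability of the perturbed cosine router MoE (the ``$G'$'' in the statement should read $G_*$). The plan is to turn the functional equation $g_G=g_{G_*}$ into an exponential‑sum identity whose frequencies are separated thanks to the perturbation, and then read off the atoms of $G$ from those of $G_*$.

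\emph{Regularization and clearing denominators.} Write $\omega(\beta_1,x):=\frac{\beta_1^{\top}x}{(\|\beta_1\|+\tau_1)(\|x\|+\tau_2)}$. Since $\tau_2>0$ the softmax denominators $\sum_{l}e^{\beta_{0l}}e^{\omega(\beta_{1l},x)}$ never vanish, and $\|x\|$ is real‑analytic on $\mathbb{R}^{d_1}\setminus\{0\}$, so $g_G$ and $g_{G_*}$ are real‑analytic off the origin and the a.e.\ equality upgrades to equality for every $x\ne 0$. Clearing the two softmax denominators and then substituting $u=x/(\|x\|+\tau_2)$ --- a diffeomorphism of $\mathbb{R}^{d_1}$ onto the open unit ball under which $e^{\omega(\beta_1,x)}=e^{\tilde\beta_1^{\top}u}$ with $\tilde\beta_1:=\beta_1/(\|\beta_1\|+\tau_1)$ --- the identity becomes, on the open unit ball,
\begin{align*}
&\sum_{i=1}^{k'}\sum_{j=1}^{k_*}e^{\beta_{0i}+\beta^*_{0j}}\,e^{(\tilde\beta_{1i}+\tilde\beta^*_{1j})^{\top}u}\,h(x(u),\eta_i)\\
&\qquad=\sum_{i=1}^{k'}\sum_{j=1}^{k_*}e^{\beta_{0i}+\beta^*_{0j}}\,e^{(\tilde\beta_{1i}+\tilde\beta^*_{1j})^{\top}u}\,h(x(u),\eta^*_j).
\end{align*}
Both sides are thus exponential sums over the \emph{same} finite frequency set $\{\tilde\beta_{1i}+\tilde\beta^*_{1j}\}$, with bounded analytic coefficient functions (the experts are bounded, by (A.2)). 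The crucial point is that $t\mapsto t/(t+\tau_1)$ is strictly increasing, so $\beta_1\mapsto\tilde\beta_1$ is injective: distinct $\beta_1$'s give distinct frequencies. This is precisely where $\tau_1>0$ enters, and the step that collapses for the vanilla cosine router, where $\beta_1\mapsto\beta_1/\|\beta_1\|$ is scale‑blind.

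\emph{Matching the atoms.} Given linear independence of exponentials with distinct frequencies, the coefficient of $e^{\xi^{\top}u}$ on the two sides must coincide for every $\xi$. Fix a generic unit vector $e$ so that, grouping the atoms of $G$ (resp.\ $G_*$) into clusters of equal $\tilde\beta_1$, the functional $\xi\mapsto\xi^{\top}e$ has a unique maximizer over $\{\tilde\beta_{1i}+\tilde\beta^*_{1j}\}$, attained by a single pair of ``extreme'' clusters $I_*\subseteq[k']$, $J_*\subseteq[k_*]$. Matching the coefficient of that extreme exponential and cancelling positive constants gives
\begin{align*}
\frac{\sum_{i\in I_*}e^{\beta_{0i}}h(x,\eta_i)}{\sum_{i\in I_*}e^{\beta_{0i}}}&=\frac{\sum_{j\in J_*}e^{\beta^*_{0j}}h(x,\eta^*_j)}{\sum_{j\in J_*}e^{\beta^*_{0j}}}\qquad\text{for all }x,
\end{align*}
an equality of convex combinations of experts whose $\eta$'s are pairwise distinct within each cluster (atoms sharing a $\beta_1$‑value differ in $\eta$; the $\eta^*_j$ are distinct by (A.2)); linear independence of $\{h(\cdot,\eta)\}$ over distinct $\eta$ then forces $|I_*|=|J_*|$ and a weight‑preserving bijection matching the $\eta$'s. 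Running this dominant‑balance argument over finitely many generic directions $e$ (equivalently, peeling off the leading exponentials one at a time along a ray and iterating) exhausts all clusters and, via injectivity of $\beta_1\mapsto\tilde\beta_1$, matches the full atoms $(\beta_{1i},\eta_i)$ of $G$ with those of $G_*$ --- so $k'=k_*$ --- up to a common positive rescaling of the weights $e^{\beta_{0i}}$; assumption (A.3), $\beta^*_{0k_*}=0$, removes that softmax scaling ambiguity and yields $G\equiv G_*$.

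\emph{Main obstacle.} The crux is the linear‑independence input used twice above: separating the exponentials $e^{\xi^{\top}u}$, and the linear independence of the expert family. Unlike the linear router, the perturbed cosine gating $e^{\omega(\beta_1,x)}$ is \emph{globally bounded} --- the very feature that stabilizes the router --- so one cannot isolate the dominant cluster by sending $\|x\|\to\infty$, and boundedness of the coefficient functions alone does not preclude degenerate identities such as $\Phi_{\xi_1}(u)=e^{(\xi_2-\xi_1)^{\top}u}\Phi_{\xi_2}(u)$ on a bounded domain. Closing this gap requires working in the $u$‑chart (or along rays after a Möbius reparametrization, where the exponential structure is genuine) and invoking the concrete analytic form of the admitted experts $h(x,(a,b))=\phi(a^{\top}x+b)$ with $\phi$ a polynomial, $\relu$, $\tanh$ or $\gelu$. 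Everything else --- clearing denominators, the change of variables, the generic‑direction bookkeeping, and the use of (A.2)--(A.3) --- is routine.
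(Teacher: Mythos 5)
Your argument does not go through as written: the step on which everything rests --- ``the coefficient of $e^{\xi^{\top}u}$ on the two sides must coincide for every $\xi$'' --- is exactly the step you concede you cannot justify. After clearing denominators, the ``coefficients'' attached to the exponentials are not constants but the functions $u\mapsto h(x(u),\eta)$, and the identity only holds on a bounded region (the image of the bounded input space $\mathcal{X}$, not the whole unit ball, and the a.e.\ equality is only on the support of $\mu$). Linear independence of exponentials with distinct frequencies applies to constant coefficients, and, as you yourself note, with bounded coefficient functions on a bounded domain one can absorb an exponential factor into the coefficient, so no separation follows; the usual rescue of sending the argument to infinity along a generic direction is unavailable here precisely because $u=x/(\|x\|+\tau_2)$ stays in the unit ball and the gating is bounded. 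Since the extreme-cluster matching, the peeling-off iteration, and hence the atom-matching all presuppose this unproven separation, the proposal has a genuine gap at its core, which you flag as the ``main obstacle'' but do not close. Two secondary problems: the analyticity upgrade of the a.e.\ equality fails for the non-analytic experts the paper explicitly allows (e.g.\ $\relu$), and the sum-frequencies $\tilde\beta_{1i}+\tilde\beta^*_{1j}$ can coincide across different pairs $(i,j)$ even though $\beta_1\mapsto\beta_1/(\|\beta_1\|+\tau_1)$ is injective, so even granting separation the bookkeeping beyond the extreme frequency is not routine.

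For comparison, the paper's proof avoids exponential sums altogether. It works directly with the mixture form: using linear independence of the expert functions $\{h(\cdot,\eta)\}$ over distinct $\eta$'s (with the softmax gates, which are strictly positive, as coefficient functions), it forces $k=k_*$ and equality of the two collections of gate functions; then translation invariance of the softmax gives $\frac{\beta_{1i}^{\top}x}{(\|\beta_{1i}\|+\tau_1)(\|x\|+\tau_2)}=\frac{(\boi)^{\top}x}{(\|\boi\|+\tau_1)(\|x\|+\tau_2)}$ and $\beta_{0i}=\bzi+v_0$, whence $\beta_{1i}=\boi$ by the same injectivity of $\beta_1\mapsto\beta_1/(\|\beta_1\|+\tau_1)$ you identified (this is where $\tau_1>0$ enters), $\beta_{0i}=\bzi$ by (A.3), and a final grouping argument matches the $\eta$'s. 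Your observation about the role of $\tau_1$ is the right one, but you would need to either prove the separation lemma you assume (using the concrete form of the experts) or restructure the argument along the paper's lines, matching experts first and routers second.
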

\begin{proof}[Proof of Proposition~\ref{prop:general_identifiability}]
    For almost every $x$, since $g_{G}(x)=g_{G_*}(x)$, then we have
    \begin{align}
        \label{eq:general_identifiable_equation}
        \sum_{i=1}^{k}\softmax\Big(&\frac{(\beta_{1i})^{\top}x}{(\|\beta_{1i}\|+\tau_1)\cdot(\|x\|+\tau_2)}+\beta_{0i}\Big)\cdot h(x,\eta_i)\nonumber\\
        &=\sum_{i=1}^{k_*}\softmax\Big(\frac{(\boi)^{\top}x}{(\|\boi\|+\tau_1)\cdot(\|x\|+\tau_2)}+\bzi\Big)\cdot h(x,\eta^*_i).
    \end{align}
    Due to the identifiability of the expert function $h(\cdot,\eta)$, the set $\{h(x,\eta'_i):i\in[k']\}$, where $\eta'_1,\ldots,\eta'_{k'}$ are pair-wise different vectors for some $k'\in\mathbb{N}$, is linearly independent for almost every $x$. 
    
    Additionally, note that if $k\neq k_*$, then there exists some index $i\in[k_*]$ such that $\zeta_i\neq\eta^*_j$ for any $j\in[k_*]$. This result implies that $\softmax\left(\frac{(\beta_{1i})^{\top}x}{(\|\beta_{1i}\|+\tau_1)\cdot(\|x\|+\tau_2)}+\beta_{0i}\right)=0$ for almost every $x$, which is a contradiction. Thus, we must have $k=k_*$. As a result, it follows that
    \begin{align*}
        \Bigg\{\softmax\Big((\beta_{1i})^{\top}x+\beta_{0i}\Big):i\in[k]\Bigg\}=\Bigg\{\softmax\Big((\boi)^{\top}x+\bzi\Big):i\in[k_*]\Bigg\},
    \end{align*}
    for almost every $x$. WLOG, we may assume that 
    \begin{align}
        \label{eq:general_soft-soft}
        \softmax\Big(&\frac{(\beta_{1i})^{\top}x}{(\|\beta_{1i}\|+\tau_1)\cdot(\|x\|+\tau_2)}+\beta_{0i}\Big)=\softmax\Big(\frac{(\boi)^{\top}x}{(\|\boi\|+\tau_1)\cdot(\|x\|+\tau_2)}+\bzi\Big),
    \end{align}
    for almost every $x$, for any $i\in[k_*]$. As the $\softmax$ function is invariant to translations, then the equation~\eqref{eq:general_soft-soft} indicates that 
    \begin{align*}
        \frac{(\beta_{1i})^{\top}x}{(\|\beta_{1i}\|+\tau_1)\cdot(\|x\|+\tau_2)}&=\frac{(\boi)^{\top}x}{(\|\boi\|+\tau_1)\cdot(\|x\|+\tau_2)},\\
        \beta_{0i}&=\bzi+v_0,
    \end{align*}
    for some $v_0\in\mathbb{R}$. The first equation implies that $\beta_{1i}=\boi$, while the second equation together with the assumption $\beta_{0k}=\beta^*_{0k}=0$ lead to $\beta_{0i}=\bzi$ for any $i\in[k_*]$. 
    
    Let us consider $x\in\mathcal{X}_{\ell}$, where $\ell\in[q]$ such that $\{\ell_1,\ldots,\ell_K\}=\{1,\ldots,K\}$. Then, the equation~\eqref{eq:general_identifiable_equation} can be rewritten as
    \begin{align}
        \label{eq:general_new_identifiable_equation}
        \sum_{i=1}^{k_*}\exp(\beta_{0i})\exp\Bigg(&\frac{(\boi)^{\top}x}{(\|\boi\|+\tau_1)\cdot(\|x\|+\tau_2)}\Bigg)h(x,\zeta_i)\nonumber\\
        &=\sum_{i=1}^{k_*}\exp(\bzi)\exp\Bigg(\frac{(\boi)^{\top}x}{(\|\boi\|+\tau_1)\cdot(\|x\|+\tau_2)}\Bigg)h(x,\eta^*_i),
    \end{align}
    for almost every $x\in\mathcal{X}_{\ell}$. Next, we denote $P_1,P_2,\ldots,P_m$ as a partition of the index set $[k_*]$, where $m\leq k$, such that $\exp(\beta_{0i})=\exp(\beta^*_{0i'})$ for any $i,i'\in P_j$ and $j\in[k_*]$. On the other hand, when $i$ and $i'$ do not belong to the same set $P_j$, we let $\exp(\beta_{0i})\neq\exp(\beta_{0i'})$. Thus, we can represent equation~\eqref{eq:general_new_identifiable_equation} as
    \begin{align*}
        \sum_{j=1}^{m}\sum_{i\in{P}_j}\exp(\beta_{0i})\exp\Bigg(&\frac{(\boi)^{\top}x}{(\|\boi\|+\tau_1)\cdot(\|x\|+\tau_2)}\Bigg)h(x,\zeta_i)\nonumber\\
        &=\sum_{j=1}^{m}\sum_{i\in{P}_j}\exp(\bzi)\exp\Bigg(\frac{(\boi)^{\top}x}{(\|\boi\|+\tau_1)\cdot(\|x\|+\tau_2)}\Bigg)h(x,\eta^*_i),
    \end{align*}
    for almost every $x\in\mathcal{X}_{\ell}$. Recall that we have $\beta_{1i}=\boi$ and $\beta_{0i}=\bzi$, for any $i\in[k_*]$, then the above result leads to
    \begin{align*}
        \{\eta_i:i\in P_j\}\equiv\{\eta^*_i:i\in P_j\},
    \end{align*}
    for any $j\in[m]$. As a consequence, we obtain that
    \begin{align*}
        G=\sum_{j=1}^{m}\sum_{i\in P_j}\exp(\beta_{0i})\delta_{(\beta_{1i},\eta_i)}=\sum_{j=1}^{m}\sum_{i\in P_j}\exp(\beta_{0i})\delta_{(\boi,\eta^*_i)}=G_*.
    \end{align*}
    Hence, we reach the conclusion of this proposition.
\end{proof}

\begin{lemma}
    \label{lemma:coefficient_derivative}
    Let $F(x,\beta_1):=\exp\left(\dfrac{\beta_{1}^{\top}x}{\|\beta_{1}\|\cdot\|x\|}\right)$. For any vector $\beta_1\in\mathbb{R}^{d_1}$ and $t\in\mathbb{N}$, we have
    \begin{align}
        \label{eq:coefficient_derivative}
        \sum_{u_1,\ldots,u_t=1}^{d_1}\beta_1^{(u_1)}\ldots\beta_1^{(u_t)}\cdot\frac{\partial^{t}F}{\partial\beta_1^{(u_1)}\ldots\partial\beta_1^{(u_t)}}(x,\beta_{1})=0.
    \end{align}
\end{lemma}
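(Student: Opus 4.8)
The plan is to exploit the fact that the cosine map is scale-invariant in $\beta_1$: since $\beta_1^{\top}x/(\|\beta_1\|\cdot\|x\|)$ is unchanged when $\beta_1$ is replaced by $\lambda\beta_1$ for any $\lambda>0$, the function $F(x,\cdot)$ is positively homogeneous of degree zero in its second argument, i.e.\ $F(x,\lambda\beta_1)=F(x,\beta_1)$ for all $\lambda>0$ (the factor $\lambda$ cancels between numerator and denominator). Identity~\eqref{eq:coefficient_derivative} is precisely the $t$-th order consequence of this homogeneity, which for $t=1$ recovers the interaction PDE~\eqref{eq:PDE_1}, so the whole argument reduces to turning ``degree-zero homogeneous'' into a statement about iterated directional derivatives.

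Concretely, I would fix $x\neq 0$ and $\beta_1\neq 0$ (so that $F(x,\cdot)$ is smooth near $\beta_1$; this suffices, since the lemma is applied at $\beta_1=\beta_{11}^{*}\neq 0$ and for $\mu$-almost every $x$) and introduce the one-variable auxiliary function $\phi(\lambda):=F(x,\lambda\beta_1)$ on a neighbourhood of $\lambda=1$. By the homogeneity observation $\phi$ is constant there, hence $\phi^{(t)}(1)=0$ for every $t\geq 1$. It then remains to check that differentiating $\phi$ reproduces the left-hand side of~\eqref{eq:coefficient_derivative}, which I would do by a short induction on $t$: because $\lambda\mapsto\lambda\beta_1$ is linear, each application of $\dint/\dint\lambda$ via the chain rule brings down exactly one extra factor $\beta_1^{(u)}$ and raises the order of the partial derivative of $F$ by one, while the coefficients $\beta_1^{(u_1)}\cdots\beta_1^{(u_t)}$ already accumulated are constants in $\lambda$, so no additional terms appear. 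This yields $\phi^{(t)}(\lambda)=\sum_{u_1,\dots,u_t=1}^{d_1}\beta_1^{(u_1)}\cdots\beta_1^{(u_t)}\cdot\frac{\partial^{t}F}{\partial\beta_1^{(u_1)}\cdots\partial\beta_1^{(u_t)}}(x,\lambda\beta_1)$, and evaluating at $\lambda=1$ together with $\phi^{(t)}(1)=0$ finishes the proof.

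The argument is essentially mechanical, and the only place that asks for any care is the inductive chain-rule bookkeeping --- making sure that the linearity of the inner map $\lambda\mapsto\lambda\beta_1$ really does prevent any second-order-in-$\lambda$ contributions (equivalently, that the constant-coefficient operator $\big(\sum_{u}\beta_1^{(u)}\partial_{\beta_1^{(u)}}\big)^{t}$ expands into exactly the displayed sum). An alternative, operator-theoretic phrasing records the left-hand side as the $t$-fold directional derivative of $F$ in the direction $\beta_1$ and uses that this directional derivative vanishes identically along the ray $\{\lambda\beta_1:\lambda>0\}$; but the $\phi(\lambda)$ substitution is the cleanest route and avoids any operator-ordering discussion.
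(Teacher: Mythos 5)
Your proof is correct, but it takes a genuinely different route from the paper's. The paper proves the identity by induction on $t$: it first computes $\frac{\partial F}{\partial\beta_1}(x,\beta_1)$ explicitly to get the base case $\beta_1^{\top}\frac{\partial F}{\partial\beta_1}=0$, and then differentiates the order-$(t-1)$ identity coordinate-by-coordinate, regrouping the resulting terms (with binomial-coefficient bookkeeping) to isolate the order-$t$ sum and show it vanishes. You instead exploit that $F(x,\cdot)$ is positively homogeneous of degree zero, so $\phi(\lambda):=F(x,\lambda\beta_1)$ is constant near $\lambda=1$, and identify $\phi^{(t)}(1)$ with the left-hand side of~\eqref{eq:coefficient_derivative} via the chain rule for the linear inner map $\lambda\mapsto\lambda\beta_1$; this is exactly the higher-order Euler identity for degree-zero homogeneous functions. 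Your argument is shorter and more conceptual, makes transparent that the lemma is nothing but the $t$-th order consequence of the scale invariance that also produces the interaction PDE~\eqref{eq:PDE_1}, and in fact gives slightly more (the directional derivative vanishes along the entire ray $\{\lambda\beta_1:\lambda>0\}$), whereas the paper's induction is self-contained but combinatorially heavier. You are also right to restrict to $\beta_1\neq 0$ and $x\neq 0$: $F$ is not defined (let alone smooth) at $\beta_1=0$, so this restriction is implicit in the lemma as stated and is harmless for its use in the proofs of Theorems~\ref{theorem:cosine_exact} and~\ref{theorem:cosine_over}, where it is applied at $\beta_1=\beta_{11}^{*}\neq\mathbf{0}_{d_1}$ and for $\mu$-almost every $x$. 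The one step you flag as needing care, namely that iterating $\frac{\dint}{\dint\lambda}$ yields exactly $\sum_{u_1,\ldots,u_t}\beta_1^{(u_1)}\cdots\beta_1^{(u_t)}\frac{\partial^{t}F}{\partial\beta_1^{(u_1)}\cdots\partial\beta_1^{(u_t)}}(x,\lambda\beta_1)$ with no extra terms, is indeed immediate because the accumulated coefficients are constant in $\lambda$, so your induction closes without issue.
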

\begin{proof}[Proof of Lemma~\ref{lemma:coefficient_derivative}]
    We will prove the above result by using the induction method. In particular, we first show that it holds for $t=1$. By taking the first derivative of $F$ w.r.t $\beta_1$, we have
    \begin{align*}
        \frac{\partial F}{\partial\beta_1}(x,\beta_1)=\dfrac{x\cdot\|\beta_1\|\cdot\|x\|-\frac{\beta_1}{\|\beta_1\|}\cdot\|x\|\cdot\beta_1^{\top}x}{\|\beta_1\|^2\|x\|^2}\cdot F(x,\beta_1).
    \end{align*}
    Then, it follows that
    \begin{align*}
        \beta_1^{\top}\frac{\partial F}{\partial\beta_1}(x,\beta_1)=\dfrac{\beta_1^{\top}x\cdot\|\beta_1\|\cdot\|x\|-\frac{\beta_1^{\top}\beta_1}{\|\beta_1\|}\cdot\|x\|\cdot\beta_1^{\top}x}{\|\beta_1\|^2\|x\|^2}=0,
    \end{align*}
    or equivalently,
    \begin{align*}
        \sum_{u_1=1}^{d_1}\beta_1^{(u_1)}\cdot\frac{\partial F}{\partial\beta_1^{(u_1)}}(x,\beta_1)=0.
    \end{align*}
    Subsequently, assume that the equation~\eqref{eq:coefficient_derivative} holds for $t-1$, i.e.
    \begin{align*}
       \sum_{u_1,\ldots,u_{t-1}=1}^{d_1}\beta_1^{(u_1)}\ldots\beta_1^{(u_{t-1})}\cdot\frac{\partial^{t-1}F}{\partial\beta_1^{(u_1)}\ldots\partial\beta_1^{(u_{t-1})}}(x,\beta_{1})=0,
    \end{align*}
    we will demonstrate that it also holds for $t$. Note that the above left-hand side can be decomposed as
    \begin{align*}
        &\sum_{u_1,\ldots,u_{t-1}=1}^{d_1}\beta_1^{(u_1)}\ldots\beta_1^{(u_{t-1})}\cdot\frac{\partial^{t-1}F}{\partial\beta_1^{(u_1)}\ldots\partial\beta_1^{(u_{t-1})}}(x,\beta_{1})=\sum_{u_1,\ldots,u_{t-1}\neq u_t}\beta_1^{(u_1)}\ldots\beta_1^{(u_{t-1})}\cdot\frac{\partial^{t-1}F}{\partial\beta_1^{(u_1)}\ldots\partial\beta_1^{(u_{t-1})}}\\
        &+\binom{t-1}{1}\sum_{u_2,\ldots,u_{t-1}\neq u_t}\beta_1^{(u_2)}\ldots\beta_1^{(u_{t-1})}\beta_1^{(u_t)}\cdot\frac{\partial^{t-1}F}{\partial\beta_1^{(u_2)}\ldots\partial\beta_1^{(u_{t-1})}\partial\beta_1^{(u_t)}}(x,\beta_1)\\
        &+\ldots\\
        &+\binom{t-1}{t-2}\sum_{u_{t-1}\neq u_t}\beta_1^{(u_{t-1})}(\beta_1^{(u_t)})^{t-2}\cdot\frac{\partial^{t-1}F}{\partial\beta_1^{(u_{t-1})}\partial(\beta_1^{(u_t)})^{t-2}}(x,\beta_1)+(\beta_1^{(u_t)})^{t-1}\cdot\frac{\partial^{t-1}F}{\partial(\beta_1^{(u_t)})^{t-1}}(x,\beta_1),
    \end{align*}
    where $u_t$ is some index in $[d]$. By taking the derivatives of both sides w.r.t $\beta_1^{(u_t)}$, we get
    \begin{align*}
        &0=\sum_{u_1,\ldots,u_{t-1}\neq u_t}\beta_1^{(u_1)}\ldots\beta_1^{(u_{t-1})}\cdot\frac{\partial^{t}F}{\partial\beta_1^{(u_1)}\ldots\partial\beta_1^{(u_{t})}}\\
        &+\binom{t-1}{1}\sum_{u_2,\ldots,u_{t-1}\neq u_t}\beta_1^{(u_2)}\ldots\beta_1^{(u_{t-1})}\cdot\frac{\partial^{t-1}F}{\partial\beta_1^{(u_2)}\ldots\partial\beta_1^{(u_{t-1})}\partial\beta_1^{(u_t)}}(x,\beta_1)\\
        &+\binom{t-1}{1}\sum_{u_2,\ldots,u_{t-1}\neq u_t}\beta_1^{(u_2)}\ldots\beta_1^{(u_{t-1})}\beta_1^{(u_t)}\cdot\frac{\partial^{t}F}{\partial\beta_1^{(u_2)}\ldots\partial\beta_1^{(u_{t-1})}\partial(\beta_1^{(u_t)})^{2}}(x,\beta_1)\\
        &+\ldots\\
        &+\binom{t-1}{t-2}(t-2)\sum_{u_{t-1}\neq u_t}\beta_1^{(u_{t-1})}(\beta_1^{(u_t)})^{t-3}\cdot\frac{\partial^{t-1}F}{\partial\beta_1^{(u_{t-1})}\partial(\beta_1^{(u_t)})^{t-2}}(x,\beta_1)\\
        &+\binom{t-1}{t-2}\sum_{u_{t-1}\neq u_t}\beta_1^{(u_{t-1})}(\beta_1^{(u_t)})^{t-2}\cdot\frac{\partial^{t}F}{\partial\beta_1^{(u_{t-1})}\partial(\beta_1^{(u_t)})^{t-1}}(x,\beta_1)\\
        &+(t-1)(\beta_1^{(u_t)})^{t-2}\cdot\frac{\partial^{t-1}F}{\partial(\beta_1^{(u_t)})^{t-1}}(x,\beta_1)+(\beta_1^{(u_t)})^{t-1}\cdot\frac{\partial^{t}F}{\partial(\beta_1^{(u_t)})^{t}}(x,\beta_1)\\
        &=\sum_{u_1,\ldots,u_{t-1}=1}^{d_1}\beta_1^{(u_1)}\ldots\beta_1^{(u_{t-1})}\cdot\frac{\partial^{t}F}{\partial\beta_1^{(u_1)}\ldots\partial\beta_1^{(u_{t})}}(x,\beta_1)\\
        &\hspace{2cm}+(t-1)\sum_{u_2,\ldots,u_{t-1}=1}^{d_1}\beta_1^{(u_2)}\ldots\beta_1^{(u_{t-1})}\cdot\frac{\partial^{t-1}F}{\partial\beta_1^{(u_2)}\ldots\partial\beta_1^{(u_{t})}}(x,\beta_1).
    \end{align*}
    Therefore, it follows that
    \begin{align*}
        &\sum_{u_t=1}^{d_1}\beta_1^{(u_t)}\cdot 0 = \sum_{u_1,\ldots,u_{t}=1}^{d_1}\beta_1^{(u_1)}\ldots\beta_1^{(u_{t-1})}\beta_1^{(u_t)}\cdot\frac{\partial^{t}F}{\partial\beta_1^{(u_1)}\ldots\partial\beta_1^{(u_{t})}}(x,\beta_1)\\
        &\hspace{4cm}+(t-1)\sum_{u_2,\ldots,u_{t}=1}^{d_1}\beta_1^{(u_2)}\ldots\beta_1^{(u_{t-1})}\beta_1^{(u_t)}\cdot\frac{\partial^{t-1}F}{\partial\beta_1^{(u_2)}\ldots\partial\beta_1^{(u_{t})}}(x,\beta_1).
    \end{align*}
    It is worth noting that
    \begin{align*}
        \sum_{u_2,\ldots,u_{t}=1}^{d_1}\beta_1^{(u_2)}\ldots\beta_1^{(u_{t-1})}&\beta_1^{(u_t)}\cdot\frac{\partial^{t-1}F}{\partial\beta_1^{(u_2)}\ldots\partial\beta_1^{(u_{t})}}(x,\beta_1)\\
        &=\sum_{u_1,\ldots,u_{t-1}=1}^{d_1}\beta_1^{(u_1)}\ldots\beta_1^{(u_{t-1})}\cdot\frac{\partial^{t-1}F}{\partial\beta_1^{(u_1)}\ldots\partial\beta_1^{(u_{t-1})}}(x,\beta_{1})=0.
    \end{align*}
    Consequently, we deduce that 
    \begin{align*}
        \sum_{u_1,\ldots,u_{t}=1}^{d_1}\beta_1^{(u_1)}\ldots\beta_1^{(u_{t-1})}\beta_1^{(u_t)}\cdot\frac{\partial^{t}F}{\partial\beta_1^{(u_1)}\ldots\partial\beta_1^{(u_{t})}}(x,\beta_1)=0.
    \end{align*}
    Hence, we reach the conclusion in equation~\eqref{eq:coefficient_derivative}.
\end{proof}

\section{Experimental Details}
\label{appendix:experimenteal_details}
In this appendix, we provide the details for the numerical experiments on synthetic data, and the experiments with real data on language modeling conducted in Section~\ref{sec:experiment}.

\subsection{Experimental Details for Synthetic Data}
\label{appendix:synthetic_data}
\textbf{Model details.} We now provide the details for the model parameters in model
\begin{align}
    f_{G_{*}}(x) := \sum_{i=1}^{k_*} \softmax\left(\frac{(\boi)^{\top}x}{(\|\boi\| + \tau)\cdot(\|x\| + \tau)}+\bzi\right)\cdot \phi\left((a_i^*)^\top x + b_{i}^{*}\right).
\end{align}
The variance of Gaussian noise is specified as $\sigma^2 = 0.01$. For simplicity, the perturbations for both $\|x\|$ and $\|\boi\|$ are considered identical, denoted by $\tau_1 = \tau_2 = \tau$. The true parameters for the router, $(\boi, \bzi) \in \mathbb{R}^d \times \mathbb{R}$, are drawn independently from an isotropic Gaussian distribution with zero mean and variance $\sigma_r^2 = 0.01/d$ for $1 \le i \le 6$, and otherwise are set to zero. Similarly, the true parameters of the experts, $(a_i^*, b_i^*) \in \mathbb{R}^d \times \mathbb{R}$, are drawn independently of an isotropic Gaussian distribution with zero mean and variance $\sigma_e^2 = 1/d$ for all experts. These parameters remain unchanged for all experiments.

\textbf{Training procedure.} For each sample size $n$, spanning from $10^3$ to $10^5$, we perform 20 experiments. In every experiment, the parameters initialization for the router's and experts' parameters are adjusted to be near the true parameters, minimizing potential instabilities from the optimization process. Subsequently, we execute SGD across $10$ epochs, employing a learning rate of $\eta = 0.1$ to fit a model to the synthetic data. All the numerical experiments are conducted on a MacBook Air equipped with an M1 chip CPU.

\subsection{Experimental Details for Language Modeling Task}
\label{language_modeling_appendix}
\textbf{Datasets.} We use the Enwik8 and Text8 datasets \citep{mahoney2011large} for our character-level language modeling task. The Enwik8 dataset comprises 100 million bytes of unprocessed Wikipedia text, while the Text8 dataset contains 100 million processed Wikipedia characters. We further evaluate the word-level language modeling task on the Wikitext-103 dataset \citep{merity2016pointer}, which is the largest available word-level language modeling benchmark with long-term dependency. It contains 103M training tokens from 28K articles, with an average length of 3.6K tokens per article, which allows us to test the ability of long-term dependency modeling.

\textbf{Metrics.} In the main paper, we employ the Bit per character (BPC) \citep{graves2013generating} metric to assess the performance of character-level language modeling tasks. This metric measures the average number of bits needed to encode each character in the dataset. It is calculated as follows:
\begin{center}
$\text{BPC}(\text{X})=\displaystyle-\frac{1}{T} \sum_{t=1}^T \log_2 \hat{P}_t \left(x_t\right)$
\end{center}
where $T$ is the length of the input string $X$, $\hat{P}_t$ is the approximate distribution and $x_t$ is the character in the input string at location $t$.

Essentially, BPC quantifies the average number of bits required to encode each character in the text using the probability distribution predicted by the model. This concept works as follows: characters with high probability get a short bit sequence, while characters with low probability get a longer sequence. Then, the next character is read from the input and encoded using the bit sequence performance determined from the probability distribution. If the language model is good, the target character will have been predicted with high probability, so the bit sequence will be short. This means that {a lower BPC indicates better compression, which in turn demonstrates the model's superior ability to predict the next character accurately.

For the word-level language modeling task on the Wikitext-103 dataset, we utilize Perplexity (PPL) \citep{jelinek1977perplexity} as our evaluation metric. It represents the exponentiated average negative log-likelihood of a sequence and demonstrates how well the model predicts the next word in a sequence. More specifically, if we have a tokenized sequence $X = (x_0, x_1, ..., x_t)$, the perplexity of $X$ is:
\begin{center}
$\text{PPL}(\text{X}) = \exp\left\{-\frac{1}{t} \sum_{i=1}^{t} \log p_{\theta}(x_i | x_{<i})\right\}$
\end{center}
where $p_{\theta}(x_i | x_{<i})$ is the log-likelihood of the $i^{th}$ token conditioned on the preceding tokens $x_{<i}$ according to our model. A lower perplexity score indicates better generalization performance. 

We follow the implementation of \citep{pham2024competesmoe} to use experts consisting of two linear layers: the first with weights of shape $\text{input\_dim} \times \text{hidden\_dim\_experts}$ and the second with weights of shape $\text{hidden\_dim\_experts} \times \text{input\_dim}$, followed by ReLU activations and Dropout layer with drop rate $p = 0.1$. This architecture ensures the output has a shape of $\text{input\_dim} \times \text{input\_dim}$ while can flexibly reduce parameters compared to a single linear layer with weights of shape $\text{input\_dim} \times \text{input\_dim}$ when the number of hidden dimensions of experts chosen is smaller than dimensions of the input. We consistently apply this architecture with our perturbed and vanilla cosine router across all datasets.

\textbf{Training setup and hyperparameters.} We consider two model configurations: the \textit{small} and \textit{medium} setups. The \textit{small} setup has a total of 15 million parameters with 6 SMoE layers \citep{dryden2022spatial}, and each layer can learn spatial structure in the input domain and routing experts at a fine-grained level to utilize it. Similarly, the \textit{medium} setup consists of 36 million parameters with 8 SMoE layers \citep{dryden2022spatial}. For SMoE layers in both configurations, we employ 16 experts with \textit{top-2} gating in both cosine and perturbed cosine routers. To mitigate the representation collapse issue, we adopt the method proposed by \cite{chi2022on}. Specifically, this approach involves parameterizing the experts using lower-dimensional embeddings $\boldsymbol{e}_{i} \in \mathbb{R}^{d_{e}}$ and applying a linear projection to map token vectors into this compact space, rather than the original high-dimensional hidden space $d$. Furthermore, we follow \cite{chi2022on} to set $d_e =8 $, which is half the number of experts in our implementation for the language modeling task.

During training, we use Adam optimizer \citep{kingma2017adam} with default parameters. We set the number of training steps to 60000 and 80000 for small and medium configurations, respectively. The results are averaged over three runs for fair comparisons.

All language modeling experiments are conducted on NVIDIA A100 GPUs. Training the small configuration of the Text8 and Enwik8 datasets takes 11 hours, whereas Wikitext-103 requires 5 hours. Training Text8 and Enwik8 for medium configurations takes 17 hours, while Wikitext-103 training takes 8 hours.

\subsection{Experimental Details for Domain Generalization}
\label{dg_appendix}
\textbf{Expert types.} 
Similarly, following \citep{li2023sparse}, the experts utilized in the domain generalization experiments are two-layer feedforward networks, featuring a GELU activation and a Dropout layer with a drop rate of $0.1$ positioned between the two linear layers. The dimensions of the two linear layers are $\text{input\_dim} \times \text{hidden\_dim\_experts}$ and $\text{hidden\_dim\_experts} \times \text{input\_dim}$, respectively, where $\text{hidden\_dim\_experts} = \text{input\_dim} \times 4$.

\textbf{Training procedure.} 
 All DG experiments are run on NVIDIA A100 GPUs with 15,000 iterations. The training time on PACS, VLCS, OfficeHome, and TerraIncognita is 2 hours, while the training time for DomainNet is 7 hours.

\section{Additional Experiments}
\label{appendix:additional_experiments}
In this appendix, we carry out additional experiments on the synthetic data to compare the sample efficiency of the perturbed cosine router to that of the cosine router and the linear router \citep{nguyen2024squares} under the settings where the data are generated from the same regression framework. 

\subsection{Cosine Router vs. Perturbed Cosine Router}
\textbf{Experimental setup.} We generate the data by first sampling $X_i \sim \mathrm{Uniform}([-1, 1]^d)$ for $i = 1, \ldots, n$. Then, we generate $Y_i$ according to the following model:
$$Y_{i} = s_{G_{*}}(X_{i}) + \epsilon_{i},$$
where the regression function $s_{G_{*}}(\cdot)$ takes the form of a linear router MoE \citep{nguyen2024squares}:
$$\sum_{i=1}^{k_*} \softmax((\beta^*_{1i})^{\top}x+\beta^*_{0i})\cdot \mathrm{ReLU}\left((a_i^*)^\top x + b_{i}^{*}\right).$$
All other experimental details remain the same as specified in Section~\ref{sec:synthetic_data} and Appendix~\ref{appendix:synthetic_data}.

\textbf{Results.} We calculate the Voronoi losses and report the mean values for each sample size in Figure~\ref{fig:regime1_plot}. Error bars representing two standard deviations are also shown. In Figure~\ref{fig:regime1_plot}, the Voronoi losses associated with the cosine router vanish at the rate of $O(n^{-0.10})$, which is very slow and roughly matches our theoretical results. Meanwhile, those associated with the perturbed cosine router converge to zero at significantly faster rate of $O(n^{-0.45})$. This empirically shows that the perturbed cosine router is more sample efficient than the vanilla cosine router.

\subsection{Linear Router vs. Perturbed Cosine Router}

\textbf{Experimental setup.} From Table~\ref{table:expert_rates}, we see that the perturbed cosine router outperforms the linear router when the experts are of polynomial forms $(a^{\top}x+b)^2$. Thus, we will incorporate polynomial experts in both fitted models. Additionally, we generate the data by first sampling $X_i \sim \mathrm{Uniform}([-1, 1]^d)$ for $i = 1, \ldots, n$. Then, we generate $Y_i$ according to the following model:
$$Y_{i} = s_{G_{*}}(X_{i}) + \epsilon_{i},$$
where the regression function $s_{G_{*}}(\cdot)$ takes the form of a linear router MoE with polynomial experts \citep{nguyen2024squares}:
$$\sum_{i=1}^{k_*} \softmax((\beta^*_{1i})^{\top}x+\beta^*_{0i})\cdot \left((a_i^*)^\top x + b_{i}^{*}\right)^2.$$
All other experimental details remain the same as specified in Section~\ref{sec:synthetic_data} and Appendix~\ref{appendix:synthetic_data}.

\textbf{Results.} We calculate the Voronoi losses and report the mean values for each sample size in Figure~\ref{fig:regime2_plot}. Error bars representing two standard deviations are also shown. In Figure~\ref{fig:regime2_plot}, the Voronoi losses associated with the linear router vanish at a very slow rate of $O(n^{-0.13})$. By contrast, the convergence rate of those associated with the perturbed cosine router are of order $O(n^{-0.47})$, which are substantially faster. Thus, we can claim that the perturbed cosine router is more sample efficient than the linear router both theoretically and empirically.
\begin{figure}[t!]
    \centering
    \begin{subfigure}{.4\textwidth}
        \centering
        \includegraphics[scale = .4]{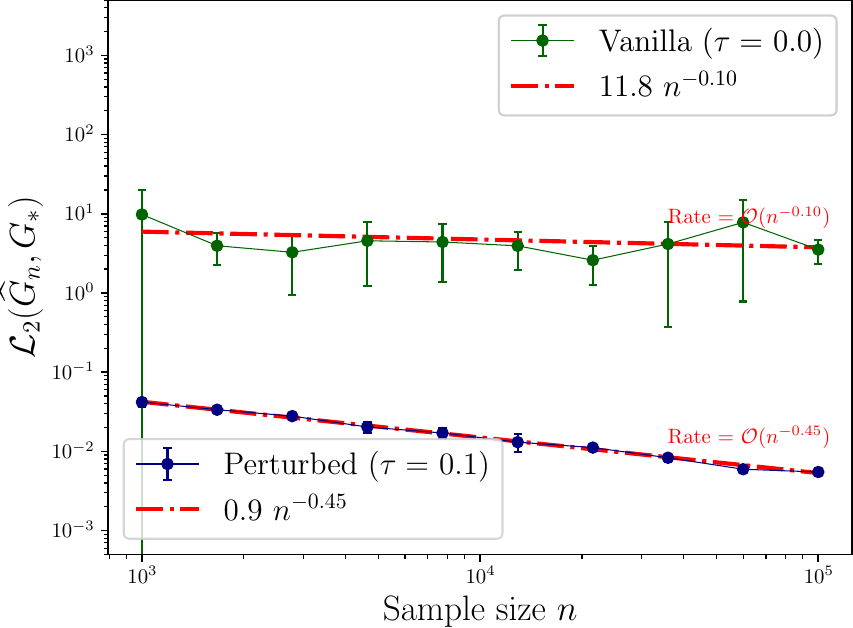}
        \caption{Cosine router vs. Perturbed cosine router}
        \label{fig:regime1_plot}
    \end{subfigure}
    \hspace{0.6cm}
    \begin{subfigure}{.4\textwidth}
        \centering
        \includegraphics[scale = .4]{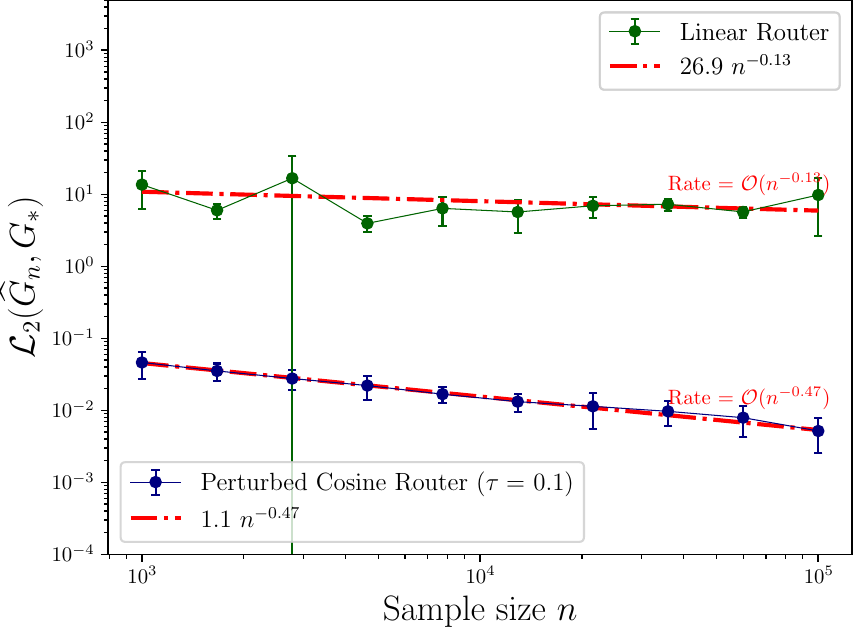}
        \caption{Linear router vs. Perturbed cosine router}	
        \label{fig:regime2_plot}
    \end{subfigure}
    \caption{Log-log scaled plots displaying the empirical convergence rates. Figure~\ref{fig:regime1_plot} depicts the empirical averages of the Voronoi losses when using the cosine router (green line) versus when using the perturbed cosine router (blue line). The red dash-dotted lines illustrate the fitted lines for determining the empirical convergence rates. Similarly, Figure~\ref{fig:regime2_plot} depicts the empirical averages of the Voronoi losses when using the linear router (green line) versus when using the perturbed cosine router (blue line). We use the same data samples for those experiments.}  
    \label{fig:emp_rates_misspecified}
\end{figure}
\end{document}